\newcolumntype{H}{>{\setbox0=\hbox\bgroup}c<{\egroup}@{}}
\newtheorem{theorem}{Theorem}
\newtheorem{lemma}{Lemma}
\newtheorem{proposition}{Proposition}
\def\A{{\bm{A}}}
\def\dA{\overline{\A}}
\def\tgA{\A_{\star}}
\def\tgdA{\dA_{\star}}
\def\uddA{\widehat{\dA}}
\def\Alog{{\bm{A}_{\text{log}}}}
\def\fzdA{\dA_0}
\def\pt{^{\text{PT}}}
\def\ist{^{\text{IST}}}
\def\B{{\bm{B}}}
\def\dB{\overline{\B}}
\def\tgB{\B_{\star}}
\def\tgdB{\dB_{\star}}
\def\uddB{\widehat{\dB}}
\def\fzdB{\dB_0}
\def\C{{\bm{C}}}
\def\tgC{\C_{\star}}
\def\udC{\widehat{\C}}
\def\fzC{\C_0}
\def\D{{\bm{D}}}
\def\wssm{\W_{\text{S6}}}
\def\tgwssm{\W_{\text{S6}}^\star}
\def\dt{\bm{\Delta}}
\def\h{\bm{h}}
\def\x{\bm{x}}
\def\y{\bm{y}}
\def\dK{\overline{\mK}}
\def\W{\bm{W}}
\def\Win{\bm{W}_{\text{in}}}
\def\Winx{\bm{W}_{\text{in},x}}
\def\Winz{\bm{W}_{\text{in},z}}
\def\Wout{\bm{W}_{\text{out}}}
\def\Wb{\bm{W}_{\B}}
\def\Wc{\bm{W}_{\C}}
\def\Wdt{\bm{W}_{\dt}}
\def\Wdtd{\mW_{\dt, \downarrow}}
\def\Wdtu{\mW_{\dt, \uparrow}}
\def\tgW{\mW_{\star}}
\def\tgWb{\Wb^{\star}}
\def\tgWc{\Wc^{\star}}
\def\udW{\widehat{\mW}}
\def\udbeta{\widehat{\vbeta}}
\def\udu{\widehat{\vu}}
\def\vbeta{\bm{\beta}}
\def\tgbeta{\vbeta_{\star}}
\def\conv1d{\texttt{Conv1d}}
\def\ln{\texttt{LayerNorm}}
\def\dimh{H}
\def\tgdimh{H^{\star}}
\def\dimd{D}
\def\diml{L}
\def\tgdiml{L^{\star}}
\def\tgf{f^\star}
\def\fzf{f_0}
\def\udf{\hat{f}}
\def\tgH{H_\star}
\newcommand{\dth}{^{(d)}}
\def\diag{\operatorname{diag}}
\def\sf{\operatorname{S4}}
\def\udsf{\widehat{\sf}}
\def\ss{\operatorname{S6}}
\def\udss{\widehat{\ss}}
\newcommand{\ie}{i.e.}
\def\norm#1{\left\lVert #1 \right\rVert}
\def\sqbrac#1{\left[#1\right]}
\def\set#1{\{#1\}}
\def\relu{\operatorname{ReLU}}
\colorlet{mygold}{yellow!30}
\colorlet{mysilver}{black!10}
\def\softplus{\operatorname{softplus}}
\definecolor{pinegreen}{rgb}{0.0, 0.47, 0.44}
\definecolor{cornellred}{rgb}{0.7, 0.11, 0.11}
\definecolor{cadmiumgreen}{rgb}{0.0, 0.42, 0.24}
\definecolor{royalblue}{rgb}{0.0, 0.14, 0.4}
\definecolor{spirodiscoball}{rgb}{0.06, 0.75, 0.99}
\definecolor{mylightblue}{rgb}{0.85, 0.90, 0.94}
\definecolor{kaistblue}{RGB}{20,135,200}
\definecolor{auburn}{RGB}{166,38,57}
\crefname{appendix}{Sec.}{Secs.}
\Crefname{appendix}{Sec.}{Secs.}
\newcounter{temp_lemma_counter}  
\def\ours{SDT}
\def\lorav{LoRA$^{\star}$}
\newtcolorbox{highlight}[1][]{
    colback=yellow!10,
    colframe=gray!30,
    boxrule=0.5pt,
    arc=2pt,
    leftrule=1pt,
    rightrule=1pt,
    toprule=1pt,
    bottomrule=1pt,
    #1
}
\def\eqref#1{equation~\ref{#1}}
\def\ceil#1{\lceil #1 \rceil}
\def\1{\bm{1}}
\def\vzero{{\bm{0}}}
\def\vone{{\bm{1}}}
\def\vh{{\bm{h}}}
\def\vm{{\bm{m}}}
\def\vp{{\bm{p}}}
\def\vu{{\bm{u}}}
\def\vv{{\bm{v}}}
\def\vx{{\bm{x}}}
\def\vz{{\bm{z}}}
\def\mB{{\bm{B}}}
\def\mH{{\bm{H}}}
\def\mI{{\bm{I}}}
\def\mK{{\bm{K}}}
\def\mM{{\bm{M}}}
\def\mO{{\bm{O}}}
\def\mP{{\bm{P}}}
\def\mQ{{\bm{Q}}}
\def\mU{{\bm{U}}}
\def\mV{{\bm{V}}}
\def\mW{{\bm{W}}}
\def\mX{{\bm{X}}}
\def\mSigma{{\bm{\Sigma}}}
\DeclareMathAlphabet{\mathsfit}{\encodingdefault}{\sfdefault}{m}{sl}
\SetMathAlphabet{\mathsfit}{bold}{\encodingdefault}{\sfdefault}{bx}{n}
\def\gD{{\mathcal{D}}}
\def\gF{{\mathcal{F}}}
\def\gP{{\mathcal{P}}}
\def\gX{{\mathcal{X}}}
\def\sD{{\mathbb{D}}}
\def\sH{{\mathbb{H}}}
\def\sN{{\mathbb{N}}}
\def\sR{{\mathbb{R}}}
\renewcommand{\icmlEqualContribution}{\textsuperscript{*}Equal contribution. Authors listed in alphabetical order.}
\icmltitlerunning{Parameter-Efficient Fine-Tuning of State Space Models}
\renewcommand{\eqref}[1]{(\ref{#1})}
\begin{document}

\twocolumn[
\icmltitle{\includegraphics[height=1.2em]{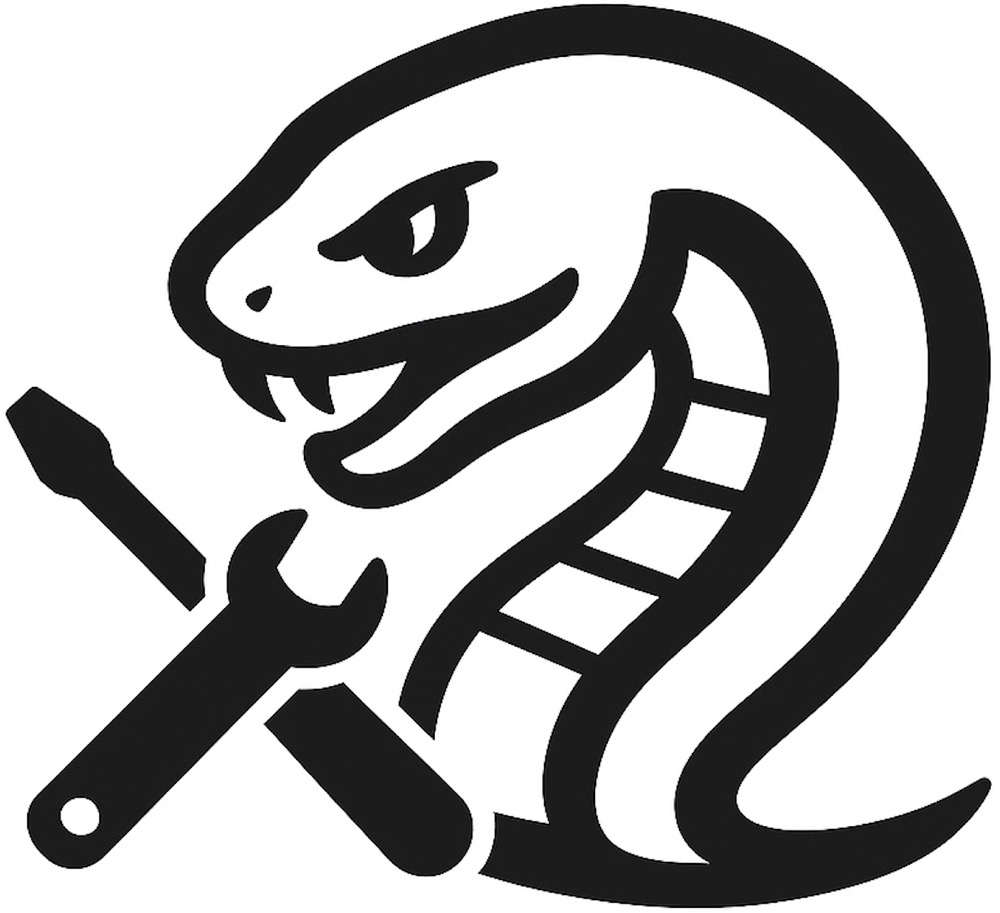}\hspace{.2em}Parameter-Efficient Fine-Tuning of State Space Models}

\icmlsetsymbol{equal}{*}

\begin{icmlauthorlist}
\icmlauthor{Kevin Galim}{equal,furiosa}
\icmlauthor{Wonjun Kang}{equal,furiosa,snu}
\icmlauthor{Yuchen Zeng}{equal,wisc}
\icmlauthor{Hyung Il Koo}{furiosa}
\icmlauthor{Kangwook Lee}{wisc}
\end{icmlauthorlist}

\icmlaffiliation{wisc}{University of Wisconsin-Madison}
\icmlaffiliation{furiosa}{FuriosaAI}
\icmlaffiliation{snu}{Seoul National University}

\icmlcorrespondingauthor{Kangwook Lee}{kangwook.lee@wisc.edu}

\icmlkeywords{Machine Learning, ICML}

\vskip 0.3in
]

\printAffiliationsAndNotice{\icmlEqualContribution} %

\begin{abstract}
Deep State Space Models (SSMs), such as Mamba~\citep{gu2023mamba}, have become powerful tools for language modeling, offering high performance and linear scalability with sequence length. However, the application of parameter-efficient fine-tuning (PEFT) methods to SSM-based models remains largely underexplored. We start by investigating two fundamental questions on existing PEFT methods: (i) How do they perform on SSM-based models? (ii) Which parameters should they target for optimal results? Our analysis shows that LoRA and its variants consistently outperform all other PEFT methods. While LoRA is effective for linear projection matrices, it fails on SSM modules—yet still outperforms other methods applicable to SSMs, indicating their limitations. This underscores the need for a specialized SSM tuning approach. To address this, we propose Sparse Dimension Tuning (\ours{}), a PEFT method tailored for SSM modules. Combining \ours{} for SSMs with LoRA for linear projection matrices, we achieve state-of-the-art performance across extensive experiments.
\end{abstract}

\vspace{-.2in}
\section{Introduction}\label{sec:intro}
In the past few years, Large Language Models (LLMs) such as ChatGPT~\citep{achiam2023gpt,brown2020language} have achieved groundbreaking performance and are now widely used in daily life. 
While many models rely on the Transformer architecture~\citep{vaswani2017attention}, its quadratic time complexity due to the attention mechanism poses challenges for long sequences. 
To address this, alternative architectures such as linear attention~\citep{katharopoulos2020transformers}, RWKV~\citep{peng2023rwkv}, RetNet~\citep{sun2024retnet}, and Mamba~\citep{gu2023mamba} have been developed, offering subquadratic time complexity. 
Efficient attention alternatives often rely on State Space Models (SSMs) or their variants~\citep{gu2021combining,gu2022s4,gu2022s4d,gu2023mamba}, which are akin to linear RNNs, maintaining hidden states of fixed size for sequential processing.
S4~\citep{gu2022s4,gu2022s4d} overcomes RNNs' parallel training limitations by constraining parameter structures, enabling a convolutional form for efficient parallel computation.
S6~\citep{gu2023mamba} improves this with input-dependent parameters, enabling selective focus on relevant information per token.
Building on S6 with linear projection matrices (analogous to the Feed-Forward Networks in Transformer layers), Mamba-I~\citep{gu2023mamba} emerged as a prominent SSM-based model. 
Mamba-I was later extended to Mamba-II~\citep{mamba2}, with both models achieving Transformer-level performance in language modeling and gaining widespread recognition.

As SSMs gain popularity, performing parameter-efficient fine-tuning (PEFT) on pretrained models for downstream tasks is crucial, since full fine-tuning is costly and inefficient.
Numerous PEFT methods~\citep{houlsby2019parameter,hu2021lora,he2021towards,li2021prefix,lester2021power,zaken2021bitfit,liu2021gpt,liu2021p,houlsby2019parameter} have been developed, achieving notable success on Transformer models.
The most popular PEFT methods fall into three categories: 
(i) \textit{input-injection methods}, which add sequences to the model’s main input~\citep{lester2021power} or prepend tokens to the intermediate inputs at each layer~\citep{li2021prefix};
(ii) \textit{architecture-enhancement methods}, which adjust the model architecture.
For example, \citet{houlsby2019parameter} added layers between Transformer layers, while Additional-scan~\citep{mambapeft} expands state dimensions in the SSM module; 
(iii) \textit{weight-tuning  methods}, which directly modify existing model weights.
Notable weight-tuning approaches include BitFit~\citep{zaken2021bitfit}, which updates only bias terms, and LoRA~\citep{hu2021lora}, which modifies weight matrices through low-rank updates, along with its variants such as DoRA~\citep{liu2024dora} and LoRA+~\citep{hayou2024lora+}.
For simplicity, we denote LoRA and its variants as \lorav{}.

\begin{figure*}[t]
\vspace{-.1in}
    \centering
    \includegraphics[width=\linewidth]{figures/sdt_new2_2.pdf}
    \vspace{-.3in}
    \caption{
        \textbf{A visual guide to PEFT methods in SSM-based models: benchmarking and innovation.}
        We compare various existing PEFT approaches on SSM-based models, demonstrating that LoRA applied to linear projection matrices outperforms all other methods. However, extending LoRA to SSM modules fails to yield further improvements. To address this, we propose Sparse Dimension Tuning (\ours{}), which achieves state-of-the-art performance on SSM-based models when combined with LoRA for linear projection matrices.
    }
    \label{fig:ours_demo}
    \vspace{-.1in}
\end{figure*}

Despite the success that existing PEFT methods have achieved in adapting Transformer-based models, their efficacy in adapting SSM-based models remains largely underexplored, leaving many interesting questions open.
\begin{enumerate}[leftmargin=*, itemsep=-1pt, parsep=0pt, topsep=0pt]
    \item \textit{Do existing popular PEFT methods remain effective for SSM-based models?}
    \item \textit{If applicable, what is the optimal way to integrate these methods into SSM-based models, and which parameters should be updated?}
    \item \textit{If not, can we design specialized variants tailored to SSMs that yield superior performance?}
\end{enumerate} 
Our main contributions to address these questions are:
\begin{itemize}[leftmargin=*, parsep=0pt, topsep=0pt]
    \item \textbf{Comprehensive Benchmarking of PEFT Methods.}
    We benchmark six widely used PEFT methods across three categories on diverse tasks, including natural language understanding, generation, and computer vision. 
     We evaluate these methods on both SSM-based models (i.e., Mamba) and a hybrid model (i.e., Jamba~\citep{lieber2024jamba}), which consists of both Transformer layers and Mamba layers.
    Our results show that \lorav{} consistently outperforms all other PEFT methods on both SSM-based and hybrid models. 
    However, its effectiveness is limited to linear projection matrices, as further tuning of SSM modules does not improve performance. 
    Notably, other methods applicable to SSM modules perform worse than \lorav{}, further underscoring the need for a specialized approach to tuning SSM modules.
    \item \textbf{Introducing Sparse Dimension Tuning (\ours{}) for SSM Modules.}
    To develop an effective method for tuning SSM modules, we conduct a theoretical analysis to understand the roles of different parameters. 
    This analysis motivates the \textit{Sparse Dimension Tuning and Pruning} (\ours{}-P) method, which improves efficiency by freezing and pruning certain channel and state dimensions while training only the remaining ones. 
    We establish theoretical guarantees for its effectiveness in SSM-based models when combined with LoRA applied to linear projection matrices. 
    We then simplify \ours{}-P into \textit{Sparse Dimension Tuning} (\ours{}) by omitting explicit pruning, as pruned dimensions can be considered equivalent to training dimensions set to zero. \ours{} selectively updates channels and fine-tunes specific dimensions within them, as illustrated in Fig.~\ref{fig:ours_demo}.
    \item \textbf{Demonstrating Effectiveness of SDT.}
    Through extensive experiments, we demonstrate that integrating \ours{} into SSM-based models, combined with applying \lorav{} to their linear projection matrices, achieves state-of-the-art fine-tuning performance.
\end{itemize}
The roadmap of our paper is illustrated in Fig.~\ref{fig:ours_demo}.
Our code is available at \url{https://github.com/furiosa-ai/ssm-peft}.

\section{Related Works}\label{sec:related_works}

\paragraph{Concurrent Works of PEFT on SSMs.}
Several concurrent studies~\citep{halloran2024mamba,mambapeft,kang2025state} have investigated PEFT methods for SSM-based models.
\citet{halloran2024mamba} studied both in-context learning and parameter-efficient fine-tuning, with an orthogonal focus on analyzing Mamba's stability under mixed-precision training using Lyapunov exponents.
\citet{kang2025state} introduced state-based PEFT methods and proposed State-offset Tuning, solely focusing fine-tuning Mamba's S6 blocks.
\citet{mambapeft} benchmarked multiple PEFT approaches—including established methods and a new method called Additional-scan (which adds a trainable state dimension to the SSM module), plus partial tuning (fine-tuning only a subset of parameters)—and introduced MambaPEFT through PEFT search strategies. 
While \citet{mambapeft} solely focused on Mamba-I, providing an in-depth study of that particular architecture, our work investigates a broader class of SSM-based models including deep S4, Mamba-I, Jamba in the main body, as well as Mamba-II presented in Sec.~\ref{app:bench_exp} and \ref{app:pretrained_exp_ours}, aiming to offer general insights on how to effectively tune SSMs rather than focusing on a single variant.

\vspace{-.1in}
\paragraph{Sparse Tuning.}

Several studies have explored sparse parameter selection in fine-tuning~\citep{song2023sparse} and skill localization~\citep{panigrahi2023task}. 
\citet{song2023sparse} showed that sparse tuning is an effective PEFT method, linking the low intrinsic dimensionality of pre-trained models to the proportion of parameters needing updates. 
They propose selecting optimal fine-tuning parameters based on gradient magnitudes.
We enable sparse tuning for SSM by applying sparsity across entire dimensions (channel and state) rather than specific neurons. 
\citet{panigrahi2023task} focused on identifying neurons responsible for specific downstream tasks by fully fine-tuning the model and computing neuron masks to minimize task loss. While effective for skill localization, this method is computationally expensive and not optimized for parameter-efficient fine-tuning.

In Sec.~\ref{app:related_works}, we provide a more detailed discussion of related work on SSMs and PEFT.

\section{Preliminaries}\label{sec:prelim}
\subsection{State Space Models}
\paragraph{Discrete-Time SSMs.} 
The initial SSM is derived from a specific continuous system that maps a one-dimensional function or signal $x(t) \in \sR$ to $y(t) \in \sR$ via an $\dimh$-dimensional latent state $\h(t) \in \sR^\dimh$, as described in \eqref{eq:rnn_continuous_ssm}.
In this formulation,
input transition vector $\B \in \sR^{\dimh \times 1}$ indicates the input's impact on the state of the system,
state matrix $\A \in \sR^{\dimh \times \dimh}$ characterizes the system's internal state dynamics,
and the output mapping vector $\C \in \sR^{1\times \dimh}$ relates the state to the output $y(t)$.\footnote{Note that $\B, \C$ are vectors; we use bold capitals for consistency with prior work~\citep{gu2022s4,gu2023mamba}.}
\begin{minipage}{.49\linewidth}
\begin{equation}\label{eq:rnn_continuous_ssm}
 \resizebox{.8\linewidth}{!}{$
    \begin{aligned}
    \h'(t) &= \A \vh(t) +  \B x(t) \\
    y(t) &= \C\h(t)
\end{aligned}
$}
\end{equation}
\end{minipage}\hfill 
\begin{minipage}{.49\linewidth}
\begin{equation}\label{eq:rnn_discrete_ssm}
 \resizebox{.75\linewidth}{!}{$
   \begin{aligned}
    \h_t &= \dA \h_{t-1} + \dB x_{t}, \\
    y_t &= \C\h_t
\end{aligned} 
$}
\end{equation}
\end{minipage}\hfill
\begin{minipage}{\linewidth}
\begin{equation}
 \resizebox{.6\linewidth}{!}{$
\begin{gathered}\label{eq:cnn_discrete_ssm}
    \dK = (\C\dB, \C\dA^{\,}\dB, \ldots, \C {\dA}^{t-1}\dB), \\
    (y_1, \ldots, y_t) = (x_1, \ldots, x_t) * \dK
\end{gathered}
$}
\end{equation}
\end{minipage}

To handle discrete inputs, the continuous parameters $(\A, \B)$ are discretized into $(\dA, \dB)$ using a learnable step size $\Delta \in \sR$. 
A common discretization rule, the zero-order hold, defines $\dA =\exp (\Delta \A), \dB =(\Delta \A)^{-1}(\exp (\Delta \A)-\mI) \cdot \Delta \B$.
The discrete-time SSM, given in \eqref{eq:rnn_discrete_ssm}, enables efficient inference via long convolution described in \eqref{eq:cnn_discrete_ssm}.
For multi-channel inputs $\x, \y \in \sR^{\dimd}$, separate SSMs are used per channel, with a superscript $(d)$ indicating channel-specific parameters when needed.

\vspace{-.1in}
\paragraph{Structured State Space Sequence Model (S4).}
S4, introduced by \citet{gu2022s4}, is an early application of SSMs in deep learning, featuring a \textit{diagonal} state matrix $\A$.
To introduce non-linearity and cross-channel mixing, S4 integrates a position-wise linear layer, an activation function, and a residual connection from input to output.
Let $\odot$ represent the element-wise product, and $\operatorname{S4}(\cdot)$ denote the S4 mechanism, where each channel's output follows \eqref{eq:cnn_discrete_ssm} with its convolutional kernel $\dK\dth$.
To facilitate theoretical analysis, certain subtle details—such as activation functions—may differ slightly from those in previous studies~\citep{gu2022s4,gu2022s4d}.
We define the \textit{deep S4 layer} as:
\begin{equation}\label{eq:deep_s4}
    \y_t = \relu(\mW \cdot \operatorname{S4}_t(\x_1, \ldots, \x_t) + \vbeta +  \vu \odot \x_t),
\end{equation}
where $\mW\in \sR^{\dimd \times \dimd}$ and $\vbeta \in \sR^{\dimd}$ represent the linear projection matrix and bias, respectively, and $\vu \in \sR^{\dimd}$ is the coefficient of the residual connection. 
Trainable parameters include SSM parameters $(\A\dth, \B\dth, \C\dth, \Delta^{(d)})$ across $D$ channels with $\A\dth$ being diagonal, 
as well as linear layer $(\mW, \vbeta)$ and residual connection $\vu$.

\vspace{-.1in}
\paragraph{Selective State Space Models (S6).} 
All SSMs mentioned above exhibit linear time invariance (LTI), meaning their dynamics remain constant over time. 
A key limitation of LTI SSMs is their fixed dynamics, hindering selective context extraction and input-dependent state transitions. 
S6~\citep{gu2023mamba} addresses this by making parameters input-dependent.
At each time step $t$, given  input $\x_t \in \sR^{\dimd}$, S6 introduces input-dependent step sizes $\dt_t = (\Delta^{(1)}_t, \ldots, \Delta^{(D)}_t)^\top \in \sR^D$, input transition vectors $\B_t \in \sR^{\dimh \times 1} $ and output mapping vectors $\C_t \in \sR^{1\times \dimh}$ via linear projection:
\begin{equation}
    \resizebox{\linewidth}{!}{$
    \dt_t = \softplus (\Wdt \x_t + \vbeta_{\dt}),\quad \B_t = \mW_{\B} \x_t,\quad \C_t = \mW_{\C} \x_t,
    $}
\end{equation}
where the diagonal state matrices $\A^{(1)}, \ldots, \A^{(D)}$ remain input-independent.
The weight $\Wdt \in \sR^{\dimd \times \dimd}$ is factorized as $\Wdt = \Wdtd\Wdtu$, with $\Wdtd \in \sR^{D\times R}$, $\Wdtu \in \sR^{R\times D}$ to reduce computation~\citep{pufferfish,wang2023cuttlefish}.
Trainable parameters in S6 include $\A\dth$ across $D$ channels, $\Wdtu, \Wdtd$ and $\vbeta_{\dt}$ for computing $\dt_t$, and $\mW_{\B}, \mW_{\C} \in \sR^{\dimh \times \dimd}$ for computing $\B_t, \C_t$. 
Discretization follows: $\dA_t\dth = \exp(\Delta_t\dth \A\dth ), \dB_t\dth = \Delta_t\dth \mB_t$.
Unlike S4, where $\B\dth$ varies per channel, S6's variation on $\dB\dth$ stems from the scalar $\Delta_t\dth$.
Additionally, S6 shares $\C_t$ for all channels at each time step $t$, while S4 assigns a distinct $\C\dth$ to each channel.

\vspace{-.1in}
\paragraph{Mamba \& Jamba.}
Similar to the Transformer block, which consists of attention and linear layers, the Mamba-I block proposed by \citet{gu2023mamba} features an S6 module, a point-wise 1D causal convolution layer ($\conv1d$) for token mixing, linear layers — including input ($\Win$) and output ($\Wout$) projection layers and a gated MLP. 
Mamba-II~\citep{mamba2} further simplifies the state matrix $\A$ to be a scalar. 
Building on Mamba-I, Jamba~\citep{lieber2024jamba} introduces a hybrid architecture that integrates both Transformer blocks and Mamba blocks, leveraging the strengths of both to enhance performance.
This paper focuses on Mamba-I (referred as Mamba in this paper) and Jamba, deferring Mamba-II discussions to the appendix. 

\subsection{Parameter-Efficient Fine-Tuning}
\paragraph{Input-Injection Methods.}
Input-injection methods, such as prompt tuning~\citep{lester2021power} and prefix-tuning~\citep{li2021prefix}, enhance the model’s input by injecting specialized sequences. 
Prompt tuning prepends a set of trainable embeddings $\mP \in \sR^{D \times M}$ to the original input $\mX \in \sR^{D \times N}$, forming the concatenated sequence $\widetilde{\mX} = [\mP; \mX]$. 
Prefix-tuning~\citep{li2021prefix} instead injects learnable vectors into the key and value matrices of each attention layer. For a Transformer layer, it prepends prefix states $\mP^K, \mP^V \in \sR^{L \times D}$ to the original projections: \begin{equation} \widetilde{\mK} = [\mP^K; \mK], \quad \widetilde{\mV} = [\mP^V; \mV], \end{equation} where $\mK$ and $\mV$ are the key and value matrices derived from the input.
We note that prefix-tuning is functionally equivalent to prepending soft tokens to the input at each attention layer and discarding the corresponding outputs associated with the prepended tokens. This view simplifies adaptation to SSMs, which lack explicit key and query projections. 
\citet{mambapeft} also adopt this implementation, though they refer to it as affix-tuning.

\paragraph{Architecture-Enhancement Methods.}
These methods modify the model’s internal structure to introduce tunable components.
In the context of SSMs, one example is Additional-scan~\citep{mambapeft}, which expands the state dimensions within the SSM block and fine-tunes only the added parameters, leaving the original weights untouched.

\paragraph{Weight-Tuning Methods.} 
Notable weight-tuning methods include LoRA~\citep{hu2021lora} and its variants~\citep{liu2024dora,hayou2024lora+}, as well as BitFit~\citep{zaken2021bitfit}.
LoRA~\citep{hu2021lora} fine-tunes a model by introducing low-rank updates to its weight matrices. 
Given a weight matrix $\mW_0 \in \sR^{D \times D}$, LoRA updates it as follows:
\begin{equation}
    \mW = \mW_0 +  \mW_{\downarrow} \mW_{\uparrow},
\end{equation}
with $\mW_{\downarrow} \in \sR^{D \times R}$, $\mW_{\uparrow} \in \sR^{R \times D}$, and $R \ll D$ being the rank. 
Only $\mW_{\downarrow}$ and $\mW_{\uparrow}$ are trained, reducing the number of trainable parameters from $D^2$ to $2RD$.  
Weight-\textit{D}ecomposed L\textit{o}w-\textit{R}ank \textit{A}daptation (DoRA)~\citep{liu2024dora} improves upon LoRA by decomposing the weight matrix into two components: magnitude ($\vm \in \sR^D$) and direction ($\mW_{\downarrow} \mW_{\uparrow}$), leading to the formulation  
\begin{equation}
    \mW = \vm \frac{\mW_0 + \mW_{\downarrow} \mW_{\uparrow}}{\lVert \mW_0 + \mW_{\downarrow} \mW_{\uparrow} \rVert}.
\end{equation}
This additional parameter $\vm$ enhances both training capacity and stability.  
LoRA+~\citep{hayou2024lora+} modifies LoRA by applying different learning rates to $\mW_{\downarrow}$ and $\mW_{\uparrow}$, enabling more effective feature learning.
In contrast, BitFit~\citep{zaken2021bitfit} updates only the bias terms, offering a lightweight and highly parameter-efficient alternative.
\section{Benchmarking PEFT Methods on SSM-based Models}\label{sec:benchmark}
In this section, we examine the effectiveness of popular PEFT methods when applied naively to SSM-based models, specifically Mamba and Jamba. 

\subsection{Experiment Setup}
We evaluate PEFT methods across three categories: input-injection, architecture-enhancement, and weight-tuning.
For input-injection methods, we use prompt tuning~\citep{lester2021power} and prefix-tuning~\citep{li2021prefix}, where prefix-tuning employs an overparameterized MLP for stable optimization.
For architecture-enhancement methods, we include Additional-scan~\citep{mambapeft}, which introduces and fine-tunes newly added state dimensions in SSM modules.
For weight-tuning, we consider BitFit~\citep{zaken2021bitfit} and \lorav{}, including LoRA~\citep{hu2021lora} and DoRA~\citep{liu2024dora}, while LoRA$+$~\citep{hayou2024lora+} is deferred to Sec.~\ref{app:pretrained_exp_ours}. 
BitFit fine-tunes the bias terms of $\conv1d$ and $\Wdtu$. %

We use six datasets spanning different domains: GLUE for natural language understanding~\citep{wang2018glue}, DART for RDF-to-text generation~\citep{nan2021dart}, SAMSum~\citep{gliwa2019samsum} for summarization, Spider for text-to-SQL generation~\citep{yu2018spider}, and two vision datasets—CIFAR-10~\citep{krizhevsky2009learning} and CelebA~\citep{liu2015deep}, with the vision datasets processed by cropping, resizing, and flattening pixel values into space-separated numerical sentences.
Details are in Sec.~\ref{app:datasets}.
Prefix-tuning requires significantly more parameters than other PEFT methods due to its per-layer MLP for projecting fixed sequences into soft tokens. 
For all methods—except prefix-tuning and the special case of LoRA and DoRA when applied to both linear projection layers—we limit trainable parameters to below 1\% for Mamba and below 0.15\% for Jamba.
For Jamba, all PEFT methods are applied to Mamba layers, while Transformer layers remain frozen to isolate performance effects. 
See more details in Sec.~\ref{app:exp_setup}.

\subsection{Results}
Table~\ref{tab:benchmark_all} summarizes the benchmarking results.
Detailed results for GLUE and Spider subtasks appear in Sec.~\ref{app:bench_exp}.
We analyze the results from three key perspectives below.

\begin{table*}[ht]
    \centering
    \resizebox{\linewidth}{!}{
    \begin{tabular}{llcccccccccc}
    \toprule
        \multirow{2.5}{*}{\textbf{Model}} & \multirow{2.5}{*}{\textbf{Method}} & \multirow{2.5}{*}{\begin{tabular}{c}
             \textbf{Major Target} \\
             \textbf{Module}
        \end{tabular}} & \textbf{GLUE} & \multicolumn{2}{c}{\textbf{DART}}& \multicolumn{3}{c}{\textbf{SAMSum} } & \textbf{Spider} & \textbf{CIFAR-10} & \textbf{CelebA} \\ \cmidrule{4-12}
        & &  & Avg. Score  & METEOR & BLEU & R1 & R2 & RL & Acc. & Acc. & Acc. \\ \midrule
        \multirow{13}{*}{\textbf{Mamba}} & Prompt Tuning & Other & 63.8 & 66.2 & 39.8 & 50.1 & 25.6 & 41.6 & 43.6 & 30.4 & 82.5 \\ \cmidrule{2-12}
        & Prefix-Tuning & SSM & 68.6 & 66.6 & 42.5 & 50.6 & 26.5 & 42.1 & 39.7 & 41.0 & 86.5 \\ \cmidrule{2-12}
        & BitFit & Both & 76.8 & 67.0 & 43.7 & 50.3 & 25.7 & 41.9 & 48.4 & 44.4 & 86.9 \\ \cmidrule{2-12}
        & \multirow{3}{*}{LoRA} & SSM & 76.9 & 68.8 & 48.0 & 50.4 & 26.0 & 41.8 & 55.0 & 52.3 & 87.0 \\ 
         & & LinProj   & \textbf{81.2} & \textbf{70.9} & 49.5 & 50.9 & \underline{27.0} & 42.3 & 57.5 & \textbf{61.0} & 87.0 \\ 
         & & Both & 80.3 & 70.2 & \textbf{52.2} & 50.7 & 26.8 & 42.4 & 57.0 & \underline{58.4} & \textbf{89.8} \\ 
        \cmidrule{2-12}
        & \multirow{3}{*}{DoRA} & SSM & 77.9 & 68.3 & 47.3 & 48.1 & 24.2 & 39.6 & 55.3 & 44.5 & 87.1  \\
        & & LinProj
        & \underline{81.1} & 70.7 & \textbf{51.6} & \underline{51.0} & 26.9 & \underline{42.8} & \textbf{60.7} & 57.6 & 86.7 \\
        & & Both & 80.8 & \underline{70.8} & 51.4 & \textbf{51.3} & \textbf{27.2} & \textbf{43.0} & \underline{58.1} & 58.2 &  \textbf{89.8}\\
        \cmidrule{2-12}
        & Additional-Scan & SSM & 62.4 & 60.6 & 15.8 & 37.6 & 17.5 & 30.9 & 26.9 & 32.2 & 86.0 \\
        \cmidrule{2-12}        
        & Full Fine-Tuning & Both & 80.5 & 71.0 & 51.8 & 51.2 & 27.3 & 42.9 & 66.2 & 60.0 & 89.4 \\ \midrule 
        \multirow{8}{*}{\textbf{Jamba}} & Prompt Tuning & Other & 73.3 & 54.1 & 6.3 & \underline{54.7} & 31.8 & 46.8 & \textbf{74.9} & 40.9 & 85.6 \\ \cmidrule{2-12}        
        & Prefix-Tuning & SSM & 56.9 & 59.6 & 14.4 & 11.5 & 1.8 & 10.4 & 0.3 & 29.9 & 82.2 \\ \cmidrule{2-12}
        & BitFit & Other & \textbf{75.2} & 59.2 & 14.8 & \underline{54.7} & 31.9 & \underline{47.0} & \underline{73.7} & 45.6 & 86.3 \\ \cmidrule{2-12}        
        & LoRA & LinProj 
        & \underline{73.9} & \textbf{68.9} & \textbf{37.8} & 54.6 & \textbf{32.3} & 46.8 & 69.3 & \textbf{59.7} & \textbf{89.0} \\ \cmidrule{2-12}        
        & DoRA & LinProj
        & 71.4 & \underline{68.1} & \underline{28.8} & \textbf{55.2} & \underline{32.2} & \textbf{47.3} & 70.9 & \underline{58.6} & \textbf{89.0} \\ \cmidrule{2-12}        
        \cmidrule{2-12}
        & Additional-Scan & SSM & 68.3 & 63.3 & 20.1 & 53.4 & 30.5 & 45.6 & 69.3 & 50.6 & 0.0 \\
        
        \bottomrule
    \end{tabular}}
\caption{
\textbf{Benchmarking popular Parameter-Efficient Fine-Tuning (PEFT) methods on Mamba~\citep{gu2023mamba} and Jamba~\citep{lieber2024jamba} across six real-world datasets.} R1/R2/RL stand for ROUGE-1/2/L.
We evaluate PEFT applied to different target modules: SSM module only, linear projection matrices (LinProj) only, both, or other components such as embedding layer.
For both Mamba and Jamba, all methods use fewer than 1\% and 0.15\% of parameters, respectively, except when the target module for LoRA or DoRA is set to ``Both'' or when prefix-tuning is applied.
Comprehensive hyperparameter tuning was performed for all methods. Bold values indicate the best performance for each model (Mamba and Jamba) separately, while underlined values denote the second-best performance for each task, excluding full fine-tuning. 
Key findings include: (i) among PEFT methods applied to SSM modules, \lorav{} outperforms others, (ii) for all PEFT methods, \lorav{} achieves the best performance, (iii) applying \lorav{} to linear projections yields results comparable to applying it to both linear projections and SSM modules, while outperforming its application solely to SSM modules, and (iv) input-injection methods (i.e., prompt tuning and prefix-tuning), are generally ineffective.
}
    \label{tab:benchmark_all}
    \vspace{-.1in}
\end{table*}

\paragraph{Superiority of \lorav{}.}
The most prominent finding is that \lorav{} consistently outperforms other PEFT methods (e.g., prompt tuning, prefix-tuning, BitFit, additional-scan), regardless of the target module.

\begin{highlight}[boxsep=1mm, left=1mm, right=1mm, top=1mm, bottom=1mm]
\paragraph{Finding:}
\textit{
Across all target modules, \lorav{} surpasses existing PEFT methods in performance.
}
\end{highlight}

Even when restricted to SSM modules, \lorav{} still outperforms all other PEFT baselines applied to the same target.

\paragraph{Limitations of Input-Injection Methods.}
\label{sec:prefix_tuning_limit}
Input-injection methods like prefix-tuning are ineffective for SSM-based models (Table~\ref{tab:benchmark_all}), as their expressiveness reduces to tuning only the initial hidden state (Proposition~\ref{prop:prefix}). 
Formal statement, proof and empirical verification are in \cref{app:prompt}.

\paragraph{Optimal Application of \lorav{} in SSM-based Models.}
Table~\ref{tab:benchmark_all} shows that \lorav{} outperforms all other PEFT methods in most scenarios. 
From our results, we explore the optimal layers for applying \lorav{} in SSM-based models: the SSM module, the linear projection matrices, or a combination of both.
Note that S6 in Mamba and Jamba includes fine-grained parameters like \texttt{x\_proj} ($\Wb, \Wc, \Wdtd$) and \texttt{dt\_proj} ($\Wdtu$), which were already explored by \citet{mambapeft} on Mamba. 
We defer a deeper discussion of them to Sec.~\ref{sec:app_ffn_ssm} and focus on the key question here: Is applying \lorav{} to SSM modules necessary for performance gains? 
By narrowing our scope, we aim to clarify \lorav{}’s impact across major components (e.g., SSM modules, linear projection matrices) rather than all specific parameters.

We evaluate \lorav{}'s performance on linear projections using $\Win$, $\Wout$, and both combined.
Since the performance of different combinations of linear projections is consistent across datasets (see Sec.~\ref{sec:app_ffn_ssm}.), we only report the results for \lorav{} applied to $\Win$ in \cref{tab:benchmark_all}.
For SSM modules, we apply \lorav{} to weight matrices, including those for the input-dependent step size $\dt$. 
For state transition matrices $\A$, we treat their diagonal structures as vectors, concatenate them across channels to form a matrix, and apply \lorav{}. Table~\ref{tab:benchmark_all} summarizes results for the best-performing configurations (see \cref{app:bench_exp} for full results).
Based on these results, we 
present the
following finding:
\begin{highlight}[boxsep=1mm, left=1mm, right=1mm, top=1mm, bottom=1mm]
    \paragraph{Finding:} 
    \textit{   
    For \lorav{}: Tuning on SSMs is less effective than tuning linear projection matrices, with the latter performing comparably to tuning both. 
    }
\end{highlight}
Detailed experiments, including \lorav{} on different linear projection matrices and additional evaluations of Mamba-II, are presented in Sec.~\ref{app:bench_exp}.
These experiments reinforce the finding that \lorav{} is highly effective for linear projections but less suitable for SSM modules.

To further elucidate this concept, we present the following lemma, which examines a simplified model architecture consisting of S6 with two linear input projection matrices at each layer. 
We demonstrate that fine-tuning one input projection matrix encompasses the expressivity of fine-tuning the parameters $\Wb$, $\Wc$, and $\Wdtu$.
Consider an S6 model with two input projection matrices $\mW_{\text{in,1}}, \mW_{\text{in,2}} \in \sR^{D \times D}$: the first affects how internal parameters depend on the input, while the second governs the input passed directly into the S6 module.
Under this setup, the output $y\dth_N$ can be expressed as: 
\begin{equation}
\resizebox{\linewidth}{!}{$
\begin{aligned}
    & y\dth_N = \underbrace{\overbrace{\C (\mW_{\text{in,1}} \vx_N)^\top}^{\text{Input-dependent } \C_N} \sum_{n=1}^N  \left( \prod_{m=1}^n \overbrace{\dA(\mW_{\text{in,1}}\vx_m)}^{\text{Input-dependent } \dA_m} \right) \overbrace{\dB_n (\mW_{\text{in,1}}\vx_n)}^{\text{Input-dependent } \dB_n}}_{\text{Parameters depending on input after projection }\mW_{\text{in,1}} } \underbrace{(\mW_{\text{in,2}} \vx_n)\dth}_{\text{Input after projection} \mW_{\text{in,2}} }.
\end{aligned}
$}
\end{equation}
When $\mW_{\text{in,1}} = \mW_{\text{in,2}}$, this reduces to a standard architecture with a single input projection followed by an S6 layer.
For simplicity, we let $\vbeta_{\dt} = \vzero.$
Then the full model is parameterized by $(\set{\A\dth}_{d=1}^D, \Wb, \Wc, \Wdtu, \Wdtd, \mW_{\text{in,1}}, \mW_{\text{in,2}})$.
Assume none of the parameters are zero and $D > 2H + R$, where $R$ is the rank of $\Wdtd \Wdtu$.
\begin{lemma}[Expressivity of Fine-Tuning Projection Matrices]\label{lemma:s6_projection_informal}
Consider two models with the architecture described above. Let: 
\begin{itemize}[leftmargin=*, parsep=0pt, topsep=0pt]
    \item A target model $\tgf$ parameterized by
    $(\set{\A^{\star(d)}}_{d=1}^D$, $\tgWb$, $\tgWc$, $\Wdtu^\star$, $\Wdtd^\star$, $\mW_{\text{in,1}}^\star$, $\mW_{\text{in,2}}^\star)$;
    \item A frozen model $\fzf$ parameterized by
    $(\set{\A^{\star(d)}}_{d=1}^D$, $\Wb$, $\Wc$, $\Wdtu$, $\Wdtd^\star$, $\mW_{\text{in,1}}$, $\mW_{\text{in,2}}^\star)$.
\end{itemize}
The two models share $\set{\A^{\star(d)}}_{d=1}^D$, $\Wdtd^\star$, and $\mW_{\text{in,2}}^\star$, while differing in $\Wb$, $\Wc$, $\Wdtu$, and $\mW_{\text{in,1}}$.
Then, there exists an updated projection matrix $\widehat{\mW}_{\text{in,1}}$ such that the frozen model matches the output of the target model without updating $\Wb$, $\Wc$, $\Wdtu$ for any input sequence, i.e., 
\begin{equation}
\resizebox{.95\linewidth}{!}{$
\begin{aligned}
    & f(\cdot; \set{\A^{\star (d) }}_{d=1}^D, \Wb, \Wc, \Wdtu, \Wdtd^\star, \widehat{\mW_{\text{in,1}}}, \mW_{\text{in,2}}^\star) \\
    & = \tgf (\cdot; \set{\A^{\star (d) }}_{d=1}^D, \tgWb, \tgWc, \Wdtu^\star, \Wdtd^\star, \mW_{\text{in,1}}^\star, \mW_{\text{in,2}}^\star ).
\end{aligned}
$} 
\end{equation}
\end{lemma}

We expand on this discussion in Sec.~\ref{sec:app_ffn_ssm}, where we present both theoretical proofs and empirical validation.
The lemma shows that tuning the linear projection matrix can match the expressive power of certain SSM parameters (i.e., $\Wb$, $\Wc$, and $\Wdtu$), aligning with our empirical observation that tuning only the linear projections already performs well.
However, a key limitation of tuning only the linear projection matrices remains: such tuning lacks the expressive power to affect the state matrix $\A$, which is an essential parameter for sequence-to-sequence operations. 
Therefore, tuning the SSM modules is still necessary.
Existing PEFT methods fall short in effectively tuning SSM modules: (i) alternative methods underperform compared to \lorav{} on SSM modules, and (ii) applying \lorav{} to SSM modules does not improve performance beyond applying it to linear projections alone.
These findings highlight a gap in current PEFT techniques for SSM modules, leading to an importantca question:
\textit{Is there a more effective strategy for fine-tuning SSM modules?}

\vspace{-.1in}
\section{Sparse Dimension Tuning}\label{sec:ours}
This section aims to develop an algorithm for tuning SSM modules.
In doing so, we start by first analyzing the roles of different parameters, as outlined in Lemma~\ref{lemma:discretization}. 
This analysis motivates us to classify channels and state dimensions into three categories: (i) zero, (ii) trainable, and (iii) frozen, leading to the development of the \textit{Sparse Dimension Tuning and Pruning} (\ours{}-P) method.
We then establish theoretical guarantees for applying \ours{}-P to SSM modules and LoRA to linear projection matrices (Theorem~\ref{thm:s4_mixture}). Finally, we simplify \ours{}-P into \textit{Sparse Dimension Tuning} (\ours{}) by omitting pruning, as pruned parameters can be effectively considered as being trained to zero. This simplified version serves as the primary method used in our experiments.

\vspace{-.1in}
\subsection{Understanding Key Parameters in S4 Modules}
\paragraph{Problem Setting.} 
Inspired by the work by \citet{zeng2023expressive}, we analyze the expressive power of S4 parameters using a similar framework. 
We assume a well-performing target model and a frozen model (pretrained or random) and aim to update the frozen model efficiently to match the target. 
Following \citet{zeng2023expressive}, we assume the frozen model has a capacity at least 
as large as
the target model.
This assumption ensures analytical tractability and is reasonable, as frozen models are typically overparameterized in practice.
Both models are S4 with hidden dimensions $\tgH$ (target) and $H\geq \tgH$ (frozen). 
Assuming all hidden dimensions are active (i.e., all parameters are non-zero), we define their dynamics using discretized parameters $(\dA, \dB, \C)$:
\begin{align}
\text{(Target model)}\quad & \tgf(\vx)_n = \sum\nolimits_{m=1}^n \tgC \tgdA^{m-n} \tgdB x_m, \\ %
\text{(Frozen model)}\quad &  \fzf(\vx)_n = \sum\nolimits_{m=1}^n \fzC \fzdA^{m-n} \fzdB x_m, 
\end{align}
where $\diag(\tgdA), \tgdB, \tgC \in \sR^{\tgH}$,  
$\diag(\fzdA), \fzdB, \fzC \in \sR^{H}$.
This formulation shows that the S4 module remains unchanged even if the state dimensions are permuted. 

\paragraph{Parameter Efficiency Analysis on S4.} 
We analyze the parameter efficiency of updating a frozen S4 module after discretizing its parameters $(\fzdA, \fzdB, \fzC)$ to match the functionality of a target S4 module with discretized parameters $(\tgdA, \tgdB, \tgC)$. 
Based on this setup, we present the following result characterizing the minimum number of parameters that must be tuned for functional equivalence. 
\begin{lemma}[Minimal Parameter Adjustment for S4 Fine-Tuning]
\label{lemma:discretization}
    Assume all hidden dimensions of the target model $f^\star$ are non-zero, i.e., all elements of $\diag(\tgdA) \odot \tgdB \odot \tgC$ are non-zero. 
    To update frozen model $\fzf$ such that it becomes functionally equivalent to the target model $\tgf$, the minimum number of tunable parameters is:
     \begin{equation}\label{eq:param_eff}
        \resizebox{0.9\linewidth}{!}{$
    \begin{aligned}
        &
         \min_{\dA, \dB, \C} \overbrace{
            \norm{\sqbrac{\diag(\dA) \odot \dB \odot \C^\top}_{(\tgH + 1): H}}_0 
        }^{\text{eliminating redundant dimensions}}\\
        & + 
        \overbrace{
                \norm{\sqbrac{\dA}_{1:\tgH, 1:\tgH} - \tgdA}_0 
            +
                \norm{\sqbrac{\dB \odot \C^\top}_{1:\tgH} - \tgdB \odot \tgC^\top }_0
        }^{\text{ aligning remaining dimensions with target model}}, \\
        \end{aligned}        
         $} 
    \end{equation}
    \textnormal{subject to}
    \begin{equation}
     \resizebox{0.97\linewidth}{!}{$
    \begin{aligned}
         (\dA, \dB, \C) \in \set{(\mP^\top \fzdA \mP, \mP^\top \fzdB, \fzC \mP ): \mP \text{ is a permutation matrix}}. 
    \end{aligned} $} 
    \end{equation}
\end{lemma}
Note that the search space consists of all possible S4 parameterizations that can be obtained by permuting the hidden dimensions of the frozen model.
Proofs and further details are provided in Sec.~\ref{app:understanding_s4}. This result highlights three distinct roles of the state dimensions.
First, any dimensions that do not contribute to the target function (represented by the first term in \eqref{eq:param_eff}) are effectively zero and can be pruned. These correspond to state dimensions larger than those of the target model after permutation, indicating that redundant information can be directly removed to eliminate its impact.
Second, among the remaining dimensions, alignment is necessary for those that do not already match the target. 
The state matrix $\A$ plays a crucial role in sequence modeling by capturing dependencies between tokens at different positions. 
To achieve functional equivalence (as represented by the second term in \eqref{eq:param_eff}), $\A$ must be aligned. Notably, dimensions that are already aligned with the target require no updates.
These two insights motivate our Sparse Dimension Tuning and Pruning (\ours{}-P) method, which classifies hidden dimensions into three categories: (i) zero, (ii) frozen (already aligned), and (iii) trainable.
Finally, the third term in \eqref{eq:param_eff} indicates that the expressive power of $\dB$ and $\C$ is essentially equivalent, meaning that tuning either one is equivalent to updating both.

\vspace{-.1in}
\subsection{Sparse Dimension Tuning and Pruning (\ours{}-P)}
Building on Lemma~\ref{lemma:discretization}, we introduce \ours{}-P, the precursor to Sparse Dimension Tuning (\ours{}). \ours{}-P updates parameters selectively based on the role of each state dimension.
In the multi-channel case, we first categorize the channel dimensions into three groups: pruned, frozen, and trainable. 
Then, the state dimensions of each trainable channel are also categorized as pruned, frozen, or trainable.
This hierarchical selection ensures that updates are applied only when necessary, while pruned dimensions are discarded and frozen dimensions remain unchanged. 

\vspace{-.1in}
\paragraph{Dimension Selection Algorithm.}
To enable this structured tuning process, we first introduce our dimension selection algorithm.
The algorithm starts with a warmup epoch, where the SSM modules are updated using a subset of the dataset for one epoch. 
After this warmup, we classify channel dimensions based on the magnitude of the state matrix $\A$: dimensions with small magnitude are pruned (set to zero), those with significant changes are marked as trainable, and the rest remain frozen. 
Next, we apply the same classification to state dimensions, but only within the trainable channels.
The detailed pseudo-code is in Sec.~\ref{app:ours}.

\vspace{-.1in}
\paragraph{Parameter Update Scheme.}
Once the channel and state dimensions are selected, we determine how to update the parameters.
\textbf{(S4)} For S4, \citet{gu2022s4d} showed that tuning $\C$ alone is as effective as tuning both $\dB$ and $\C$. 
Therefore, we always freeze $\dB$ and update only $\dA$ and $\C$. 
Specifically, an entry in $\dA$ or $\C$ is trainable if and only if both its channel and state dimensions are trainable. 
If either the channel or state dimension is pruned, the entry is pruned as well.
All other entries remain frozen. 
\textbf{(S6)} For S6, where parameters are input-dependent, we update $\dA, \Wb$, and $\Wc$ instead. 
Since $\Wb$ and $\Wc$ operate across channels, we categorize their updates based only on channel dimensions—we do not update individual state dimensions differently for each channel. 
Based on this categorization, we mark the corresponding columns of $\Wb$ and $\Wc$ as trainable, frozen, or pruned accordingly. 

The dimension selection algorithm and parameter updates together form the \ours{}-P method for tuning SSM modules. 
Next, we provide theoretical guarantees for applying \ours{}-P to SSM modules and \lorav{} to linear projection matrices.

\vspace{-.1in}
\subsection{Expressive Power of \ours{}-P Combined with LoRA}

Our analysis focuses on simplified SSM-based models, where each layer consists of an SSM module followed by linear projection matrices with residual connections. 
We refer to this structure as a deep SSM layer: 
i) a deep S4 layer consists of an S4 module followed by linear projections;
ii) a deep S6 layer follows the same structure but replace S4 with S6. 
A deep S4 model is composed of deep S4 layers, while a deep S6 model consists of deep S6 layers. 
The detailed formulation of deep S4 layers is provided in Sec.~\ref{sec:prelim}, and a deep S6 layer follows the same structure with S4 replaced by S6. 
The following theorem highlights the expressive power of \ours{}-P on updating SSM modules, where each layer uses a single type of SSM module (S4 or S6) followed by linear projections.

\begin{theorem}[Expressive Power of \ours{}-P with LoRA on Simplified SSM-based Models]\label{thm:s4_mixture} 
Assume all layers use linear activations.
Let $\fzf$ be a frozen deep S4 or S6 model with $\diml$ layers, each containing $\dimh$ hidden states per channel.
Let $\tgf$ be a smaller target model of the same type (S4 or S6), with no residual connections, $\tgdiml < \diml$ layers, and $\tgdimh < \dimh$ hidden states per channel.
Then, there exists a set of parameter updates to $\fzf$ satisfying the following conditions such that for any finite-length input sequence $\mX = (\vx_1, \ldots, \vx_N)$ with $\vx_n \in \gX \subset \sR^{\dimd}$, where $\gX$ is bounded, the resulting model $f$ satisfies $f(\mX) = \tgf(\mX)$:
\begin{enumerate}[leftmargin=*, itemsep=-1pt, parsep=0pt, topsep=0pt] 
\item \textbf{(\ours{}-P on SSM)} In each SSM module, update at most $\ceil{\dimd \tgdiml / \diml}$ channels. Within each updated channel, fine-tune at most $\tgdimh$ hidden states and set the rest to zero.
\item \textbf{(\lorav{} on Linear Projections)} Apply rank-$\ceil{\diml / \tgdiml}$ updates to each linear projection matrix.
\item \textbf{(Minimal Additional Updates)} Update only the residual connections, per-layer biases, and the final-layer output projection matrix.
\end{enumerate} 
\end{theorem}

For proof and details, refer to Sec.~\ref {app:ours_theory_s4} and \ref{app:ours_s6_ssd}.
This theorem shows that a larger pretrained model can be fine-tuned into any smaller model of the same architecture by applying \ours{}-P to SSM modules and \lorav{} to linear projection matrices.
Moreover, for less complex tasks, where the target model has fewer layers ($\tgdiml$) and hidden states ($\tgdimh$), the required number of trainable channels and hidden states also decreases.
This aligns with the theoretical analysis of LoRA by \citet{zeng2023expressive}, which demonstrates that larger pre-trained models require fewer learnable parameters (i.e., a lower-rank update) during fine-tuning, especially for simpler tasks.
While our theorem assumes linear activations, no residual connections in the target model, and full fine-tuning of the last-layer projection matrix, our findings have broader implications. 
As our experimental results in Sec.~\ref{sec:exp} will show, these insights generalize beyond these theoretical constraints.

\begin{algorithm}[H]
\caption{Dimension Selection Algorithm of \ours{}}
\label{alg:ours_simplified}
\KwIn{A small subset of dataset $\gD$, warmup epochs $E$, number of layers $L$, total channels $D$, total states $H$, 
channel freeze ratio $\alpha$, state freeze ratio $\beta$}
\BlankLine
\tcc{Warmup epochs}
Perform full update on SSM modules using $\gD$ for $E$ epochs\;
\For{$l = 1$ \KwTo $L$}{

\tcc{Unfreeze dimensions}
Sort channels $\sD$ based on changes of $\Vert\dA\dth\Vert$\;

Freeze the bottom $\beta |\sD|$ channels, denoted by $\sD'$\;

\For{$d \in \sD'$}{
Sort state dimensions by the changes in $\Vert\dA\dth\Vert$\;

Freeze the bottom $\alpha |\sH|$ state dimensions at the $d$-th channel\;
}
}
\end{algorithm}

\subsection{Sparse Dimension Tuning (\ours{}): A Pruning-Free Alternative}
While \ours{}-P classifies channels and states into three categories, we simplify our approach by omitting pruning and categorizing parameters as either trainable or frozen.
We refer to this simplified method as \textit{Sparse Dimension Tuning} (\ours{}). 
This reduces the number of hyperparameters, as pruned parameters are effectively equivalent to being trained to zero. 
The resulting dimension selection approach is outlined in the pseudo-code (Alg.~\ref{alg:ours_simplified}), which corresponds to the update scheme illustrated in Fig.~\ref{fig:ours_demo}. 
Experiments will show that this simplification remains effective.

\paragraph{Overhead Analysis.}
We assess the computational overhead of applying \ours{} with LoRA (for linear projection matrices) versus LoRA alone with Table~\ref{tab:overhead} summarizing the results. 
Although \ours{} involves an additional dimension selection stage, Table~\ref{tab:overhead} shows that this incurs minimal extra cost.
Furthermore, with the same parameter budget, \ours{} for SSM modules combined with LoRA on linear projections runs faster than LoRA alone, since LoRA introduces extra matrix multiplications between two low-rank matrices for the SSM modules, whereas \ours{} does not. 
In Sec.~\ref{app:memory_speed}, we detail the experimental settings and present a memory usage analysis showing that \ours{} also consumes less memory during fine-tuning for the same reason.

\begin{table}[ht]
    \centering
    \resizebox{\linewidth}{!}{
\begin{tabular}{ccccc}
\toprule
\textbf{Stage} & \textbf{Method} & \textbf{Mamba-130M} & \textbf{Mamba-1.4B} & \textbf{Jamba-Mini-52B} \\
\midrule
Dim. Selection & LoRA \& \ours{} &  16.5 $\pm$ 3.9 & 85.8 $\pm$ 5.3 & 163.9 $\pm$ 10.2 \\
\midrule
\multirow{2}{*}{\begin{tabular}{c}
    Training \\
     (per epoch)
\end{tabular}} & LoRA &  410.0 $\pm$ 80.0 & 2060.0 $\pm$ 135.0 & 3427.5 $\pm$ 185.0\\
& LoRA \& \ours{} & 330.0 $\pm$ 77.5 & 1697.5 $\pm$ 87.5 & 3065.0 $\pm$ 232.5 \\
\bottomrule
\end{tabular}
        }
\caption{
\textbf{
PEFT combining \ours{} with LoRA is more efficient than LoRA alone when the same number of trainable parameters are used.}
Shown are dimension selection and per-epoch training times (s) for Mamba and Jamba models.
}
\label{tab:overhead}
\end{table}

\section{Experimental Studies of \ours{}}\label{sec:exp}
In this section, we evaluate the performance of \ours{} in tuning SSM modules, comparing it to \lorav{}, the best existing PEFT method for fine-tuning SSM modules, as shown in Sec.~\ref{sec:benchmark}.
Our experiments reveal the key result:
\begin{highlight}[boxsep=1mm, left=1mm, right=1mm, top=1mm, bottom=1mm]
    \textbf{Finding: }\textit{\ours{} outperforms \lorav{} on SSM modules.}
\end{highlight}

\subsection{Synthetic Experiments on Deep S4 Models}
\label{sec:exp_sdt}

This experiment validates our theoretical guarantees under broader conditions, including residual connections and ReLU activations in both models, without fully fine-tuning the last-layer projection matrix. See Sec.~\ref{app:s4_exp} for details.

\textbf{(Experimental Setup)} We employ a regression setting to validate our theoretical results. 
We randomly initialize two models: a one-layer deep S4 model as the target and a four-layer deep S4 model as the frozen model. 
LoRA is applied to linear projection matrices, while different methods are tested on the SSM module to assess their effectiveness. 
The goal is to update the frozen model to match the target model's functionality.
\begin{wrapfigure}{r}{.5\linewidth}
\vspace{-.1in}
    \centering
    \includegraphics[width=\linewidth]{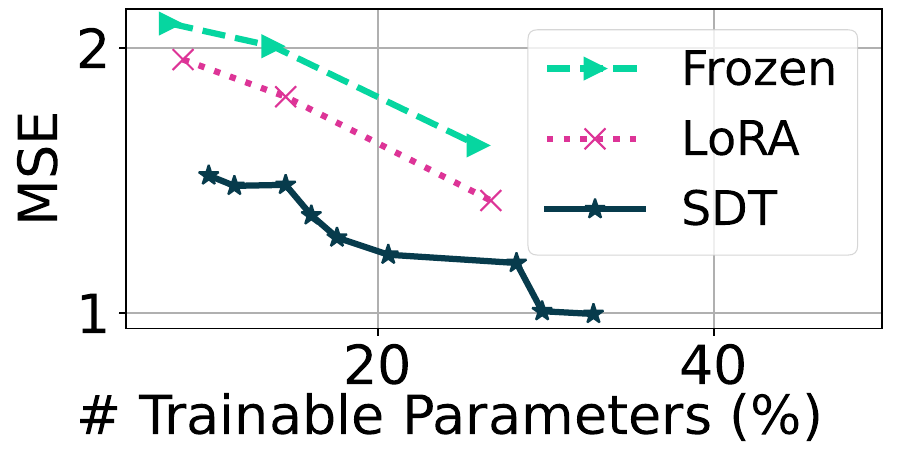}
    \vspace{-.2in}
    \caption{\ours{} surpasses LoRA in tuning S4 within deep S4 models when LoRA is applied to linear projection matrices in synthetic experiments.}
    \label{fig:s4_curves}
    \vspace{-.1in}
\end{wrapfigure}
We generate an input sequence $\mX$ of length $200$ and dimension $64$, with values uniformly drawn from integers between 0 and 9. 
This input is then processed through the target model to obtain the corresponding outputs. 
These input-output pairs are used to train the frozen model over 500 iterations using the Mean Squared Error (MSE) loss. 
\textbf{(Results)} Figure~\ref{fig:s4_curves} shows the MSE, averaged across all tokens, plotted against the number of trainable parameters for different methods on SSM modules. \ours{} achieves significantly lower MSE than LoRA on SSM modules, demonstrating its effectiveness in updating SSM modules.

\subsection{Real-World Experiments on Pretrained Models}
\label{sec:real_world_exps}
Lastly, we conduct experiments to evaluate our approach on pretrained models, including Mamba and Jamba with different model sizes. 
We consider five datasets: GLUE, DART, SAMSum, Spider, and CelebA. 
For these experiments, we split the datasets into three parts: train, validation, and test, different from benchmarking experiments.
We combine our proposed SDT with \lorav{} and evaluate it in three different settings against three pure \lorav{} settings. 
In SDT, 99\% of channels are frozen, and we adjust state freeze ratios. 
For the pure \lorav{} settings, we apply \lorav{} to different parameter sets, selecting ranks to ensure all settings have a comparable parameter budget for fair comparison. 
Residual connections and biases are frozen and 
learning rates are independently selected via a small grid search over data subsets.
See Sec.~\ref{app:pretrained_exp_ours} for further details.

\vspace{-.1in}
\paragraph{Mamba.}
The experimental results of Mamba are reported in Table~\ref{tab:ours_pretrained_mamba}, showing that applying \ours{} on SSM modules outperforms pure \lorav{}, even when 99\% of the channels are frozen. This underscores the effectiveness of \ours{} on fine-tuning SSM modules.

\begin{table}[htbp]
        \centering
        \resizebox{\linewidth}{!}{
\begin{tabular}{cccccccccc}
\toprule
\multirow{2}{*}{\textbf{LinProj}} &  \multirow{2}{*}{\textbf{S6}} 
 & \textbf{GLUE} 
 & \multicolumn{2}{c}{\textbf{DART}}
 & \textbf{CelebA} 
 & \multicolumn{3}{c}{\textbf{SAMSum}} 
 & \textbf{Spider} \\
 & & Avg. & BLEU & MET. & Acc. & R1 & R2 & RL & Acc. \\
\midrule
\multirow{2.5}{*}{LoRA} & \multirow{1}{*}{LoRA}
 & 80.8 & 51.0 & 70.2 & \textbf{88.6} & 51.6 & \textbf{28.2} & 43.2 & 83.5 \\
\cmidrule{2-10}

& \multirow{1}{*}{\ours{}} 
 & \textbf{81.1} & \textbf{51.5} & 70.5 & \textbf{88.6} & \textbf{51.7} & 28.1 & \textbf{43.4} & \textbf{84.5} \\
\midrule
\multirow{2.5}{*}{DoRA} & \multirow{1}{*}{DoRA}
 & 80.1 & 51.2 & 70.4 & 88.4 & 51.8 & 28.0 & 43.4 & 83.8 \\
\cmidrule{2-10}

& \multirow{1}{*}{\ours{}} 
 & 78.2 & \textbf{51.5} & \textbf{70.8} & \textbf{88.6} & \textbf{52.1} & \textbf{28.3} & \textbf{43.7} & \textbf{85.1} \\
\bottomrule
\end{tabular}
        }
        \vspace{-.1in}
    \caption{
    \textbf{Performance comparison between \ours{} and LoRA on pretrained Mamba models.} 
    \textbf{Bold} numbers indicate the best performance for each task. 
We use Mamba-130M to compare the performance of \ours{} and LoRA on GLUE~\citep{wang2018glue}, DART~\citep{nan2021dart}, and CelebA~\citep{liu2015deep} benchmarks. 
For all other datasets, we employ Mamba-1.4B.
We report only the best setting out of three for each method. %
We observe that \ours{} outperforms \lorav{} on updating SSM modules on Mamba.
    }
    \label{tab:ours_pretrained_mamba}
    \vspace{-.2in}
\end{table}

\vspace{-.1in}
\paragraph{Jamba.}
We extend our experiments to Jamba, applying all tested methods exclusively to its Mamba layers. 
Notably, the performance gain on Jamba is smaller compared to Mamba. This is because we freeze all Transformer layers to isolate the effect of Mamba layers for a fair evaluation. 
Additionally, since the Mamba layers in Jamba contain significantly fewer parameters than those in the Mamba model, fine-tuning them yields limited performance improvements. 
Nevertheless, results on GLUE (Table~\ref{tab:ours_pretrained_jamba}) validate the effectiveness of our method.
See \cref{tab:add_ours_pretrained_jamba} for more results.

\begin{table}[htbp]
        \centering
        \resizebox{\linewidth}{!}{
\begin{tabular}{cccccccccc}
\toprule
{\textbf{LinProj}} & {\textbf{S6}} 
 & \textbf{RTE} & \textbf{MRPC} & \textbf{CoLA} & \textbf{SST-2} & \textbf{QNLI} & \textbf{QQP} & \textbf{MNLI} & \textbf{Avg.} \\
\midrule
\multirow{2.5}{*}{DoRA} & \multirow{1}{*}{DoRA}
& 65.7 & \textbf{77.8} & 7.1 & 93.9 & 77.8 & 67.8 & 85.4 & 67.9 \\
\cmidrule{2-10}
& \multirow{1}{*}{\ours{}} 
& \textbf{67.1} & 77.5 & \textbf{7.5} & \textbf{94.2} & \textbf{79.6} & \textbf{72.7} & \textbf{85.5} & \textbf{69.2} \\
\bottomrule
\end{tabular}
        }
\captionsetup{skip=8pt}

    \caption{
    \textbf{Performance comparison between \ours{} and DoRA on pretrained Jamba models.} 
    \textbf{Bold} numbers indicate the best performance for each task. 
We use Jamba-Tiny-319M to compare the performance of \ours{} and DoRA on the GLUE~\citep{wang2018glue} benchmark. %
We report only the best setting out of three for each method. 
We observe that \ours{} outperforms DoRA on updating SSM modules on Jamba.
    }
    \label{tab:ours_pretrained_jamba}
\end{table}

\vspace{-.1in}
\section{Discussion}\label{sec:conclusion} 
In this paper, we study PEFT methods applied to SSM-based models.
Our evaluation of existing PEFT methods provides valuable insights and guidelines for future researchers to parameter-efficiently fine-tune SSM-based models for other domains. 
Moreover, we take an initial step in establishing a theoretical framework for studying PEFT methods on SSM-based models.
Additionally, we introduce \ours{}, a PEFT method specifically tailored to SSM modules, demonstrating superior performance compared to existing approaches.

\paragraph{Limitations \& Future Works.} 
The theoretical guarantees for \ours{} are restricted to linear activations and require full fine-tuning of the last layer. Nonetheless, our experiments show that \ours{} performs well in practice despite these constraints. Addressing these theoretical limitations or developing new PEFT methods applicable to broader scenarios is a promising future direction. Additionally, our theory shows that modifying a subset of channels and states is sufficient but does not guide optimal selection. Our approach, based on a warmup stage and parameter magnitude, might not be optimal. Future research could explore the impact of channel/state selection and improve dimension selection algorithms.

\section*{Impact Statement}
This paper presents work whose goal is to advance the field of Machine Learning. There are many potential societal consequences of our work, none which we feel must be specifically highlighted here. %

\section*{Acknowledgment}
The works is supported by NSF Award DMS-2023239, NSF CAREER Award CCF-2339978, Amazon Research Award, and a grant from FuriosaAI.

\bibliography{reference}
\bibliographystyle{icml2025}

\newpage
\onecolumn
\appendix
{\LARGE \textbf{Appendix}} \par 

\makeatletter
\def\addcontentsline#1#2#3{%
  \addtocontents{#1}{\protect\contentsline{#2}{#3}{\thepage}{\@currentHref}}%
}
\makeatother
\startcontents[sections]
\printcontents[sections]{ }{1}{}
\newpage

\section{Additional Related Works}\label{app:related_works}
\subsection{Additional Related Works on SSMs}
Linear State-Space Layers (LSSL) represent one of the earliest SSM layers utilized in deep learning, functioning as continuous-time, recurrent, and convolutional models~\citep{gu2021combining}. 
LSSL employs HiPPO theory~\citep{gu2020hippo} to initialize the state matrix $\A$, enabling the capture of long dependencies. 
However, LSSL is computationally expensive, limiting its practical application.
\citet{gu2022s4} introduced Structured State Space Models (S4), which optimize computation efficiency by employing a structured state matrix $\A$. 
\citet{gupta2022dss} proposed DSS, which simplifies the model by using a diagonal matrix for $\A$ and empirically demonstrated that it suffices to achieve performance comparable to S4. 
Further, \citet{gu2022s4d} provided a theoretical explanation for the effectiveness of the diagonal state matrix $\A$ in DSS and introduced S4D, which offers various initialization methods for $\A$. 
Subsequently, the diagonal structure of the state matrix $\A$ has been adopted in follow-up methods~\citep{gu2023mamba}.
Despite differences in optimization algorithms, we refer to S4 and its close variants, including DSS and S4D, collectively as S4. This terminology encompasses models that maintain the standard discrete-time SSM form with a diagonal state matrix.

Despite of the remarkable performance of SSMs on certain tasks of sequence modeling, SSMs still showed worse performance than Transformers on language modeling. 
\citet{fu2022hungry} transitioned from synthetic language modeling tasks to real language modeling tasks with SSMs. They proposed H3, which is inspired by Linear Attention~\citep{katharopoulos2020transformers}, introducing both diagonal SSM and shift SSM.
Recently, Mamba~\citep{gu2023mamba,mamba2} escaped from linear time invariance (LTI) modeling by introducing input-dependent terms and achieved better performance than Transformers on language modeling. Furthermore, several hybrid models~\citep{lieber2024jamba,park2024can} tried to exploit the advantages of both SSMs and Transformers.

\subsection{Additional Related Works on PEFT}
In this section, we provide a more detailed description of the baseline methods.
\paragraph{LoRA~\citep{hu2021lora}.}
LoRA (Low-Rank Adaptation) focuses on fine-tuning large models by freezing pretrained parameters and injecting trainable low-rank matrices into each layer of the Transformer architecture. The intuition behind using low-rank matrices comes from linear algebra, where a large matrix can be closely approximated by the product of two smaller matrices. The number of trainable parameters can be controlled with the rank of the low-rank matrices. LoRA also uses a scaling parameter (LoRA alpha) for the weight matrices to control the balance of the original model weights and LoRA weights during training. After fine-tuning, LoRA weights can be merged with the original model weights, introducing no additional inference overhead.

\paragraph{Prompt Tuning~\citep{lester2021power}.}
Prompt tuning freezes all model weights and prepends a trainable soft prompt to the input prompt. The soft prompt consists of trainable virtual tokens, which are continuous. At inference time, prompt tuning introduces an inference overhead based on the number of virtual tokens used.

\paragraph{Prefix-Tuning~\citep{li2021prefix}.}
Prefix-tuning also prepends trainable tokens to the input like prompt tuning but injects separate prefixes in every layer. For each Transformer layer, prefix-tuning prepends trainable embeddings to the attention's $\mK$ and $\mV$ matrix. The authors have found that directly training these prefixes can lead to unstable training, so they propose to over-parameterize them with a large MLP to increase training stability. After training, the MLP can be dropped. Like prompt tuning, prefix-tuning introduces an inference overhead, scaling with the number of trainable embeddings.

\paragraph{BitFit~\citep{zaken2021bitfit}.}
BitFit is a simple but effective PEFT method that freezes all model weights except the bias terms, consequently greatly reducing the number of trainable parameters. As no additional parameters are added, no inference overhead occurs.

\paragraph{Theoretical Understanding of PEFT.}
Numerous efforts have been made to theoretically understand existing PEFT methods. 
For input-injection methods, \citet{wang2023universality}, \citet{petrov2023prompting}, and \citet{oymak2023role} have theoretically analyzed the effectiveness and limitations of prompt tuning and prefix-tuning for Transformer-based models.
For LoRA,  \citet{zeng2023expressive} explored its expressive power by demonstrating that even a randomly initialized model can be adapted to match any smaller target model using LoRA. 
Some of our theoretical analysis draws upon the framework established by \citet{zeng2023expressive}.
\citet{jang2024lora} conducted a theoretical exploration of LoRA within the neural tangent kernel (NTK) regime.

\section{Details of Datasets}\label{app:datasets}

In this paper, we consider six datasets across three domains: (i) Natural Language Understanding (NLU), represented by GLUE~\citep{wang2018glue}; (ii) Natural Language Generation (NLG), including SAMSum~\citep{gliwa2019samsum}, Spider~\citep{yu2018spider} and DART~\citep{nan2021dart}; and (iii) Computer Vision (CV), represented by CIFAR-10~\citep{krizhevsky2009learning} and CelebA~\citep{liu2015deep}.

\paragraph{GLUE~\citep{wang2018glue}.}
The GLUE (General Language Understanding Evaluation) benchmark is a collection of datasets used for training, evaluating, and analyzing natural language understanding models across a range of diverse tasks. The benchmark includes nine sentence- or sentence-pair language understanding tasks that require various features of understanding, such as sentiment analysis, linguistic acceptability, semantic textual similarity, and question answering. We use seven datasets from the GLUE benchmark (RTE, MRPC, CoLA, SST-2, QNLI, QQP, MNLI) where the model has to choose between two or three (for MNLI) different choices for the respective task. Except for CoLA, we evaluate all used datasets with the accuracy metric. For CoLA, Matthews correlation is employed.

\paragraph{SAMSum~\citep{gliwa2019samsum}.} SAMSum is a dataset for dialogue summarization research, comprising approximately 16,000 synthetic text conversations with accompanying summaries. Created by English-fluent linguists, these exchanges simulate real-world digital communications across various topics and styles. The conversations range from informal to formal, incorporating elements like slang and emoticons to reflect authentic messaging patterns. Each dialogue is paired with a concise, third-person summary, capturing its essential content. This structure makes SAMSum particularly useful for developing and evaluating automated summarization systems capable of processing conversational text. %

\paragraph{Spider~\citep{yu2018spider}.}
Spider is a large-scale, complex, and cross-domain semantic parsing and text-to-SQL dataset. It contains about 10,000 annotated SQL queries, distributed across 200+ databases, each with multiple tables. We follow \citet{scholak2021picard} and use about 7,000 examples for training and about 1,000 examples for validation, where we ignore sequences longer than 1536 tokens. The dataset consists of English question and SQL query pairs, which cover a wide range of SQL operations including SELECT, WHERE, COUNT, GROUP BY, ORDER BY, JOIN, and more. Given an English question and an SQL database scheme, the task for the model is to translate the English question into an appropriate SQL statement. Evaluation is performed via accuracy where the output is considered as correct if the model's predicted SQL query and the included GT SQL query give the same result when executed on the database. The dataset additionally categorizes each query into easy (25\%), medium (40\%), hard (20\%), and extra hard (15\%) based on the complexity of the required SQL statement. For evaluation, we report the execution accuracy of all categories.

\paragraph{DART~\citep{nan2021dart}.}
The DART (DAta Record to Text) benchmark is a large-scale, structured dataset designed for RDF-to-text (Resource Description Framework-to-text) generation with 80,000+ instances. The DART benchmark is composed of a collection of structured data triples and corresponding text summaries which are organized into different categories. The task of the DART benchmark is to generate natural language summaries that correctly represent the given structured data inputs. DART is typically evaluated with METEOR and BLEU.

\paragraph{CIFAR-10~\citep{krizhevsky2009learning}.}
The CIFAR-10 (Canadian Institute For Advanced Research) dataset is a collection of images that are commonly used to train machine learning and computer vision algorithms. It is one of the most widely used datasets for image classification. The CIFAR-10 dataset contains 60,000 (50,000 for training, 10,000 for validation) 32$\times$32 color images in 10 different classes. The 10 different classes are: airplane, car, bird, cat, deer, dog, frog, horse, ship, and truck. There are 6,000 images of each class. For training, we center crop each image to 24$\times$24 pixels and flatten each image to a string, with a total of 24$\times$24$\times$3 words, where each word is a number between 0-255 representing the respective pixel value. Although CIFAR-10 is a dataset for computer vision, previous work~\citep{dinh2022lift} showed that Transformers can be adapted to the vision domain from the language domain. In our work, we extend this investigation to SSMs, examining their ability to perform on vision data.

\paragraph{CelebA~\citep{liu2015deep}.}
The CelebA (CelebFaces Attributes) dataset is an extensive collection of more than 200,000 celebrity images, each tagged with 40 attributes. This dataset is notable for its diversity, volume, and comprehensive annotations, encompassing 10,177 distinct identities, 202,599 facial images, and annotations of five landmark points with 40 binary attributes per image. The dataset, which includes images with varied poses and complex backgrounds, is an essential resource for tasks in computer vision such as face recognition, attribute analysis, and detection, as well as facial landmark localization, and it offers significant utility in face editing and synthesis.

\begin{table*}
\centering
\resizebox{\textwidth}{!}{
\begin{tabular}{cc|cccccccc}
\toprule
\multicolumn{2}{c|}{\textbf{Dataset}} & \textbf{Size} (Train) & \textbf{Size} (Val) & \textbf{Size} (Test) &  \textbf{Max. seq. len.} & \textbf{\#Epochs} & \textbf{Mamba Size} & \textbf{Jamba Size} & \textbf{Metrics} \\
\midrule
\multirow{9.5}{*}{GLUE} & RTE & 1992 & 498 & 277 & 291 & 10 & 130M & 319M & Accuracy \\ \cmidrule{2-10}
& MRPC & 2934 & 734 & 408 & 105 & 10 & 130M & 319M & Accuracy \\ \cmidrule{2-10}
& CoLA & 6840 & 1711 & 1043 & 47 & 10 & 130M & 319M & Matthews corr. \\ \cmidrule{2-10}
& SST-2 & 53879 & 13470 & 872 & 68 & 10 & 130M & 319M & Accuracy \\ \cmidrule{2-10}
& QNLI & 83794 & 20949 & 5463 & 602 & 10 & 130M & 319M & Accuracy \\ \cmidrule{2-10}
& QQP & 291076 & 72770 & 40430 & 316 & 3 & 130M & 319M & Accuracy \\ \cmidrule{2-10}
& MNLI & 314161 & 78541 & 19647 & 425 & 3 & 130M & 319M & Accuracy \\ \midrule
\multicolumn{2}{c|}{Spider} & 5543 & 1375 & 1034 & 1412 & 10 & 1.4B, 2.8B & 52B & Accuracy \\ \midrule
\multicolumn{2}{c|}{SAMSum} & 14732 & 818 & 819 & 1174 & 10 & 1.4B & 52B & ROUGE \\ \midrule
\multicolumn{2}{c|}{DART} & 62659 & 2768 & 5097 & 491 & 10 & 130M & 52B & METEOR, BLEU \\ \midrule
\multicolumn{2}{c|}{CIFAR-10} & 40000 & 10000 & 10000 & 1730 & 5 & 130M & 319M & Accuracy \\ \midrule
\multicolumn{2}{c|}{CelebA} & 162770 & 19867 & 19962 & 12614 & 3 & 130M & 319M & Accuracy\\
\bottomrule
\end{tabular}
}
\medskip
\caption{\textbf{Datasets and models for our experiments.} For each dataset, we report the number of training, validation, and test samples, maximum sequence length, training epochs, model size, and evaluation metric used.
}
\label{tab:datasets}
\end{table*}

The dataset characteristics, including our train, validation and test set sizes, sequence lengths, and number of epochs, are summarized in Table~\ref{tab:datasets}.

\section{Details of Sec.~\ref{sec:benchmark}: Benchmarking PEFT Methods on SSM-based Models}\label{app:benchmark}
In this section, we provide a comprehensive experimental setup, proofs and further discussion of theoretical results, and more detailed experimental outcomes.

\subsection{Experiment Setup}\label{app:exp_setup}
For each dataset, we choose the model size depending on how challenging the dataset is and perform a small grid search for one epoch on a subset of the data (1k-2k instances) with learning rates $\set{\num{4e-1}, \num{2e-1}, \num{1e-1}, ..., \num{1e-5}}$ to find the optimal learning rate of each PEFT method. 
We only report the validation metric of the best epoch during training (early stopping) in our results. 
We fine-tune pretrained Mamba and Jamba models with AdamW with a linear learning rate decay schedule.
For LoRA we set rank to 8, alpha to 8, and dropout to 0.1 for all experiments.
For evaluating NLG tasks, we employ beam search with five beams and a maximum beam length of 1024.

\subsection{Extended Results on Benchmarking Existing PEFT Methods}\label{app:bench_exp}

\paragraph{Mamba-I.}

We present comprehensive fine-tuning results for the GLUE benchmark~\citep{wang2018glue}, DART dataset~\citep{nan2021dart}, SAMSum dataset~\citep{gliwa2019samsum}, Spider dataset~\citep{yu2018spider}, and CIFAR-10~\citep{krizhevsky2009learning} in \cref{tab:mamba_study_glue}, \cref{tab:mamba_study_dart}, \cref{tab:mamba_study_samsum}, \cref{tab:mamba_spider}, and \cref{tab:mamba_cifar} respectively. %
These experimental results encompass various LoRA implementations (on different weight matrices and modules) and provide more fine-grained results across all subtasks.

\paragraph{Mamba-II.}

\cref{tab:app_bench_mamba2_dart} and \cref{tab:mamba2_bench_spider} present the benchmark results of LoRA and full fine-tuning across different layers of Mamba-II. We follow the same experimental setup used for Mamba-I and demonstrate that, on Mamba-II, our conclusion holds: LoRA is more effective on linear projection layers than on SSM modules.

\paragraph{Jamba.}

\cref{tab:app_bench_jamba} presents the benchmark results of LoRA and full fine-tuning across different layers of Jamba. Our findings demonstrate that, on Jamba, LoRA is more effective on linear projection layers than on SSM modules, which aligns with our conclusion on Mamba.

\begin{table}[ht]
\centering
\resizebox{\textwidth}{!}{
\sisetup{table-auto-round}
\begin{tabular}{ll|l*{1}{S[table-format=2.2,drop-exponent = true,fixed-exponent = 0,exponent-mode = fixed,]}|*{8}{S[table-format=2.1,drop-exponent = true,fixed-exponent = -2,exponent-mode = fixed,]}}
\toprule
\multicolumn{2}{l|}{\textbf{Layer}} & \textbf{Method} &\textbf{\# Params (\%)} & \textbf{RTE} & \textbf{MRPC} & \textbf{CoLA} & \textbf{SST-2} & \textbf{QNLI} & \textbf{QQP} & \textbf{MNLI} & \textbf{Avg.}\\
\midrule
Pretrained & &  &0.0&0.469314 & 0.678922 & 0.000000 & 0.524083 & 0.505217 & 0.368118 & 0.322594 & 0.409750\\
\cmidrule(lr){1-12}
\multirow{2}{*}{All} &\multirow{2}{*}{All}& Full &100.0&0.7111913561820984&0.8063725233078003&0.6319360136985779&0.9220183491706848&0.8740618824958801&0.8787039518356323&0.8076550960540771&0.8045627389635358\\
&& LoRA &1.9225272834386173&0.6992753744125366&0.8088235259056091&0.6141119599342346&0.9185779690742493&0.884312629699707&0.8761315941810608&0.8113706707954407&0.8018005320004055\\
\cmidrule(lr){1-12}
\multirow{2.5}{*}{Prompt} & Prompt Tuning & 16 tokens &0.009514691278001438&0.5595667958259583&0.7156862616539001&0.11989180743694305&0.89449542760849&0.7678930759429932&0.7958199381828308&0.6148638129234314&0.6383167313677924\\
\cmidrule(lr){2-12}
 & Prefix-Tuning & 1 token (no MLP)&0.028538643106431304&0.6750902533531189&0.7573529481887817&0.4340396523475647&0.9151375889778137&0.8341570496559143&0.8310660123825073&0.3563902974128723&0.6861762574740818
\\
\cmidrule(lr){1-12}
Bias & $\vbeta_{\dt}, \conv1d$ & BitFit & 0.057 & 0.6945454478& 0.8039215803& 0.5465259552 & 0.9197247624 & 0.8616145253 & 0.8527826071 & 0.7716699839 & 0.77868355171\\
\cmidrule(lr){1-12}
\multirow{7}{*}{Linear Projection Matrices} &All& LoRA &1.0172305192702784&0.7003610134124756&0.8235294222831726&0.5771305561065674&0.9334862232208252&0.8872414231300354&0.8872619271278381&0.825316846370697&0.8049039159502301\\
\cmidrule(lr){2-12}
&\multirow{1}{*}{$\Winx$} &LoRA &0.3413920021811156&0.7039711475372314&0.8210784196853638&0.5735143423080444&0.9174311757087708&0.8830313086509705&0.8771457076072693&0.8121850490570068&0.7983367357935224\\
\cmidrule(lr){2-12}
&\multirow{1}{*}{$\Winz$} &LoRA &0.3413920021811156&0.7003610134124756&0.8235294222831726&0.5811202526092529&0.9243119359016418&0.8731465935707092&0.8732129335403442&0.8039904236793518&0.7970960821424212\\
\cmidrule(lr){2-12}
&\multirow{1}{*}{$\Winx, \Winz$} & LoRA &0.6804609650495874&0.7039711475372314&0.843137264251709&0.6241371035575867&0.9254587292671204&0.8863261938095093&0.8833539485931396&0.8174275755882263&0.8119731375149318\\
\cmidrule(lr){2-12}
&\multirow{1}{*}{$\Wout$} & LoRA &0.3413920021811156&0.7039711475372314&0.8284313678741455&0.6057460308074951&0.9243119359016418&0.884312629699707&0.8766757249832153&0.8148317933082581&0.8054686614445278\\
\cmidrule(lr){1-12}
\multirow{10}{*}{S6} & \multirow{2}{*}{All} & Full &4.310565285913944&0.6967508792877197&0.7892156839370728&0.5907527804374695&0.9151375889778137&0.8806516528129578&0.8752906322479248&0.8049066066741943&0.7932436891964504\\
&&LoRA &0.9239127232445363&0.660649836063385&0.7867646813392639&0.5781825184822083&0.9082568883895874&0.8777228593826294&0.8688597679138184&0.7981371283531189&0.7826533828462873\\
\cmidrule(lr){2-12}
&\multirow{1}{*}{$\A$} & Full &0.45674863956704037&0.6823104619979858&0.8210784196853638&0.5421063303947449&0.9094036817550659&0.8638110756874084&0.8788987994194031&0.7935053706169128&0.784444877079555\\
\cmidrule(lr){2-12}
&\multirow{2}{*}{$\Wb, \Wc, \Wdtd$} & Full &2.283743197835202&0.6967508792877197&0.7696078419685364&0.5582469701766968&0.91399085521698&0.8544755578041077&0.8501570820808411&0.767954409122467&0.7730262279510498\\
&&LoRA &0.6921927508755765&0.6787003874778748&0.7892156839370728&0.4883151352405548&0.91399085521698&0.8687534332275391&0.8584432005882263&0.7861142158508301&0.7690761302198682\\
\cmidrule(lr){2-12}
&\multirow{2}{*}{$\Wdtu$} & Full &1.398792708674061&0.660649836063385&0.7524510025978088&0.5669034719467163&0.9105504751205444&0.8624793887138367&0.871308445930481&0.784801721572876&0.7727349059922355\\
&&LoRA &0.23495766608718355&0.671480119228363&0.7990196347236633&0.5506518483161926&0.9094036817550659&0.5273759365081787&0.8658669590950012&0.786634087562561&0.7300617524555751\\
\cmidrule(lr){2-12}
&\multirow{1}{*}{$\conv1d$} & Full &0.14273394986470012&0.6823104619979858&0.7843137383460999&0.5789595246315002&0.9105504751205444&0.8596009612083435&0.8599802255630493&0.7795082926750183&0.7793176685060773\\
\cmidrule(lr){1-12}
Others &\multirow{1}{*}{$\D, \ln$} & Full &0.04341490975051295&0.6534296274185181&0.7916666865348816&0.40346768498420715&0.9105504751205444&0.8390699625015259&0.8598248958587646&0.6703313589096069&0.7326200987611499\\
\bottomrule
\end{tabular}}
\medskip
\caption{
\textbf{Full benchmark results on the GLUE~\citep{wang2018glue} benchmark using Mamba-I 130M.}
We report accuracy ($\uparrow$) for RTE, MRPC, SST-2, QNLI, QQP, and MNLI tasks. CoLA performance is measured using Matthews Correlation Coefficient ($\uparrow$). %
In each Mamba block, $\Winx$ and $\Winz$ are input projections that preprocess the input for the SSM modules and the gating branch, respectively. $\Wout$ denotes the output projection after the gating mechanism.
$\Wb$ and $\Wc$ are weight matrices for computing input-dependent $\B_n$ and $\C_n$. $\Wdtd$ and $\Wdtu$ represent down and up projections of low-rank weight matrices in the linear layer computing input-dependent step size $\dt_n$.
$\vbeta_{\dt}$ represents the bias in this linear layer.
$\D$ denotes the weight of residual connections.%
}
\label{tab:mamba_study_glue}
\end{table}

\begin{table}[htbp]
\centering
\resizebox{!}{0.2\textheight}{
\sisetup{table-auto-round}
\begin{tabular}{ll|l*{1}{S[table-format=2.2,drop-exponent = true,fixed-exponent = 0,exponent-mode = fixed,]}|*{2}{S[table-format=2.1,drop-exponent = true,fixed-exponent = 0,exponent-mode = fixed,]}}
\toprule
\multicolumn{2}{l|}{\textbf{Layer}} & \textbf{Method} &\textbf{\# Params (\%)} & \textbf{METEOR} & \textbf{BLEU} \\
\midrule
\multirow{3}{*}{All} &\multirow{3}{*}{All}& Full &100.0&71.00830078125&51.80216431617737\\
&& LoRA &1.9225272834386173&70.96800208091736&49.52130317687988\\
&& DoRA &2.01997&70.943588&51.364437\\
\cmidrule(lr){1-6}
\multirow{2.5}{*}{Prompt} & Prompt Tuning & 64 tokens &0.03804790468999875&66.18687510490417&39.826393127441406\\
\cmidrule(lr){2-6}
 & Prefix-Tuning & 64 tokens&22.688043993029535&66.58987998962402&42.462074756622314\\
\cmidrule(lr){1-6}
Bias & $\vbeta_{\dt}, \conv1d$ & BitFit & 0.057 & 67.0 & 43.7\\
\cmidrule(lr){1-6}
\multirow{11.5}{*}{Linear Projection Matrices} &\multirow{2}{*}{All}& LoRA &1.0172305192702784&71.17652893066406&49.160829186439514\\
& & DoRA &1.087104&71.190077&50.797191\\
\cmidrule(lr){2-6}
&\multirow{2}{*}{$\Winx$} &LoRA &0.3413920021811156&70.25022506713867&48.86011779308319\\
& & DoRA & 0.369736 & 70.811096 & 49.932894 \\
\cmidrule(lr){2-6}
&\multirow{2}{*}{$\Winz$} &LoRA &0.3413920021811156&70.42511105537415&49.059996008872986\\
& & DoRA & 0.369736 & 70.197674 & 48.341759 \\
\cmidrule(lr){2-6}
&\multirow{2}{*}{$\Winx, \Winz$} & LoRA &0.6804609650495874&70.93713283538818&49.452367424964905\\
& & DoRA & 0.736748 & 70.709288 & 51.550579 \\
\cmidrule(lr){2-6}
&\multirow{2}{*}{$\Wout$} & LoRA &0.3413920021811156&70.72714567184448&46.977826952934265\\
& & DoRA & 0.355566 & 70.706889 & 46.038357 \\
\cmidrule(lr){1-6}
\multirow{12.5}{*}{S6} & \multirow{3}{*}{All} & Full &4.310565285913944&70.34887075424194&48.67386519908905\\
&&LoRA &0.9239127232445363&69.8984682559967&50.779956579208374\\
& & DoRA & 0.953385 & 70.150122 & 50.006471 \\
\cmidrule(lr){2-6}
&\multirow{1}{*}{$\A$} & Full &0.45674863956704037&69.33321952819824&48.095324635505676\\
\cmidrule(lr){2-6}
&\multirow{3}{*}{$\Wb, \Wc, \Wdtd$} & Full &2.283743197835202&70.05661725997925&49.98392164707184\\
&&LoRA &0.6921927508755765&68.77301335334778&47.991544008255005\\
& & DoRA & 0.693659 & 68.280249 & 47.327527 \\
\cmidrule(lr){2-6}
&\multirow{3}{*}{$\Wdtu$} & Full &1.398792708674061&69.57650780677795&47.24234342575073\\
&&LoRA &0.23495766608718355&68.85669827461243&47.04772233963013\\
& & DoRA & 0.263362 & 68.419208 & 46.260973 \\
\cmidrule(lr){2-6}
&\multirow{1}{*}{$\conv1d$} & Full &0.14273394986470012&68.62162351608276&47.93423116207123\\
\cmidrule(lr){1-6}
Others &\multirow{1}{*}{$\D, \ln$} & Full  &0.04341490975051295&67.02703833580017&44.22866702079773\\
\bottomrule
\end{tabular}}
\medskip
\caption{
\textbf{Full benchmark results on the DART~\citep{nan2021dart} benchmark using Mamba-I 130M.}
We report METEOR $(\uparrow)$ and BLEU $(\uparrow)$ scores. 
In each Mamba block, $\Winx$ and $\Winz$ are input projections that preprocess the input for SSM modules and the gating branch, respectively. $\Wout$ denotes the output projection after the gating mechanism.
$\Wb$ and $\Wc$ are weight matrices for computing input-dependent $\B_n$ and $\C_n$. $\Wdtd$ and $\Wdtu$ represent down and up projections of low-rank weight matrices in the linear layer computing input-dependent step size $\dt_n$.
$\vbeta_{\dt}$ represents the bias in this linear layer.
$\D$ denotes the weight of residual connections.%
}
\label{tab:mamba_study_dart}
\end{table}

\begin{table}[htbp]
\centering
\resizebox{!}{0.17\textheight}{
\sisetup{table-auto-round}
\begin{tabular}{ll|l*{1}{S[table-format=2.2,drop-exponent = true,fixed-exponent = 0,exponent-mode = fixed,]}|*{3}{S[table-format=2.1,drop-exponent = true,fixed-exponent = 0,exponent-mode = fixed,]}}
\toprule
\multicolumn{2}{l|}{\textbf{Layer}} & \textbf{Method} &\textbf{\# Params (\%)} & \textbf{R1} & \textbf{R2} & \textbf{RL} \\
\midrule
\multirow{2}{*}{All} &\multirow{2}{*}{All}& Full &100.0& 51.2 & 27.3 & 42.9 \\
&& LoRA &0.9728181427184223&50.84481239&26.64773166&42.6920861\\
\midrule
\multirow{2.5}{*}{Prompt} & Prompt Tuning & 64 tokens &0.009551198153 & 50.1 & 25.6 & 41.6 \\
\cmidrule(lr){2-7}
 & Prefix-Tuning & 64 tokens &12.80726891 &50.6 & 26.5 & 42.1 \\
\midrule
Bias & $\vbeta_{\dt}, \conv1d$ & BitFit & 0.02865633148 & 50.3 & 25.7 & 41.9 \\
\midrule
\multirow{7}{*}{Linear Projection Matrices} &All& LoRA &0.5131669797242618&50.82030892&26.87382698&42.77303517\\
\cmidrule(lr){2-7}
&\multirow{1}{*}{$\Winx$} &LoRA &0.17164286959462782&49.831703 & 25.436693 & 41.155553\\
\cmidrule(lr){2-7}
&\multirow{1}{*}{$\Winz$} &LoRA &0.17164286959462782&50.02154708&26.05048716&41.67339802\\
\cmidrule(lr){2-7}
&\multirow{1}{*}{$\Winx, \Winz$} & LoRA &0.34269752332618886&50.87257028&26.96741223&42.28328168\\
\cmidrule(lr){2-7}
&\multirow{1}{*}{$\Wout$} & LoRA &0.17164286959462782 & 49.860501 & 25.443378 & 41.459155 \\
\cmidrule(lr){1-7}
\multirow{10}{*}{S6} & \multirow{2}{*}{All} & Full &4.456059545468792&51.1346221&26.89198554&42.24234819\\
&&LoRA &0.4643942151277233&50.52053928&26.36349499&42.18403399\\
\cmidrule(lr){2-7}
&\multirow{1}{*}{$\A$} & Full &0.22925065185691534&50.09089708&25.94490051&41.72477126\\
\cmidrule(lr){2-7}
&\multirow{2}{*}{$\Wb, \Wc, \Wdtd$} & Full &2.2925065185691538&50.45762062&25.97338259&41.78676605\\
&&LoRA &0.3471442403654087&50.41652322&25.99894702&41.84891582\\
\cmidrule(lr){2-7}
&\multirow{2}{*}{$\Wdtu$} & Full &1.84833338059638&50.26295185&25.66308677&41.61190093\\
&&LoRA &0.11806780252265575&50.18343925&25.42498112&41.26032889\\
\cmidrule(lr){2-7}
&\multirow{1}{*}{$\conv1d$} & Full &0.07164082870528606&50.085217 & 25.710830 & 41.920885\\
\cmidrule(lr){1-7}
Others &\multirow{1}{*}{$\D, \ln$} & Full &0.02164150033805516&49.580908 & 24.796230 & 41.105807\\
\bottomrule
\end{tabular}
}
\medskip
\caption{
\textbf{Full benchmark results on the SAMSum~\citep{gliwa2019samsum} benchmark using Mamba-I 1.4B.}
R1, R2, and RL represent ROUGE-1 ($\uparrow$), ROUGE-2 ($\uparrow$), and ROUGE-L ($\uparrow$), respectively.
In each Mamba block, $\Winx$ and $\Winz$ are input projections that preprocess the input for SSM modules and the gating branch, respectively. $\Wout$ denotes the output projection after the gating mechanism.
$\Wb$ and $\Wc$ are weight matrices for computing input-dependent $\B_n$ and $\C_n$. $\Wdtd$ and $\Wdtu$ represent down and up projections of low-rank weight matrices in the linear layer computing input-dependent step size $\dt_n$.
$\vbeta_{\dt}$ represents the bias in this linear layer.
$\D$ denotes the weight of residual connections. %
}
\label{tab:mamba_study_samsum}
\end{table}

\begin{table}[htbp]
\begin{subtable}{\textwidth}
\centering
\resizebox{0.7\textwidth}{!}{
\sisetup{table-auto-round}

\begin{tabular}{ll|l*{1}{S[table-format=2.2,drop-exponent = true,fixed-exponent = 0,exponent-mode = fixed,]}|*{1}{S[table-format=2.1,drop-exponent = true,fixed-exponent = 0,exponent-mode = fixed,]}|*{4}{S[table-format=2.1,drop-exponent = true,fixed-exponent = 0,exponent-mode = fixed,]}}
\toprule
\multicolumn{2}{l|}{\textbf{Layer}} & \textbf{Method} &\textbf{\# Params (\%)} & \textbf{All} & \textbf{Easy} & \textbf{Medium} & \textbf{Hard} & \textbf{Extra} \\
\midrule
\multirow{2}{*}{All} &\multirow{2}{*}{All}& Full &100.0&66.15086793899536&84.27419066429138&69.5067286491394&53.448277711868286&43.37349534034729\\
&& LoRA &0.9728181427184223&56.3829779624939&76.20967626571655&56.95067048072815&47.7011501789093&34.337350726127625\\
&& DoRA & 1.02252 & 55.705996 & 77.016129 & 56.950673 &47.126437 &29.518072 \\
\midrule
\multirow{2.5}{*}{Prompt} & Prompt Tuning & 64 tokens &0.009551198153 & 43.61702204  & 65.32257795  & 42.37668216 & 33.33333433 & 25.30120611\\
\cmidrule(lr){2-9}
 & Prefix-Tuning & 64 tokens &12.80726891 & 39.65183794 & 65.72580934 & 38.56502175 & 31.03448153 & 15.06024152\\
\midrule
Bias & $\vbeta_{\dt}, \conv1d$ & BitFit & 0.02865633148 & 51.25725269 & 74.19354916 & 50.89685917 & 43.10344756 & 26.5060246\\
\midrule
\multirow{11.5}{*}{Linear Projection Matrices} &\multirow{2}{*}{All}& LoRA &0.5131669797242618&54.73887920379639&75.0&55.60538172721863&45.97701132297516&31.32530152797699\\
&& DoRA & 0.548608& 57.156673 &79.435484 & 58.744395 & 45.977011 & 31.325301 \\
\cmidrule(lr){2-9}
&\multirow{2}{*}{$\Winx$} &LoRA &0.17164286959462782&60.831719636917114&76.61290168762207&63.4529173374176&52.87356376647949&38.55421543121338\\
&& DoRA & 0.18592 & 58.413926 & 80.241935 & 60.089686 & 49.425287 & 30.722892 \\
\cmidrule(lr){2-9}
&\multirow{2}{*}{$\Winz$} &LoRA &0.17164286959462782&46.32495045661926&68.54838728904724&45.73991000652313&36.78160905838013&24.69879537820816\\
&& DoRA & 0.18592 & 59.767892 & 83.870968 & 60.089686 & 50.574713 & 32.53012 \\
\cmidrule(lr){2-9}
&\multirow{2}{*}{$\Winx, \Winz$} & LoRA &0.34269752332618886&57.543522119522095&77.4193525314331&58.74439477920532&45.402297377586365&37.34939694404602\\
&& DoRA & 0.37115 & 60.73501&78.629032&62.107623&52.873563&38.554217 \\
\cmidrule(lr){2-9}
&\multirow{2}{*}{$\Wout$} & LoRA &0.17164286959462782&61.79884076118469&81.85483813285828&65.2466356754303&45.402297377586365&39.759036898612976\\
&& DoRA & 0.178782&61.31528&79.435484&63.901345&50&39.156627 \\
\cmidrule(lr){1-9}
\multirow{12.5}{*}{S6} & \multirow{3}{*}{All} & Full &4.456059545468792&56.673115491867065&76.61290168762207&57.847535610198975&45.97701132297516&34.939759969711304\\
&&LoRA &0.4643942151277233&56.286269426345825&75.0&56.502240896224976&50.574713945388794&33.73493850231171\\
&& DoRA & 0.479142&58.897485&77.419355&62.107623&47.126437&34.939759 \\
\cmidrule(lr){2-9}
&\multirow{1}{*}{$\A$} & Full &0.22925065185691534&51.06382966041565&71.37096524238586&52.466368675231934&42.52873659133911&25.903615355491638\\
\cmidrule(lr){2-9}
&\multirow{3}{*}{$\Wb, \Wc, \Wdtd$} & Full &2.2925065185691538&47.19535708427429&72.17742204666138&46.860986948013306&35.63218414783478&22.891566157341003\\
&&LoRA &0.3471442403654087&55.02901077270508&73.79032373428345&56.72645568847656&44.252872467041016&33.73493850231171\\
&& DoRA & 0.3477&55.319149&78.225806&57.847534&41.37931&28.915663 \\
\cmidrule(lr){2-9}
&\multirow{3}{*}{$\Wdtu$} & Full &1.84833338059638&56.76982402801514&77.01612710952759&59.41703915596008&43.67816150188446&33.13252925872803\\
&&LoRA &0.11806780252265575&58.027076721191406&78.62903475761414&59.41703915596008&48.85057508945465&33.13252925872803\\
&& DoRA & 0.13236&55.319149&76.209677&59.192825&42.528736&27.108434 \\
\cmidrule(lr){2-9}

&\multirow{1}{*}{$\conv1d$} & Full &0.07164082870528606&53.19148898124695&74.59677457809448&52.91479825973511&43.67816150188446&31.92771077156067\\
\cmidrule(lr){1-9}
Others &\multirow{1}{*}{$\D, \ln$} & Full &0.02164150033805516&49.61315393447876&70.56451439857483&50.448429584503174&40.229883790016174&25.903615355491638\\
\bottomrule
\end{tabular}
}
\caption{Full benchmark results on Spider using Mamba-I 1.4B.}
\label{tab:mamba_study_spider_1}
\end{subtable}
\centering

\begin{subtable}{0.7\textwidth}
\centering
\resizebox{\textwidth}{!}{
\sisetup{table-auto-round}
\begin{tabular}{ll|l*{1}{S[table-format=2.2,drop-exponent = true,fixed-exponent = 0,exponent-mode = fixed,]}|*{1}{S[table-format=2.1,drop-exponent = true,fixed-exponent = 0,exponent-mode = fixed,]}|*{4}{S[table-format=2.1,drop-exponent = true,fixed-exponent = 0,exponent-mode = fixed,]}}
\toprule
\multicolumn{2}{l|}{\textbf{Layer}} & \textbf{Method} &\textbf{\# Params (\%)} & \textbf{All} & \textbf{Easy} & \textbf{Medium} & \textbf{Hard} & \textbf{Extra} \\
\midrule
\multirow{2}{*}{All} &\multirow{2}{*}{All}& Full &100.0&71.76015377044678&87.5&73.54260087013245&63.79310488700867&51.807230710983276\\
&& LoRA &0.8048731222241444&70.88974714279175&90.72580933570862&73.99103045463562&58.62069129943848&45.78313231468201\\
\cmidrule(lr){1-9}
\multirow{2.5}{*}{Prompt} & Prompt Tuning & 64 tokens &0.005917985961427677&50.67698359489441&75.40322542190552&53.811657428741455&37.35632300376892&19.27710771560669\\
\cmidrule(lr){2-9}
 & Prefix-Tuning & 1 token &10.820948510794741&45.06769776344299&75.0&45.06726562976837&32.18390941619873&13.855421543121338\\
\midrule
Bias & $\vbeta_{\dt}, \conv1d$ & BitFit & 0.02367334483 & 59.86460447 & 82.25806355 & 60.76233387 & 52.87356377 & 31.32530153\\
\midrule
\multirow{7}{*}{Linear Projection Matrices} &All& LoRA &0.42431212724207246&58.220505714416504&74.59677457809448&58.29596519470215&51.724135875701904&40.361446142196655\\
\cmidrule(lr){2-9}

&\multirow{1}{*}{$\Winx$} &LoRA &0.1418386013384171&66.7311429977417&87.90322542190552&67.71300435066223&56.896549463272095&42.77108311653137\\
\cmidrule(lr){2-9}
&\multirow{1}{*}{$\Winz$} &LoRA &0.1418386013384171&65.37717580795288&86.69354915618896&68.83407831192017&54.5976996421814&35.54216921329498\\
\cmidrule(lr){2-9}
&\multirow{1}{*}{$\Winx, \Winz$} & LoRA &0.28327540879905794&65.18375277519226&89.11290168762207&67.26457476615906&51.724135875701904&37.9518061876297\\
\cmidrule(lr){2-9}
&\multirow{1}{*}{$\Wout$} & LoRA &0.1418386013384171&67.02127456665039&87.09677457809448&69.05829310417175&52.87356376647949&46.385541558265686\\
\cmidrule(lr){1-9}
\multirow{10}{*}{S6} & \multirow{2}{*}{All} & Full &4.4387521558002&65.66731333732605&81.85483813285828&68.83407831192017&58.04597735404968&40.963855385780334\\
&&LoRA &0.3838049492635724&63.92650008201599&86.29032373428345&68.1614339351654&49.425286054611206&34.337350726127625\\
\cmidrule(lr){2-9}
&\multirow{1}{*}{$\A$} & Full &0.18938675864747523&56.57640099525452&77.01612710952759&58.07175040245056&45.97701132297516&33.13252925872803\\
\cmidrule(lr){2-9}
&\multirow{2}{*}{$\Wb, \Wc, \Wdtd$} & Full &2.2726411037697027&58.800774812698364&79.03226017951965&60.98654866218567&50.574713945388794&31.32530152797699\\
&&LoRA &0.2868061957662522&60.251450538635254&82.66128897666931&63.00448179244995&46.55172526836395&33.73493850231171\\
\cmidrule(lr){2-9}
&\multirow{2}{*}{$\Wdtu$} & Full &1.9057042588902193&62.18568682670593&82.2580635547638&65.69506525993347&51.724135875701904&33.73493850231171\\
&&LoRA &0.09755728025823117&62.18568682670593&80.2419364452362&66.5919303894043&49.425286054611206&36.74698770046234\\
\cmidrule(lr){2-9}
&\multirow{1}{*}{$\conv1d$} & Full &0.05918336207733601&62.4758243560791&81.85483813285828&66.14349484443665&51.14942789077759&35.54216921329498\\
\cmidrule(lr){1-9}
Others &\multirow{1}{*}{$\D, \ln$} & Full &0.01784748262644664&50.9671151638031&70.96773982048035&51.121073961257935&42.52873659133911&29.518070816993713\\
\bottomrule
\end{tabular}}
\caption{Full benchmark results on Spider using Mamba-I 2.8B.}
\label{tab:mamba_study_spider_2}
\end{subtable}

\caption{\textbf{Full benchmark results on Spider~\citep{yu2018spider} dataset using Mamba-I.} 
We report the accuracy ($\uparrow$) for Spider and its subsets.
We consider two models in our experiments: Mamba-I 1.4B and Mamba-I 2.8B.
In each Mamba block, $\Winx$ and $\Winz$ are input projections that preprocess the input for SSM modules and the gating branch, respectively. $\Wout$ denotes the output projection after the gating mechanism.
$\Wb$ and $\Wc$ are weight matrices for computing input-dependent $\B_n$ and $\C_n$. $\Wdtd$ and $\Wdtu$ represent down and up projections of low-rank weight matrices in the linear layer computing input-dependent step size $\dt_n$.
$\vbeta_{\dt}$ represents the bias in this linear layer.
$\D$ denotes the weight of residual connections. 
}
\label{tab:mamba_spider}
\end{table}

\begin{table}[ht]
\centering
\resizebox{!}{0.2\textheight}{
\sisetup{table-auto-round}
\begin{tabular}{ll|l*{1}{S[table-format=2.2,drop-exponent = true,fixed-exponent = 0,exponent-mode = fixed,]}|*{1}{S[table-format=2.2,drop-exponent = true,fixed-exponent = 0,exponent-mode = fixed,]}}
\toprule
\multicolumn{2}{l|}{\textbf{Layer}} & \textbf{Method} &\textbf{\# Params (\%)} & \textbf{Accuracy} \\
\midrule
Pretrained & &&0.0& 0.081500 \\
\cmidrule(lr){1-5}
\multirow{2}{*}{All} &\multirow{2}{*}{All}& Full &100.0&59.960001707077026\\
&& LoRA &1.9225272834386173&60.350000858306885\\
\cmidrule(lr){1-5}
Bias & $\vbeta_{\dt}, \conv1d$ & BitFit & 0.06& 44.4\\
\cmidrule(lr){1-5}
\multirow{7}{*}{Linear Projection Matrices} &All& LoRA &1.0172305192702784&62.790000438690186\\
\cmidrule(lr){2-5}
&\multirow{1}{*}{$\Winx$} &LoRA &0.3413920021811156&53.49000096321106\\
\cmidrule(lr){2-5}
&\multirow{1}{*}{$\Winz$} &LoRA &0.3413920021811156&58.149999380111694\\
\cmidrule(lr){2-5}
&\multirow{1}{*}{$\Winx, \Winz$} & LoRA &0.6804609650495874&61.04000210762024\\
\cmidrule(lr){2-5}
&\multirow{1}{*}{$\Wout$} & LoRA &0.3413920021811156&52.03999876976013\\
\cmidrule(lr){1-5}
\multirow{10}{*}{S6} & \multirow{2}{*}{All} & Full &4.310565285913944&55.51000237464905\\
&&LoRA &0.9239127232445363&43.959999084472656\\
\cmidrule(lr){2-5}
&\multirow{1}{*}{$\A$} & Full &0.45674863956704037&61.21000051498413\\
\cmidrule(lr){2-5}
&\multirow{2}{*}{$\Wb, \Wc, \Wdtd$} & Full &2.283743197835202&49.50999915599823\\
&&LoRA &0.6921927508755765&52.27000117301941\\
\cmidrule(lr){2-5}
&\multirow{2}{*}{$\Wdtu$} & Full &1.398792708674061&34.540000557899475\\
&&LoRA &0.23495766608718355&56.48999810218811\\
\cmidrule(lr){2-5}
&\multirow{1}{*}{$\conv1d$} & Full &0.14273394986470012&55.650001764297485\\
\cmidrule(lr){1-5}
Others &\multirow{1}{*}{$\D, \ln$} & Full  &0.04341490975051295&58.090001344680786\\
\bottomrule
\end{tabular}}
\bigskip
\caption{
\textbf{Full benchmark results on the CIFAR-10~\citep{krizhevsky2009learning} dataset using Mamba-I 130M.}
We report accuracy ($\uparrow$). 
In each Mamba block, $\Winx$ and $\Winz$ are input projections that preprocess the input for SSM modules and the gating branch, respectively. $\Wout$ denotes the output projection after the gating mechanism.
$\Wb$ and $\Wc$ are weight matrices for computing input-dependent $\B_n$ and $\C_n$. $\Wdtd$ and $\Wdtu$ represent down and up projections of low-rank weight matrices in the linear layer computing input-dependent step size $\dt_n$.
$\vbeta_{\dt}$ represents the bias in this linear layer.
$\D$ denotes the weight of residual connections.
}
\label{tab:mamba_cifar}
\end{table}

\begin{table}[ht]
    \centering
    \resizebox{0.7\linewidth}{!}{
\sisetup{table-auto-round}
\begin{tabular}{ll|l*{1}{S[table-format=2.2,drop-exponent = true,fixed-exponent = 0,exponent-mode = fixed,]}|*{2}{S[table-format=2.1,drop-exponent = true,fixed-exponent = 0,exponent-mode = fixed,]}}
\toprule
\multicolumn{2}{l|}{\textbf{Layer}} & \textbf{Method} &\textbf{\# Params (\%)} & \textbf{METEOR} & \textbf{BLEU} \\
\midrule
\multirow{2}{*}{All} & \multirow{2}{*}{All} & Full & 100.0 & 66.5657103061676 & 34.87004339694977 \\
 &  & LoRA & 1.3937711814030767 & 66.92367792129517 & 45.40579319000244 \\
\cmidrule(lr){1-6}
\multirow{4}{*}{Linear Projection Matrices} & $\Win,\Wout$ & LoRA & 1.0183680475238421 & 67.10972189903259 & 44.71283555030823 \\
\cmidrule(lr){2-6}
 & $\Win$ & LoRA & 0.6812244891925094 & 67.0646071434021 & 43.031200766563416 \\
\cmidrule(lr){2-6}
 & $\Wout$ & LoRA & 0.34177637678472095 & 66.79650545120239 & 42.27807819843292 \\
\cmidrule(lr){1-6}
\multirow{4.5}{*}{S6} & \multirow{2}{*}{All} & Full & 4.169116475966068 & 65.72325229644775 & 39.69655632972717 \\
 &  & LoRA & 0.3831525566981265 & 64.1839325428009 & 40.11951684951782 \\
\cmidrule(lr){2-6}
 & \multirow{2}{*}{$\Wb,\Wc,\Wdt$} & Full & 4.0010657600759725 & 65.96596240997314 & 36.18850111961365 \\
 &  & LoRA & 0.3831525566981265 & 64.83138799667358 & 39.47250545024872 \\
\bottomrule
\end{tabular}
    }
\caption{\textbf{Full benchmark results of LoRA on DART~\citep{nan2021dart} dataset using Mamba-II 130M.}}
\label{tab:app_bench_mamba2_dart}
\end{table}

\begin{table}[ht]
    \centering
    \resizebox{0.8\linewidth}{!}{
\sisetup{table-auto-round}
\begin{tabular}{ll|l*{1}{S[table-format=2.2,drop-exponent = true,fixed-exponent = 0,exponent-mode = fixed,]}|*{1}{S[table-format=2.1,drop-exponent = true,fixed-exponent = 0,exponent-mode = fixed,]}|*{4}{S[table-format=2.1,drop-exponent = true,fixed-exponent = 0,exponent-mode = fixed,]}}
\toprule
\multicolumn{2}{l|}{\textbf{Layer}} & \textbf{Method} &\textbf{\# Params (\%)} & \textbf{All} & \textbf{Easy} & \textbf{Medium} & \textbf{Hard} & \textbf{Extra} \\
\midrule

\multirow{2}{*}{All} & \multirow{2}{*}{All} & Full & 100.0 & 64.79690670967102 & 85.88709831237793 & 65.69506525993347 & 54.0229856967926 & 42.16867387294769 \\
 &  & LoRA & 0.7064169994075359 & 64.50676918029785 & 81.04838728904724 & 66.36771559715271 & 56.896549463272095 & 42.77108311653137 \\
\cmidrule(lr){1-9}
\multirow{4}{*}{Linear Projection Matrices} & $\Win,\Wout$ & LoRA & 0.5239638410370118 & 50.38684606552124 & 68.54838728904724 & 52.01793909072876 & 44.82758641242981 & 24.69879537820816 \\
\cmidrule(lr){2-9}
 & $\Win$ & LoRA & 0.3499203794449115 & 57.543522119522095 & 76.20967626571655 & 59.41703915596008 & 48.85057508945465 & 33.73493850231171 \\
\cmidrule(lr){2-9}
 & $\Wout$ & LoRA & 0.17526683691283898 & 57.930368185043335 & 81.04838728904724 & 56.72645568847656 & 51.724135875701904 & 33.13252925872803 \\
\cmidrule(lr){1-9}
\multirow{6}{*}{S6} & \multirow{2}{*}{All} & Full & 2.419408304585285 & 55.12572526931763 & 76.20967626571655 & 56.0538113117218 & 42.52873659133911 & 34.337350726127625 \\
 &  & LoRA & 0.1843784870467267 & 54.06189560890198 & 74.19354915618896 & 58.07175040245056 & 45.97701132297516 & 21.686747670173645 \\
\cmidrule(lr){2-9}
 & $\Alog$ & Full & 0.00022861270949497164 & 21.470019221305847 & 45.967742800712585 & 18.834081292152405 & 11.49425283074379 & 2.409638464450836 \\
\cmidrule(lr){2-9}
 & \multirow{2}{*}{$\Wb,\Wc,\Wdt$} & Full & 2.34099414522851 & 50.29013752937317 & 72.98387289047241 & 52.24215388298035 & 39.65517282485962 & 22.289156913757324 \\
 &  & LoRA & 0.1843784870467267 & 55.51257133483887 & 77.4193525314331 & 55.156952142715454 & 46.55172526836395 & 33.13252925872803 \\
\bottomrule
\end{tabular}
}
\caption{\textbf{Full benchmark results on the Spider~\citep{yu2018spider} dataset using Mamba-II 1.3B.}}
\label{tab:mamba2_bench_spider}
\end{table}

\begin{table}[ht]
    \centering
    \resizebox{0.6\linewidth}{!}{
\sisetup{table-auto-round}
\begin{tabular}{ll|l*{1}{S[table-format=2.2,drop-exponent = true,fixed-exponent = 0,exponent-mode = fixed,]}|*{2}{S[table-format=2.1,drop-exponent = true,fixed-exponent = 0,exponent-mode = fixed,]}}
\toprule
\multicolumn{2}{l|}{\textbf{Layer}} & \textbf{Method} & \textbf{\# Params (\%)} & \textbf{METEOR} & \textbf{BLEU} \\
\midrule
\multirow{1}{*}{All} & \multirow{1}{*}{All} & Full & 100.0 & 70.79 & 45.04 \\
\cmidrule(lr){1-6}
Attention & All & LoRA & 0.016706 & 63.47 & 19.67 \\
\cmidrule(lr){1-6}
MLP & All & LoRA & 1.369084 & 70.87 & 46.2 \\
\cmidrule(lr){1-6}
Linear Projection Matrices + S6&All& LoRA & 0.308314 & 70.16 & 39.99 \\
\cmidrule(lr){1-6}
\multirow{3}{*}{Linear Projection Matrices}& $\Win$ & LoRA & 0.107846 & 68.85 & 37.76 \\
\cmidrule(lr){2-6}
& $\Wout$ & LoRA & 0.053952 & 67.67 & 31.85 \\

\cmidrule(lr){1-6}
\multirow{2}{*}{S6} & All & Full & 0.535314 & 69.23 & 35.49 \\
& $\Wb, \Wc, \Wdtd$  & LoRA & 0.147107 & 66.55 & 24.16 \\
\bottomrule
\end{tabular}
    }
\caption{\textbf{Full benchmark results on DART~\citep{nan2021dart} dataset using Jamba-Tiny-319M.}}
\label{tab:app_bench_jamba}
\end{table}

\subsection{Limitations of Applying Input-injection Methods on SSMs}\label{app:prompt}
\begin{table}[htbp]
    \centering
    \resizebox{0.7\linewidth}{!}{
    \begin{tabular}{l|cccccccc}\toprule
        \textbf{Task} & \textbf{RTE} & \textbf{MRPC} & \textbf{CoLA} & \textbf{SST-2} & \textbf{QNLI} & \textbf{QQP} & \textbf{MNLI} & \textbf{Avg. Score} \\ \midrule
         Prompt Tuning & 56.0 & 71.6 & 12.0 & 89.4 & 76.8 & 79.6 & 61.5 & 63.8 \\
         Prefix-Tuning & \underline{69.5} & \underline{75.7} & 43.4 & 91.5 & 83.4 & 83.1 & 35.6 & 68.6  \\
         Initial State Tuning & 66.8 &  75.1 & \underline{52.4} & \underline{92.4} & \underline{86.4} & \underline{86.1} & \underline{78.5} & \underline{76.8}\\ \midrule
         LoRA (Linear Projection Matrices) & \textbf{70.4} & \textbf{82.8} & \textbf{60.6} & \textbf{92.4} & \textbf{88.4} & \textbf{87.7} & \textbf{81.5} & \textbf{80.5} \\
\bottomrule
    \end{tabular}}
    \caption{
\textbf{Comparison of prompt-tuning, prefix-tuning, initial state tuning, and LoRA on seven tasks from the GLUE benchmark.}
We report Matthews correlation ($\uparrow$) for CoLA, overall (matched and mismatched) accuracy $(\uparrow)$ for MNLI, and accuracy for other tasks.
Initial state tuning and LoRA are constrained to use less than 0.5\% trainable parameters. 
\textbf{Bold} numbers indicate the best performance across all three methods, while \underline{underlined} numbers show the highest score among input-injection methods (prefix-tuning and initial state tuning). 
Initial state tuning outperforms prefix-tuning and prompt-tuning on five out of seven tasks, while LoRA consistently outperforms all input-injection methods.}
    \label{tab:glue}
\end{table}

We start by introducing the necessary notations. 
Denote the space of S4 mechanisms with $\dimd$ channels as $\gF_{\text{S4},\dimd}$. Let $\mH_0 = (\vh_0^{(1)}, \vh_0^{(2)}, \dots, \vh_0^{(D)}) \in \sR^{\dimh \times D}$ represent the initial hidden state, and $\mX = (\vx_{1}, \vx_{2}, \ldots, \vx_{N}) \in \sR^{\dimd\times N}$ denote the input sequence.  
The output of the S4 mechanism is represented as $f(\mX; \mH_0)$.
Furthermore, for $d$-th channel, let state transition matrix $\dA\dth = \diag{(a_1\dth, \cdots , a_\dimh\dth)}$ and input transition vector $\dB\dth = (b_1, \cdots , b_\dimh)^{\top}$, where $d = 1, \ldots, \dimd$.
For any vector $\vv \in \sR^n$, we use $\vv_{i:j} \in \sR^{j-i}$ to denote the subvector of $\vv$ containing elements from $i\in\sN^+$ to $j\in \sN^+$, where $i < j$. 
Similarly, for any matrix $\mM \in \sR^{m\times n}$, we use $\mM_{i_1:j_1, i_2:j_2}$ to denote the submatrix containing rows $i_1\in\sN^+$ to $j_1\in\sN^+$ and columns $i_2\in\sN^+$ to $j_2\in\sN^+$, where $i_1 < j_1$, $i_2 < j_2$.
\begin{proposition}[Expressivity of Prefix-Tuning on SSMs]\label{prop:prefix}
Let $f \in \gF_{\text{S4}, \dimd}$ be an S4 mechanism. 
Consider prefix-tuning that prepends a sequence $\mP = (\vp_{1}, \ldots, \vp_{M}) \in \sR^{\dimd \times M}$ to the input sequence $\mX = (\vx_{1}, \vx_{2}, \ldots, \vx_{N}) \in \sR^{\dimd \times N}$. 
For any prefix $\mP \in \sR^{\dimd \times M}$, there exists an initial hidden state $\mH_0^\star \in \sR^{\dimh \times \dimd}$ such that the output of S4 after prefix-tuning and that after initial state tuning are identical, i.e., $f(\mX; \mH_0^\star) \equiv f([\mP, \mX]; \mH_0)_{1:D,M+1:M+N}$ for all $\mX \in \sR^{\dimd \times N}$.

Furthermore, assume that $\prod_{0\leq i<j \leq \dimh} (a_j\dth - a_i\dth) \neq 0$ and $\prod_{k=1}^\dimh b_k\dth \neq 0$ for all channels $d = 1, \ldots, \dimd$.
Then the converse (i.e., for any $\mH_0 \in \sR^{H \times D}$, there exists a $\mP^\star \in \sR^{D\times M}$ such that $f([\mP^\star, \mX]; \mH_0)_{1:D,M+1:M+N} \equiv f(\mX; \mH_0^\star)$ for all $\mX\in \sR^{\dimd \times N}$) holds if and only if $M \geq \dimh$.
\end{proposition}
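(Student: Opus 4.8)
The plan is to reduce the whole statement to a controllability (reachability) question for the per-channel linear recurrence and settle it with a Vandermonde argument. First I would unroll the recurrence of a single channel $d$ over the prefix block: feeding $\mathbf{P}$ into the S4 mechanism starting from $\mathbf{h}_0^{(d)}$ produces the hidden state
\begin{equation*}
\mathbf{h}_M^{(d)} = (\overline{\mathbf{A}}^{(d)})^M \mathbf{h}_0^{(d)} + \sum_{m=1}^{M} (\overline{\mathbf{A}}^{(d)})^{M-m}\overline{\mathbf{B}}^{(d)} (\mathbf{p}_m)_d,
\end{equation*}
and since the dynamics and outputs from time $M+1$ onward depend on the prefix only through $\mathbf{h}_M^{(d)}$, setting $\mathbf{H}_0^\star$ channelwise to $\mathbf{h}_M^{(d)}$ immediately yields the forward direction $f(\mathbf{X};\mathbf{H}_0^\star)\equiv f([\mathbf{P},\mathbf{X}];\mathbf{H}_0)_{1:D,M+1:M+N}$ for every $\mathbf{X}$; note this part needs no genericity. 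This also sets up the converse: realizing a target initial state $\mathbf{H}_0^\star$ by a prefix is exactly solving, in each channel, $\mathbf{H}_0^{\star(d)} - (\overline{\mathbf{A}}^{(d)})^M \mathbf{h}_0^{(d)} = \mathcal{C}_M^{(d)}\mathbf{q}^{(d)}$, where $\mathcal{C}_M^{(d)} = [\overline{\mathbf{B}}^{(d)}, \overline{\mathbf{A}}^{(d)}\overline{\mathbf{B}}^{(d)}, \ldots, (\overline{\mathbf{A}}^{(d)})^{M-1}\overline{\mathbf{B}}^{(d)}] \in \mathbb{R}^{H \times M}$ is the controllability matrix and $\mathbf{q}^{(d)} = ((\mathbf{p}_M)_d, \ldots, (\mathbf{p}_1)_d)^\top$ collects the $d$-th coordinates of the prefix tokens.

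Next I would exploit that the prefix tokens $\mathbf{p}_m \in \mathbb{R}^D$ have independent coordinates, so the $\mathbf{q}^{(d)}$ are chosen freely and independently across channels; the reachability problem thus decouples into $D$ problems of identical form. The converse then amounts to: the linear map $\mathbf{q}^{(d)}\mapsto \mathcal{C}_M^{(d)}\mathbf{q}^{(d)}$ is surjective onto $\mathbb{R}^H$ for every $d$. The crux is the rank of $\mathcal{C}_M^{(d)}$. Because $\overline{\mathbf{A}}^{(d)}$ is diagonal, its $k$-th column has entries $(a_h^{(d)})^{k}b_h^{(d)}$, so $\mathcal{C}_M^{(d)} = \diag(b_1^{(d)},\ldots,b_H^{(d)})\, V$, where $V$ is the rectangular Vandermonde matrix built from the nodes $a_1^{(d)},\ldots,a_H^{(d)}$. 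For $M \geq H$ the leading $H\times H$ block has determinant $\prod_h b_h^{(d)} \cdot \prod_{i<j}(a_j^{(d)} - a_i^{(d)})$, which is nonzero exactly by the two genericity hypotheses, giving full row rank $H$ and hence surjectivity; for $M < H$ the matrix is $H \times M$ with rank at most $M < H$, so its image is a proper subspace. This delivers the ``if and only if'' at the level of effective initial states.

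The one step needing care is the passage between equality of effective initial states and equality of outputs, which is what the proposition literally asserts. For the ``if'' direction this is free, since matching $\mathbf{H}_0^\star$ exactly makes the two suffix trajectories identical. For the ``only if'' direction I must rule out that an unreachable state nonetheless produces the same output as some reachable one; the output difference between two initial states differing by $\mathbf{w}^{(d)}$ in channel $d$ is $\mathbf{C}^{(d)}(\overline{\mathbf{A}}^{(d)})^{t}\,\mathbf{w}^{(d)}$ across $t = 1,\ldots,N$, so I would invoke observability of $(\overline{\mathbf{A}}^{(d)}, \mathbf{C}^{(d)})$—guaranteed by the distinctness of the $a_h^{(d)}$ together with nonvanishing $\mathbf{C}^{(d)}$ entries and $N \geq H$—to conclude that distinct initial states are distinguishable, so a target state outside the reachable affine subspace genuinely cannot be matched. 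I expect this observability bookkeeping, rather than the Vandermonde computation, to be the main obstacle, since it is where the hypotheses must be lined up precisely against the output-level claim.
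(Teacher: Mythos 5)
Your proposal is correct and follows essentially the same route as the paper's proof: reduce to a single channel, write the state reached after the prefix explicitly, and decide surjectivity of the prefix-to-state map via the Vandermonde determinant $\prod_{k} b_k \prod_{i<j}(a_j - a_i)$ of the controllability matrix, with the rank bound $M < H$ handling the negative case. Where you go beyond the paper is precisely the step you flag as delicate: the paper's proof of the converse silently identifies ``the output functions differ'' with ``the effective initial states differ'' (and also drops the $\dA^M \vh_0$ transport term), whereas you correctly observe that closing this gap requires observability of $(\dA\dth, \C\dth)$ — i.e.\ nonvanishing entries of $\C\dth$ and $N \geq \dimh$ — which the proposition does not hypothesize; your version is the more careful one.
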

\begin{proof}[Proof of \cref{prop:prefix}]
Given that operations in S4 are independent across all channels, we can, without loss of generality, consider the case where the number of channels $\dimd = 1$. 
Consequently, we can simplify our notation: the initial hidden states $\mH_0 \in \sR^{H\times D}$ become $\vh_0 \in \sR^{H}$, the input sequence $\mX \in \sR^{D\times N}$ becomes $\vx \in \sR^N$, and the prefix $\mP \in \sR^{D\times M}$ becomes $\vp \in \sR^M$.
We omit the superscript $(d)$ denoting the channel index. 
To differentiate between the hidden states and output of prefix-tuned S4 (i.e., $f([\mP, \mX]; \mH_0)_{1:D,M+1:M+N}$) and initial state tuned S4 (i.e., $f(\mX; \mH_0^\star)$), we introduce superscripts ``PT'' and ``IST'' respectively. 
The ``PT'' superscript denotes hidden states and output of S4 after prefix-tuning, while ``IST'' indicates those after initial state tuning.

We divide the proposition into two statements:
\begin{enumerate}[leftmargin=*]
    \item For any prefix $\vp \in \sR^{M}$, there exists an initial hidden state $\vh_0^\star \in \sR^{\dimh}$ such that the output of S4 after prefix-tuning and that after initial state tuning are identical, i.e., $f(\vx; \vh_0^\star) \equiv f([\vp, \vx]; \vh_0)_{M+1:N+M}$ for all $\vx \in \sR^{N}$.
    \item Furthermore, assume that $\prod_{0\leq i<j \leq \dimh} (a_j - a_i) \neq 0$ and $\prod_{k=1}^\dimh b_k \neq 0$.
    Then the converse (i.e., for any $\vh_0 \in \sR^{H}$, there exists a $\vp^\star \in \sR^M$ such that $f([\vp^\star, \vx]; \vh_0)_{M+1:N+M} \equiv f(\vx; \vh_0^\star)$ for all $\vx \in \sR^N$) holds if and only if $M \geq \dimh$.
\end{enumerate}
We will first prove the first statement and then proceed to prove the second statement.

\textbf{Statement 1. } 
The recurrent computation formulation of S4 in \eqref{eq:rnn_discrete_ssm} implies that for each position $i$, the output $y_i$ depends solely on the previous hidden state $h_{i-1}$ and the current input $x_i$. 
Thus, to demonstrate that $f(\vx; \vh_0^\star) \equiv f([\vp, \vx]; \vh_0)_{M+1:N+M}$ for all $\vx \in \sR^{N}$, it suffices to show that the hidden state for predicting output $y\ist_1$ equals that for predicting output $y\pt_{M+1}$, where $y\ist_1$ and $y\pt_{M+1}$ are outputs corresponding to the input $x_1$ for initial state tuning and prefix-tuning, respectively. 
In other words, it is sufficient to show that the initial state of initial-state-tuned model $\vh\ist_0 = \vh_0^\star$ is equal to the $(M+1)$-th hidden state of prefix-tuned model $\vh\pt_{M+1} = \sum_{m = 1}^{M} \dA^{M-m} \dB p_{m}$.
When this equality holds, the subsequent hidden states and outputs for both versions of S4 will be identical, as the input sequence from that point onward is the same.
Therefore, we prove the first statement by letting
\begin{align}
    \vh_0^\star = \sum_{m = 1}^{M} \dA^{M-m} \dB p_{m}.
\end{align}

\textbf{Statement 2. } 
We aim to investigate the conditions under which there exists a $\vh_0^\star \in \sR^H$ such that for any $\vp \in \sR^M$, 
$f([\vp^\star, \vx]; \vh_0)_{M+1:N+M} \neq f(\vx; \vh_0^\star)$. This is equivalent to demonstrating the existence of $\vh_0^\star \in \sR^H$ such that 
\begin{equation}
    \vh_0^\star \neq \sum_{m = 1}^{M} \dA^{M-m} \dB p_{m},\quad \text{for all } \vp \in \sR^M. 
\end{equation}
This condition can be further reformulated as
\begin{equation}
    \sR^{H} \setminus \text{span}(\dA^{M}\dB, \dA^{M-1}\dB, \ldots, \dB) \neq \emptyset, 
\end{equation}
which is equivalent to 
\begin{equation}\label{eq:prefix_eq1}
   \text{span}(\dA^{M}\dB, \dA^{M-1}\dB, \ldots, \dB) \subsetneq \sR^H. 
\end{equation}
To determine when this condition holds, we analyze three distinct cases: (i) $M < H$, (ii) $M = H$, and (iii) $M > H$.

\emph{(Case 1: When $M < \dimh$).} 
In this scenario, it is obvious that \eqref{eq:prefix_eq1} holds.
The existence of such a $\vh_0^\star$ is guaranteed because the dimension of the span is at most $M$, which is strictly less than $\dimh$. This choice of $\vh_0^\star$ ensures that it cannot be represented as a linear combination of the vectors in the span, thereby establishing the inequality.

\emph{(Case 2: When $M = \dimh$).} In this scenario, $ \text{span}(\dA^{M}\dB, \dA^{M-1}\dB, \ldots, \dB) = \sR^{H}$ if and only if $(\dA^{M}\dB, \dA^{M-1}\dB, \ldots, \dB)$ are linearly independent. 
Note that 
\begin{align}\label{eq:prefix_eq2}
    \det (\dA^{M}\dB, \dA^{M-1}\dB, \ldots, \dB) = \det (\dA^{M}, \dA^{M-1}, \ldots, \vone) \prod_{k=1}^\dimh b_k,
\end{align}
where 
\begin{align}
\det (\dA^{M}, \dA^{M-1}, \ldots, \vone) & = 
\det
\begin{bmatrix}
 a_1^{\dimh-1} & \cdots & a_1^2 & a_1 & 1 \\
a_2^{\dimh-1} & \cdots & a_2^2 & a_2 & 1 \\
\vdots & \ddots & \vdots & \vdots & \vdots \\
a_\dimh^{\dimh-1} & \cdots & a_\dimh^2 & a_\dimh & 1 \\
\end{bmatrix} & \quad (\text{Expand})\\
& =  (-1)^{\frac{H(H-1)}{2}} \prod_{0\leq i<j \leq \dimh}^\dimh (a_j - a_i). & \quad (\text{Vandermonde matrix})\label{eq:prefix_eq3}
\end{align}
Combining \eqref{eq:prefix_eq2} and \eqref{eq:prefix_eq3} yields 
\begin{align}
    \det (\dA^{M}\dB, \dA^{M-1}\dB, \ldots, \dB) = (-1)^{\frac{H(H-1)}{2}} \prod_{0\leq i<j \leq \dimh}^\dimh (a_j - a_i) \prod_{k=1}^\dimh b_k. 
\end{align}
Therefore, if and only if $\prod_{1\leq i<j \leq \dimh} (a_j - a_i) \neq 0$ and $\prod_{k=1}^\dimh b_k \neq 0$, we have
\begin{align}
\det (\dA^{M}\dB, \dA^{M-1}\dB, \ldots, \dB) \neq 0,
\end{align}
which is both necessary and sufficient for the linear independence of $(\dA^{M}\dB, \dA^{M-1}\dB, \ldots, \dB)$, and consequently, for the condition in \eqref{eq:prefix_eq1} to be satisfied.

\emph{(Case 3: When $M > \dimh$).} The analysis presented in case 2 extends naturally to this scenario.

The combination of the three cases above completes the proof of statement 2.
\end{proof}

\subsection{Optimal Application of \lorav{} in SSM-based Models}\label{sec:app_ffn_ssm}
Several studies~\citep{hu2023llm,he2021towards} present findings on Transformers, indicating that applying \lorav{} to linear projection matrices yields performance comparable to or marginally superior to that of attention layers. 
In contrast, our experimental results on SSMs reveal that applying \lorav{} to linear projection matrices is more effective than applying it to S6. 
To elucidate this phenomenon, we examine the influence of updating linear projection matrices on the model's output.

\paragraph{Notations.}
To make the analysis tractable, we consider a simplified SSM-based architecture composed of the following components:
\begin{itemize}[leftmargin=*]
    \item Two input projection matrices: $\mW_{\text{in,1}}, \mW_{\text{in,2}} \in \sR^{D\times D}$;
    \item The S6 module parameterized by diagonal state transition matrices $\set{\A\dth}_{d=1}^{D}$ with $\A\dth \in \sR^{H\times H}$, the weight matrices $\Wb, \Wc \in \sR^{H \times D}$ for computing input-dependent input transition vectors $\B_n \in \sR^H$ and output mapping vectors $\C_n \in \sR^{H}$, the down and up projection matrices $\Wdtd \in \sR^{D \times R}, \Wdtu \in \sR^{R \times D} $ (where $R$ is the rank) for low-rank weight matrices for computing the input-dependent step size $\dt_n = (\Delta^{(1)}_n, \ldots, \Delta^{(D)}_n) \in \sR_{D}$, for $n = 1, \ldots, N$.
\end{itemize}
Define $\wssm = [\Wb^\top, \Wc^\top, \Wdtu^\top]^\top \in \sR^{(2H+R)\times D}$.
In the Mamba implementation, $\wssm$ is implemented as the weight matrix of a single linear layer, referred to as \texttt{x\_proj} in the codebase.
Let the input sequence be $\mX = (\vx_1, \ldots, \vx_N) \in \sR^{D \times N}$. At each time step $n$, the S6 module uses two differently projected versions of the input: (i) the input projected via $\mW_{\text{in,1}}$ is used to compute the input-dependent parameters $\dA_n$, $\dB_n$, and $\C_n$, and (ii) the input projected via $\mW_{\text{in,2}}$ serves as the actual input to the S6 module.
We note that this formulation generalizes the standard case, which uses a single input projection matrix before the S6 module. In particular, it reduces to the standard case when $\mW_{\text{in,1}} = \mW_{\text{in,2}}$.
Then, the output at time step $N$ is given by:
\begin{align}\label{eq:projected_s6}
    y\dth_N = \underbrace{\overbrace{\C (\mW_{\text{in,1}} \vx_N)^\top}^{\text{Input-dependent } \C_N} \sum_{n=1}^N  \left( \prod_{m=1}^n \overbrace{\dA(\mW_{\text{in,1}}\vx_m)}^{\text{Input-dependent } \dA_m} \right) \overbrace{\dB_n (\mW_{\text{in,1}}\vx_n)}^{\text{Input-dependent } \dB_n}}_{\text{Parameters depending on input after projection }\mW_{\text{in,1}} } \underbrace{(\mW_{\text{in,2}} \vx_n)\dth}_{\text{Input after projection} \mW_{\text{in,2}} }.
\end{align}
To be more specific, the definitions for the relevant terms are:
\begin{gather}\label{eq:theta}
    \dt_n = \softplus(\Wdtd \Wdtu \mW_{\text{in, 1}} \x_n +\vbeta_{\dt}), \\ \dA_n\dth = \exp(\Delta_n\dth \A\dth), \quad \dB_n\dth = \Delta_n\dth \Wb \mW_{\text{in, 1}} \x_n, \quad \C_n = \Wc \mW_{\text{in, 1}} \x_n.
\end{gather}
When $\vbeta_{\dt} = \vzero$, the output at time step $N$ can be further written as
\begin{gather}\label{eq:s6_projected}
    y\dth_N = (\Wc \mW_{\text{in, 1}} \x_n)^\top \sum_{n=1}^N  \left( \prod_{m=1}^n \exp(\Delta_n\dth \A\dth)\right)\Delta_n\dth \Wb \mW_{\text{in, 1}} \x_n (\mW_{\text{in,2}}^\star \vx_n)\dth, \\
    \text{where } \dt_n = \softplus(\Wdtd \Wdtu \mW_{\text{in, 1}} \x_n).
\end{gather}

\paragraph{Theoretical Analysis.} 
Assume none of the parameters are zero and $D > 2H + R$, where $R$ is the rank of $\Wdtd \Wdtu$.
Lemma~\ref{lemma:s6_projection_informal} in the main body shows that applying \lorav{} solely to $\mW_{\text{in,1}}$ is equivalent to applying it to $\wssm$.
For completeness, we provide the proof below and restate the lemma for the reader's convenience.
\setcounter{temp_lemma_counter}{\value{lemma}} 
\setcounter{lemma}{\numexpr\getrefnumber{lemma:s6_projection_informal}-1\relax} 
\begin{lemma}[Expressivity of Fine-Tuning Projection Matrices]
Consider two models with the architecture described above. Let: 
\begin{itemize}[leftmargin=*, parsep=0pt, topsep=0pt]
    \item A target model $\tgf$ parameterized by
    $(\set{\A^{\star(d)}}_{d=1}^D$, $\tgWb$, $\tgWc$, $\Wdtu^\star$, $\Wdtd^\star$, $\mW_{\text{in,1}}^\star$, $\mW_{\text{in,2}}^\star)$;
    \item A frozen model $\fzf$ parameterized by
    $(\set{\A^{\star(d)}}_{d=1}^D$, $\Wb$, $\Wc$, $\Wdtu$, $\Wdtd^\star$, $\mW_{\text{in,1}}$, $\mW_{\text{in,2}}^\star)$.
\end{itemize}
The two models share $\set{\A^{\star(d)}}_{d=1}^D$, $\Wdtd^\star$, and $\mW_{\text{in,2}}^\star$, while differing in $\Wb$, $\Wc$, $\Wdtu$, and $\mW_{\text{in,1}}$.
Then, there exists a projection matrix $\widehat{\mW}_{\text{in,1}}$ such that the frozen model matches the output of the target model for any input sequence, i.e., 
\begin{align}\label{eq:ffn_outcome}
    f(\cdot; \set{\A^{\star (d) }}_{d=1}^D, \Wb, \Wc, \Wdtu, \Wdtd^\star, \widehat{\mW_{\text{in,1}}}, \mW_{\text{in,2}}^\star) = \tgf (\cdot; \set{\A^{\star (d) }}_{d=1}^D, \tgWb, \tgWc, \Wdtu^\star, \Wdtd^\star, \mW_{\text{in,1}}^\star, \mW_{\text{in,2}}^\star ).
\end{align} 
\end{lemma}
\setcounter{lemma}{\value{temp_lemma_counter}} 
\begin{proof}[Proof of Lemma~\ref{lemma:s6_projection_informal}]
    To prove \eqref{eq:ffn_outcome}, we substitute \eqref{eq:s6_projected} into \eqref{eq:ffn_outcome}, simplify the expression, and show that the equality holds under the following conditions:
    \begin{gather}\label{eq:ffn_conds}
        \tgWc \mW_{\text{in,1}}^\star = \Wc \mW_{\text{in,1}} \\ 
        \Wdtu^\star \mW_{\text{in,1}}^\star = \Wdtu \mW_{\text{in,1}}\\ 
        \tgWb \mW_{\text{in,1}}^\star = \Wb \mW_{\text{in,1}}.
    \end{gather}
   Since $\wssm = \begin{bmatrix}
       \Wb \\ 
       \Wc \\ 
       \Wdtu 
   \end{bmatrix}$, the three conditions \eqref{eq:ffn_conds} can be compactly written as 
   \begin{align}\label{eq:ffn_cond}
       \tgwssm \mW_{\text{in,1}}^\star = \wssm \mW_{\text{in,1}}.
   \end{align}
   We now show that for any $\wssm$, there exists a matrix $\mW_{\text{in,1}}$ that satisfies \eqref{eq:ffn_cond}.
   By applying Singular Value Decomposition (SVD) to $\wssm$
   , we obtain:
   \begin{gather}\label{eq:wssm_svd}
    \wssm = \mU \begin{bmatrix}
    \mSigma & \mO_{(2H+R) \times (D-2H-R)}
    \end{bmatrix} \mV^\top, \label{eq:wssm_svd}
    \end{gather}
    where $\mU
    \in \sR^{(2H+R)\times(2H+R)}$, $\mSigma
    \in \sR^{(2H+R) \times (2H+R)}$, and $\mV
    \in \sR^{D\times D}$. The diagonal elements of $\mSigma$ 
    are in decreasing order.
We let
\begin{align}\label{eq:sol_ffn}
   \mW_{\text{in,1}} = \mV \begin{bmatrix}
       \mSigma^{-1} \mU^\top \tgwssm \mW_{\text{in,1}}^\star \\ 
        \mQ
   \end{bmatrix},
\end{align}
where $\mQ\in \sR^{(D-2H-R)\times D}$ is an arbitrary matrix to be determined later.
Plugging \eqref{eq:wssm_svd} and \eqref{eq:sol_ffn}  back to $\wssm \mW_{\text{in,1}}$ and simplifying results in 
\begin{align}
    & \wssm \mW_{\text{in,1}} \\
    & = \mU \begin{bmatrix}
    \mSigma & \mO_{(2H+R) \times (D-2H-R)}
    \end{bmatrix} \mV^\top   \mV \begin{bmatrix}
           \mSigma^{-1} \mU^\top \tgwssm \mW_{\text{in,1}}^\star \\ 
            \mQ
       \end{bmatrix} & \text{(\eqref{eq:wssm_svd} \& \eqref{eq:sol_ffn})} \\
      & =  \tgwssm \mW_{\text{in,1}}^\star, & \text{(Simplifying)}
\end{align}
which demonstrates that \eqref{eq:ffn_cond} is satisfied and completes the proof.
\end{proof}

\paragraph{Empirical Validation.}
To experimentally verify Lemma~\ref{lemma:s6_projection_informal}, we conduct a small-scale experiment. 
Specifically, we train Mamba 130M on three GLUE tasks—RTE, MRPC, and CoLA—for ten epoch under two settings: (1) training only the linear projection layer ($\Win$), and (2) training the S6 modules ($\Wb$, $\Wc$, $\Wdtu$). 
We experiment with various learning rates and, for each configuration, repeated the best-performing setting five times to ensure robustness. 
As shown in \cref{app:fig:win_expressivity_train} (training loss) and \cref{app:fig:win_expressivity_val} (validation metrics), our results confirm that optimizing only the linear projection layer is as expressive as training the S6 layers. 
In fact, in all cases, training only the linear projection not only matches, but even outperforms S6 layer training and converges more quickly.

\begin{figure}[ht]
    \centering
      
      \includegraphics[width=0.7\linewidth]{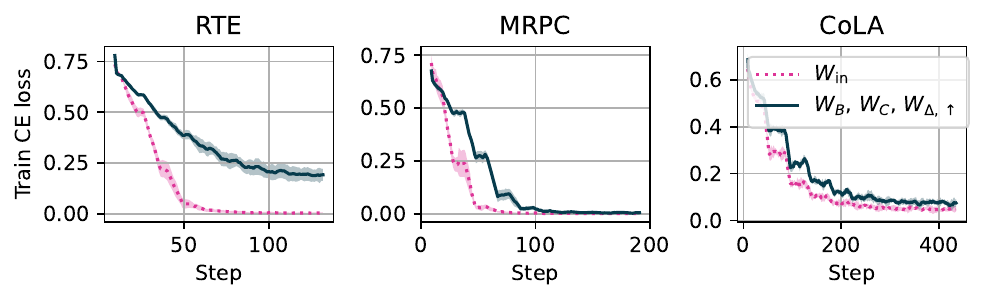} 
     
    \caption{
    \textbf{Training cross-entropy loss, with the shaded area indicating the standard deviation, for fine-tuning linear projection layers versus S6 layers.} The results show that $\Win$ matches the expressivity of $\Wb$, $\Wc$, and $\Wdtu$, while also achieving faster convergence.
    }
    \label{app:fig:win_expressivity_train}
\end{figure}

\begin{table}[ht]
    \centering

    \resizebox{0.4\linewidth}{!}{\begin{tabular}{cccc}
\toprule
{\textbf{Layers}} & \textbf{RTE} & \textbf{MRPC} & \textbf{CoLA} \\
\midrule
$\bm{W}_{\text{in}}$ & 69.9 ± 1.2 & 82.4 ± 0.7 & 61.1 ± 2.3 \\
$\Wb$, $\Wc$, $\Wdtu$ & 68.3 ± 0.8 & 79.9 ± 1.2 & 54.9 ± 1.5 \\
\bottomrule
\end{tabular}}
     
\caption{
\textbf{Mean and standard deviation of validation metrics for fine-tuning linear projection layers versus S6 layers.} The results demonstrate that $\Win$ effectively captures the expressivity of $\Wb$, $\Wc$, and $\Wdtu$, and achieves superior performance on validation metrics—accuracy for RTE and MRPC, and Matthews correlation coefficient for CoLA.
}
    \label{app:fig:win_expressivity_val}
\end{table}

\section{Details of Sec.~\ref{sec:ours}: \ours{}}
\subsection{Understanding the Roles of State Matrix \textbf{\textit{A}}, Input Transition Vector \textbf{\textit{B}}, and Output Mapping Vector \textbf{\textit{C}} for a Single Channel in S4 Modules}\label{app:understanding_s4}

\paragraph{Problem Setting.} 
Inspired by \citet{zeng2023expressive}'s theoretical analysis of LoRA's expressive power, we adopt a similar framework to explore the expressive potential of various parameters in the S4 model.
Specifically, we assume a target model that performs well on the intended task and a frozen model, which may be either pretrained or randomly initialized.
Our goal is to identify a parameter-efficient method to update the frozen model so that it becomes functionally equivalent to the target model.
In alignment with \citet{zeng2023expressive}, we assume that the frozen model's capacity is equal to or exceeds that of the target model. 
This assumption is based on two main considerations: 
(i) analytical tractability, which necessitates that the frozen model must have the potential to match the functionality of the target model, 
and (ii) a practical rationale, given that the models typically used in practice are often overparameterized.
Assume that both the target model and the frozen model are S4, with the target model having a hidden state dimension $\tgH$ and the frozen model having a hidden state dimension $H \geq \tgH$.
Meanwhile, suppose that all the hidden dimensions of both models are valid, meaning that none of the parameter elements are zero.
The target model, frozen model, and the updated model after tuning the parameters on the frozen model can be formulated using discretized parameters $\dA, \dB, \C$ as follows:
\begin{align}
\text{(Target model)}\quad & \tgf(\vx)_n = \sum_{m=1}^n \tgC \tgdA^{m-n} \tgdB x_m,\text{ where } \diag(\tgdA), \tgdB, \tgC \in \sR^{\tgH}, \\
\text{(Frozen model)}\quad &  \fzf(\vx)_n = \sum_{m=1}^n \C \dA^{m-n} \dB x_m,\text{ where } \diag(\dA), \dB, \C \in \sR^{H}, \\
\text{(Updated model)}\quad&  \udf(\vx)_n = \sum_{m=1}^n \udC \uddA^{m-n} \uddB x_m,\text{ where } \diag(\uddA), \uddB, \udC \in \sR^{H}.
\end{align}

\paragraph{Parameter Efficiency Analysis on S4.} 
Let $\gP^H$ denote the set of all $H\times H$ permutation matrices.
Given this formulation, we present our first analysis of parameter efficiency for the S4 model in the following lemma. 
This analysis is based on the parameters after necessary discretization $(\dA, \dB, \C)$.
For the reader’s convenience, we restate Lemma~\ref{lemma:discretization} below with minor notational changes to facilitate the proof. 

\setcounter{temp_lemma_counter}{\value{lemma}} 
\setcounter{lemma}{\numexpr\getrefnumber{lemma:discretization}-1\relax}  
\begin{lemma}[Minimal Parameter Adjustment for S4 Fine-Tuning]
    Consider the parameters after discretization, \ie{}, $\dA, \dB, \C$. 
    To achieve functional equivalence between the updated model and the target model, \ie{}, $\udf \equiv \tgf$, the minimum number of tunable parameters is: 
    \begin{equation}
    \resizebox{\textwidth}{!}{$
        \min_{\mP \in \gP^H} 
        \overbrace{
            \norm{\sqbrac{\mP^\top (\diag(\dA) \odot \dB \odot \C^\top)}_{(\tgH + 1): H}}_0 
        }^{\text{eliminating redundant dimensions}}
        + 
        \overbrace{
            \underbrace{
                \norm{\sqbrac{\mP^\top \dA \mP}_{1:\tgH, 1:\tgH} - \tgdA}_0 
            }_{\text{aligning the state matrix}}
            +
            \underbrace{
                \norm{ \sqbrac{\mP^\top (\dB \odot \C^\top)}_{1:\tgH} - \tgdB \odot \tgC^\top }_0
            }_{\text{aligning input-output interactions}}
        }^{\text{aligning used dimensions with target model}}.
    $}
    \end{equation}
\end{lemma}
\setcounter{lemma}{\value{temp_lemma_counter}} 
\begin{proof}[Proof of Lemma~\ref{lemma:discretization}]
    The key idea of this proof is straightforward. 
    To facilitate the analysis and update the frozen model to be equivalent to the target model, we first equalize the number of hidden state dimensions between the two models. This is achieved by expanding the target model's $\tgA$, $\tgB$, and $\tgC$ to match the $\dimh$ hidden state dimensions of the frozen model, padding the additional $\dimh - \tgdimh$ dimensions with zeros.

    Define $\odot$ as the element-wise product. 
    We can express the target model as:
    \begin{align}
         \tgf (\x)_n  & = \sum_{m=1}^n \begin{bmatrix}
            \tgC & \vzero^\top
        \end{bmatrix}
        \begin{bmatrix}
            \tgdA &  \mO \\
            \mO & \mO \\
        \end{bmatrix}^{n-m}
        \begin{bmatrix}
            \tgdB \\
            \vzero 
        \end{bmatrix} x_m  \\ 
        & = \sum_{m=1}^n \diag\left(\begin{bmatrix}
            \tgdA &  \mO \\
            \mO & \mO \\
        \end{bmatrix}\right)^{n-m} \left(\begin{bmatrix}
            \tgC^\top \\ \vzero
        \end{bmatrix} \odot 
        \begin{bmatrix}
            \tgdB \\ \vzero
        \end{bmatrix}
        \right) x_m .
    \end{align}
    Consider any permutation matrix $\mP \in \gP^{\dimh}$. 
    Applying $\mP$ to permute the frozen model leaves the model functionally unchanged:
    \begin{align}
         f_0 (\x)_n &= \sum_{m=1}^n \C \dA^{n-m} \dB x_m = \sum_{m=1}^n\C \mP \left(\mP^\top \dA \mP \right)^{n-m} \mP^\top \dB x_m\\
         & = \sum_{m=1}^n \diag\left(\mP^\top \dA \mP \right)^{n-m} \left((\mP^\top\C^\top) \odot (\mP^\top \dB)
        \right) x_m .
    \end{align}
    Due to the convolution structure of $\dA$, two models are functionally equivalent if and only if $\mP^\top \dA \mP$ aligns with $\begin{bmatrix}
            \tgdA &  \mO \\
            \mO & \mO \\
        \end{bmatrix}$, and $(\mP^\top\C^\top) \odot (\mP^\top \dB)$ align with $\begin{bmatrix}
            \tgC^\top \\ \vzero
        \end{bmatrix} \odot 
        \begin{bmatrix}
            \tgdB \\ \vzero
        \end{bmatrix}$ for some $P \in \gP^{\dimh}$.
    If they are already matching or partially matched for certain entries, no updates are required for those entries; only the unmatched entries need to be updated. 
    Then, the required trainable parameters for this permutation matrix $\mP$ are:
    \begin{equation}
    \resizebox{\textwidth}{!}{$
        \norm{\sqbrac{\mP^\top (\diag(\dA) \odot \dB \odot \C^\top)}_{(\tgH + 1): H}}_0 + 
        \norm{\sqbrac{\mP^\top \dA \mP}_{1:\tgH, 1:\tgH} - \tgdA}_0 +
        \norm{ \sqbrac{\mP^\top (\dB \odot \C^\top)}_{1:\tgH} - \tgdB \odot \tgC^\top }_0.
    $}
    \end{equation}
    Optimizing the permutation matrix $\mP \in \gP^H$ yields the desired results.
\end{proof}

This lemma highlights the significance of identifying essential hidden state dimensions. 
The term $\norm{\sqbrac{\mP^\top (\diag(\dA) \odot \dB \odot \C^\top)}_{(\tgH + 1): H}}_0$ underscores the importance of excluding redundant dimensions. 
This can be achieved by either directly removing these dimensions from the state matrix $\dA$, or by updating $\dB$ or $\C$ to ensure that only the selected hidden state dimensions are utilized during the input transition or output mapping phases. 
Once redundant dimensions are filtered out, tuning only the essential dimensions is sufficient to align the updated model with the target model.

Furthermore, based on the lemma, the roles of the input transition vector $\dB$ and $\C^\top$ are nearly identical, as they consistently appear together as the combined term $\dB \odot \C^\top$, which is also discussed in \citet{gupta2022dss}. 
Consequently, one could opt to tune either $\dB$ or $\C$ exclusively or alternatively, split the indices into two groups, tuning $\dB$ for the first group and $\C$ for the second.
Both vectors indicate how information from different hidden state dimensions is integrated, whereas $\dA$ plays a distinct role, determining how the hidden states are stored.  

In practice, instead of directly using the discretized parameters $\dA, \dB, \C$, S4 is implemented using the continuous parameters $\A, \B, \C$ with step size $\Delta$. 
To provide further practical guidance on parameter tuning, the following two lemmas analyze the parameter efficiency of continuous parameters under different discretization methods.
Two exemplary methods of discretization are bilinear and zero-order hold (ZOH):
\begin{equation}\label{eq:discretization}
\begin{aligned}
        \text{(Bilinear)}~ 
        \begin{cases}
            \dA =(\mI-\Delta/ 2 \A)^{-1}(\mI+\Delta/2 \A)\\
            \dB=(\mI-\Delta/2 \A)^{-1} \cdot \Delta \B,
        \end{cases}
         \hfill(\text {ZOH})~  
         \begin{cases} 
            \dA =\exp (\Delta \A) \\
            \dB =(\Delta \A)^{-1}(\exp (\Delta \A)-\mI) \cdot \Delta \B.
         \end{cases}
\end{aligned}
\end{equation}

\begin{lemma}[Essential Continuous Parameter Set for S4 with Bilinear Discritization] \label{lemma:bilinear_s4}
    Consider the parameters before discretization, \ie{}, $\A, \B, \C$, which are subsequently discretized using bilinear discretization. 
    To achieve functional equivalence between the updated model and the target model, \ie{}, $\udf \equiv \tgf$, it is sufficient to tune the following number of parameters: 
    \begin{equation}
    \resizebox{\textwidth}{!}{$
        \min_{\mP \in \gP^H} 
        \overbrace{
            \norm{\sqbrac{\Delta \mP^\top(\diag(\mI + \Delta/2\A) \odot \B \odot \C^\top)}_{(\tgH + 1): H}}_0 
        }^{\text{eliminating redundant dimensions}}
        + 
        \overbrace{
            \underbrace{
                \norm{\sqbrac{\mP^\top \A \mP}_{1:\tgH, 1:\tgH} - \tgA}_0 
            }_{\text{aligning the state matrix}}
            +
            \underbrace{
                \norm{ \sqbrac{\mP^\top (\B \odot \C^\top)}_{1:\tgH} - \tgB \odot \tgC^\top }_0
            }_{\text{aligning input-output interactions}}
        }^{\text{aligning used dimensions with target model}}.
    $}
    \end{equation}
\end{lemma}
\begin{proof}[Proof of Lemma~\ref{lemma:bilinear_s4}]
Combining Lemma~\ref{lemma:discretization} and the Bilinear discretization method in \eqref{eq:discretization} yields the desired results. 
\end{proof}

\begin{lemma}[Essential Continuous Parameter Set for S4 with ZOH Discritization] 
\label{lemma:zoh_s4}
    Consider the parameters before discretization, \ie{}, $\A, \B, \C$, which are subsequently discretized using ZOH discretization. 
    To achieve functional equivalence between the updated model and the target model, \ie{}, $\udf \equiv \tgf$, it is sufficient to tune the following number of parameters: 
    \begin{equation}
    \resizebox{\textwidth}{!}{$
        \min_{\mP \in \gP^H} 
        \overbrace{
            \norm{\sqbrac{\Delta \mP^\top(\diag(\exp(\Delta\A) -\mI) \odot \B \odot \C^\top)}_{(\tgH + 1): H}}_0 
        }^{\text{eliminating redundant dimensions}}
        + 
        \overbrace{
            \underbrace{
                \norm{\sqbrac{\mP^\top \A \mP}_{1:\tgH, 1:\tgH} - \tgA}_0 
            }_{\text{aligning the state matrix}}
            +
            \underbrace{
                \norm{ \sqbrac{\mP^\top (\B \odot \C^\top)}_{1:\tgH} - \tgB \odot \tgC^\top }_0
            }_{\text{aligning input-output interactions}}
        }^{\text{aligning used dimensions with target model}}.
    $}
    \end{equation}
\end{lemma}
\begin{proof}[Proof of Lemma~\ref{lemma:zoh_s4}]
Combining Lemma~\ref{lemma:discretization} and the ZOH discretization method in \eqref{eq:discretization} yields the desired results. 
\end{proof}
The insights provided by Lemma~\ref{lemma:bilinear_s4} and Lemma~\ref{lemma:zoh_s4} are the same as those provided by Lemma~\ref{lemma:discretization}.
The analysis here supports the second step of \ours{}-P presented in Sec.~\ref{sec:ours}. 

\subsection{Extension to Deep S4 Models}\label{app:ours_theory_s4}
Our previous analysis focused on single-channel S4 models. 
We now expand our investigation to more complex scenarios involving deep S4 models for both target and frozen architectures, incorporating $D$ channels and varying layer depths.
In this section, in addition to \ours{}-P, we introduce \ours{}+.
The key distinction between \ours{}+ and \ours{}-P lies in their treatment of linear projection matrices. 
\ours{}-P only operate on SSM modules and additionally requires applying LoRA to modify the linear projection matrices. 
In contrast, \ours{}+ applies \ours{}-P on SSM modules while also updates the columns of weight matrices corresponding to the updatable channels identified through Alg.~\ref{alg:ours_zero}. 
It is worth noting that the linear projection matrix updates in \ours{}+ are inherently low-rank, making it a specialized case of \ours{}-P combined with LoRA.
Our analysis starts with \ours{}+, and it automatically applies to \ours{}-P combined with LoRA.

In this analysis, we assume that each input token $x_t$ belongs to $\gX$, a bounded subset of $\sR^D$, and that the length of the input sequence is finite.
Let the frozen model have $L$ layers, and the target model have $\tgdiml$ layers, where $L \geq \tgdiml$.
Similar to the technique used in \citet{zeng2023expressive} and \citet{giannou2023expressive}. 
The basic idea of updating the frozen model to match the functionality of the target model is to utilize every $\ceil{L/\tgdiml}$ layers of the frozen model to approximate every layer of the target model. 
We start introducing this proof idea from the simplest case where $\tgdiml = 1, L=D$.
In this scenario, we can simply choose one different channel to tune and maintain all other channels at zero at every layer.
The outputs from the various channels of the deep S4 layers are then combined through a residual connection. 
This proof idea inspires us to perform channel selection and make use of the residual connections, which is the first and third step of \ours{}-P presented in Sec.~\ref{sec:ours}. 
Building on this idea, we present the following results for when the target model has only $\tgdiml = 1$ layer, and $L=D=2$. 

\begin{lemma}\label{lemma:s4_sdt_one_layer}
    Consider a $\dimd$-dimensional input sequence. 
    Assume that the linear layers in the model have linear activation functions. 
    Using \ours{}+, any deep S4 model with $\dimh$ hidden states per channel and $\diml$ layers can be updated to accurately present any target one-layer deep S4 model without residual connections, having a reduced hidden state dimension $\tgdimh < \dimh$. 
    Then this can be achieved by selectively fine-tuning at most $\ceil{D/L}$ channels, $\tgdimh$ hidden states, and residual connections at each layer, while additionally fully fine-tuning the linear projection matrix of the last layer only.
\end{lemma}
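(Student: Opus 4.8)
The plan is to emulate the single target layer using all $\diml$ layers of the frozen model, exploiting the fact that the target's S4 module acts independently across the $\dimd$ channels. First I would invoke Lemma~\ref{lemma:discretization} channel by channel: since the target uses only $\tgdimh < \dimh$ hidden states per channel, each target channel SSM can be reproduced on a frozen channel by zeroing the $\dimh - \tgdimh$ redundant states (setting the corresponding entries of $\C\dth$ to zero) and aligning the remaining $\tgdimh$ states with $(\tgdA, \tgdB, \tgC)$. I would then partition the $\dimd$ output channels into $\diml$ groups $\set{S_l}_{l=1}^{\diml}$ with $|S_l| \le \ceil{\dimd/\diml}$, assigning group $S_l$ to be the channels whose target SSM is realized at layer $l$; every channel outside $S_l$ has its S4 contribution switched off at layer $l$ by setting $\C\dth = \vzero$, which is exactly the ``set dimensions to zero'' step of Alg.~\ref{alg:ours}. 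This already enforces the per-layer channel budget and the $\tgdimh$-state budget claimed in the statement.

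Next I would track the hidden representation through the layers and use the tunable residual coefficients $\vu$ together with the columns of $\mW$ indexed by the active channels to route information. The invariant to maintain is that, entering layer $l$, each coordinate either still carries the original input component $x\dth$ (for channels not yet processed, so that the SSMs at layer $l$ can read the correct, uncorrupted input) or carries the scalar S4 output $s\dth$ already computed at an earlier layer. For the seed case $\diml = \dimd = 2$ highlighted in the paper, layer $1$ computes the channel-$1$ output while forwarding the channel-$2$ input through its residual, layer $2$ computes the channel-$2$ output, and the fully fine-tuned projection of the last layer together with the bias then produces $\tgW\,\operatorname{S4}^\star + \tgbeta$. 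I would lift this to general $\dimd,\diml$ by induction over the groups $S_1,\dots,S_{\diml}$, preserving the invariant at each step and reserving the last layer's unrestricted projection to apply the target mixing matrix $\tgW$ and bias $\tgbeta$. The linear-activation assumption is used here so that every such composition of projections, residuals, and S4 maps remains exactly linear.

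The main obstacle will be realizing the target's full cross-channel mixing $\tgW$. Each channel's S4 output is a single scalar produced at one layer, the residual connections act elementwise and therefore cannot spread a scalar across output coordinates, and at intermediate layers only the columns of $\mW$ corresponding to active channels may be modified. The crux is thus to arrange the routing so that, at the final layer, every channel's scalar output is simultaneously available to be multiplied by the corresponding column of $\tgW$; this is precisely why the statement permits the last layer's projection to be fine-tuned without restriction. I expect the delicate bookkeeping to be proving that one can always keep each still-needed input coordinate uncorrupted while the already-computed outputs are marshalled into the final projection, and that zeroing redundant channels and states never destroys a signal required downstream. Verifying that this invariant is consistent across all $\diml$ layers is the technical heart of the argument.
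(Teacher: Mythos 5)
Your proposal follows essentially the same route as the paper's proof: inactive channels are silenced by setting $\C^{(d)}=\vzero$, each active channel is matched to its target counterpart via Lemma~\ref{lemma:discretization}, unprocessed input coordinates are forwarded through the tunable residual coefficients $\vu$, and the fully fine-tuned last-layer projection applies $\tgW$ and $\tgbeta$; the paper simply presents the $\diml=\dimd$ (one channel per layer) case first and then notes the extension to groups of $\ceil{\dimd/\diml}$ channels, which is exactly your partition $\set{S_l}$. The final-layer ``marshalling'' difficulty you flag (getting every already-computed scalar $s^{(d)}$ to appear in the last layer's S4 output so that $\tgW$ can mix them) is likewise left implicit in the paper's own argument, so your plan is faithful to it.
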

\begin{proof}[Proof of Lemma~\ref{lemma:s4_sdt_one_layer}]
    In this proof, we start by considering the case where $L=D$.
    In this case,
    we update a single distinct channel for each layer while setting the other channels to zero. 
    Essentially, we modify the frozen model so that each layer corresponds to and functions as an individual channel in the target model.
   To be more specific, we fully update the first channel in the first layer to match the first channel of the target model, second channel in the second layer to match the second channel of the target model, so on and so forth. 

   For the $l$-th layer of the frozen model , we append subscript $l$ to all parameters of the deep S4 layer as introduced in \eqref{eq:deep_s4}. 
   For the $d$-th channel, corresponding notations are denoted with a superscript $(d)$. We define the $t$-th intermediate output token of the $l$-th deep S4 layer as $\vz_{l,t} \in \sR^D$. Additionally, the updated S4 module in layer $l$ is denoted as $\udsf_l$, with $\udsf_{l,t}$ referring specifically to the sub-function that outputs the $t$-th token.
   Therefore, for the $t$-th intermediate output token of the $l$-th deep S4 layer of the updated model can be written as 
   \begin{align}\label{eq:vz_1t}
   \begin{split}
       \vz_{l,t} &= \udW_l \cdot 
           \udsf_{l,t}(\vz_{l-1,1}, \ldots, \vz_{l-1,t}) 
       + \udbeta_l  + \udu_l \odot \vz_{l-1,t} \\ 
       &= \udW_l \cdot \begin{bmatrix}
           \udsf_{l,t}^{(1)}(z_{l-1,1}^{(1)}, \ldots, z_{l-1,t}^{(1)}) \\
           \vdots \\ 
           \udsf_{l,t}^{(D)}(z_{l-1,1}^{(D)}, \ldots, z_{l-1,t}^{(D)}) \\
       \end{bmatrix}+ \udbeta_l  + \udu_l \odot \vz_{l-1,t},
   \end{split}
   \end{align}
   where $\udW_l \in \sR^{D \times D}, \udbeta_l\in \sR^D$ are the updated weight and biases of the $l$-th layer of the frozen model, and $\udu_l \in \sR^D$ is the updated residual connection weight of the frozen model.

    \paragraph{For layers $l < L=D$.}
    We follow the steps provided in Sec.~\ref{sec:ours} to update the $l$-th layer of the frozen model such that it functionally equivalent to the $l$-th channel of the target model.
    For the reader's convinence, we restate our strategies here:
    \begin{itemize}[leftmargin=*]
    \item \textbf{(Channel Selection)} Select $\dimd' \leq \dimd$ ($\dimd' = 1$ here) important channels for making predictions. 
    Any channel $d$ that is not utilized will have their corresponding $\C\dth$ set to zero, eliminating the need to update parameters for $\A\dth$ and the $d$-th column of $\mW$. 
    To be more specific, we let $\C\dth = \vzero$ for all $d \neq l$ in this scenario. 
    \item \textbf{(Hidden State Selection)} Within the selected channels, select $\dimh' \leq \dimh$ important hidden states. 
    For any hidden state that is not used within a selected channel $d$, the corresponding element in $\C\dth$ will be set to zero, thus eliminating the need to tune the corresponding element in $\A\dth$.
    To be more specific, we can achieve $\udsf_{l,t}^{(l)}(\cdot) = \sf_{\star,t}^{(l)}(\cdot)$ by Lemma~\ref{lemma:discretization}.
    \item \textbf{(Residual and Bias Tuning)} Regardless of other selections, \ours{} consistently tunes the coefficients of residual connections and biases in linear projections, as these components contain a negligible number of parameters. 
    In this scenario, we let $\udbeta_l = \vzero,~
        \udu_l = \begin{bmatrix}
        \smash{
        \underbrace{\begin{matrix}
            1 & 
            \cdots &  
            1  \end{matrix} }_{l-1 \text{ elements} }}
            & 0 & 
            \smash{\underbrace{
            \begin{matrix}
                1 & 
            \cdots &  
            1
            \end{matrix}
            }_{D-l \text{ elements}}
            }
        \end{bmatrix}^\top.$
    \end{itemize}
    \vspace{0.1in}
    
    This construction yields 
    \begin{align}
        \vz_{l,t} = 
        \begin{bmatrix}
             z_{l-1, t}^{(1)}& 
            \ldots & 
            z_{l-1, t}^{(l-1)} &  
            \sf_{\star,t}^{(l)}(z_{l,1}^{(l)}, \ldots, z_{l,t}^{(l)}) & 
            z_{l-1, t}^{(l+1)} & 
            \ldots & 
            z_{l-1, t}^{(D)}
        \end{bmatrix}^\top.
    \end{align}
    Consequently, only the $l$-th channel is active in the $l$-th layer, while all other layers function as identity mappings, propagating the output of the preceding layer without modification.

   \paragraph{For layer $l = L = D$.}
   Based on the setup of the first $L-1$ layers, we have 
   \begin{align}
       \vz_{L-1,t} = \begin{bmatrix}
           \sf_{\star,t}^{(1)}(x^{(1)}) & \cdots & 
           \sf_{\star,t}^{(L-1)}(x^{(L-1)}) & 
           x^{(L)}
       \end{bmatrix}^\top.
   \end{align}
   For the last layer, we let
   \begin{gather}
       \udW_L = \tgW,\quad \udbeta_L = \tgbeta, \quad \udu_L = \vzero, \\
        \udsf_{L,t}^{(L)}(\cdot) = \sf_{\star,t}^{(L)}(\cdot),~\text{which can be achieved by Lemma~\ref{lemma:discretization}}.
   \end{gather}
   It is easy to verify that the output of the updated frozen model is identical to the output of the target model, \ie, 
   \begin{align}
       \y_t = \vz_{L,t} = \tgW \begin{bmatrix}
           \sf_{\star,t}^{(1)}(x^{(1)}) & \cdots & 
           \sf_{\star,t}^{(L-1)}(x^{(L-1)}) & 
           \sf_{\star,t}^{(L)}(x^{(L)})
       \end{bmatrix}^\top + \tgbeta.
   \end{align}
   
Thus far, we have demonstrated that the statement holds when $L=D$. 
This analysis can be readily extended to cases where $L \neq D$ by tuning $\ceil{D/L}$ channels at each layer.
For example, when $L = D/2$, we can tune two channels per layer using a construction similar to the one described above. This generalization completes the proof.
\end{proof}

\begin{theorem}[Expressive Power of \ours{}+ on Deep S4 Models]\label{thm:s4_sdt}
    Consider a $\dimd$-dimensional input sequence.
    Assume that the linear layers in the model have linear activation functions. 
    Using \ours{}+, any deep S4 model with $\dimh$ hidden states per channel and $\diml$ layers can be updated to accurately present any target deep S4 model without residual connections, having a reduced hidden state dimension $\tgdimh < \dimh$, and fewer layers $\tgdiml < \diml$. 
    This can be achieved by selectively fine-tuning at most $\ceil{\dimd\tgdiml/\diml}$ channels, $\tgdimh$ hidden states, and residual connections at each layer.
\end{theorem}
\begin{proof}[Proof of Theorem~\ref{thm:s4_sdt}]
We update every $\ceil{D/L}$ layers of the frozen model to approximate each layer of the target model.
By applying Lemma~\ref{lemma:s4_sdt_one_layer} iteratively to each set of $\ceil{D/L}$ layers, we obtain the desired result.
\end{proof}

Theorem~\ref{thm:s4_sdt} leads to the following result, which represents the deep S4 model case of Theorem~\ref{thm:s4_mixture}.

\begin{theorem}[Expressive Power of \ours{}-P on Deep S4 Models]\label{thm:main_s4}
    Consider a $\dimd$-dimensional input sequence.
    Assume that the linear layers in the model have linear activation functions.
    Using \ours{}-P, any deep S4 model with $\dimh$ hidden states per channel and $\diml$ layers can be updated to accurately present any target deep S4 model without residual connections, having a reduced hidden state dimension $\tgdimh < \dimh$, and fewer layers $\tgdiml < \diml$. 
    This can be achieved by selectively fine-tuning at most $\ceil{\dimd\tgdiml/\diml}$ channels, $\tgdimh$ hidden states on SSM modules, applying rank-$\ceil{\frac{\diml}{\tgdiml}}$ updates on linear projection matrices and updating residual connections and biases at each layer, while additionally fully fine-tuning the linear projection matrix of the last layer only.
\end{theorem}

\begin{proof}[Proof of Theorem~\ref{thm:s4_mixture}]
    Since \ours{}+ is a special case of \ours{}-P, Theorem~\ref{thm:s4_sdt} directly implies the desired statement.
\end{proof}

\subsection{Extension to S6}\label{app:ours_s6_ssd}
In this section, we extend the discussion of \ours{}-P and \ours{}+ to S6, following the same logic. 
We begin by proving results for \ours{}+ in the scenario where the target model consists of only a single layer. 
In doing so, we extend Theorem~\ref{thm:main_s4} to apply to deep S6 models by first generalizing Lemma~\ref{lemma:s4_sdt_one_layer} to Lemma~\ref{lemma:s6_sdt_one_layer}.

\begin{lemma}\label{lemma:s6_sdt_one_layer}
    Consider a $\dimd$-dimensional input sequence. 
    Assume that the linear layers in the model have linear activation functions. 
    Using \ours{}+, any deep S6 model with $\dimh$ hidden states per channel and $\diml$ layers can be updated to accurately present any target one-layer deep S6 model without residual connections, having a reduced hidden state dimension $\tgdimh < \dimh$. 
    Then this can be achieved by selectively fine-tuning at most $\ceil{D/L}$ channels, $\tgdimh$ hidden states, and residual connections at each layer, while additionally fully fine-tuning the linear projection matrix of the last layer only.
\end{lemma}
\begin{proof}[Proof of Lemma~\ref{lemma:s6_sdt_one_layer}]
    To demonstrate this, we can follow the same proof strategy as in the proof of Lemma~\ref{lemma:s4_sdt_one_layer}. In particular, the $t$-th intermediate output token of the $l$-th deep S6 layer of the updated model can be similarly written as 
   \begin{align}\label{eq:s6_vz_1t}
   \begin{split}
       \vz_{l,t} &= \udW_l \cdot 
           \udss_{l,t}(\vz_{l-1,1}, \ldots, \vz_{l-1,t}) 
       + \udbeta_l  + \udu_l \odot \vz_{l-1,t} \\ 
       &= \udW_l \cdot \begin{bmatrix}
           \udss_{l,t}^{(1)}(z_{l-1,1}^{(1)}, \ldots, z_{l-1,t}^{(1)}) \\
           \vdots \\ 
           \udss_{l,t}^{(D)}(z_{l-1,1}^{(D)}, \ldots, z_{l-1,t}^{(D)}) \\
       \end{bmatrix}+ \udbeta_l  + \udu_l \odot \vz_{l-1,t},
   \end{split}
   \end{align}
   where $\udW_l \in \sR^{D \times D}, \udbeta_l\in \sR^D$ are the updated weight and biases of the $l$-th layer of the frozen model, and $\udu_l \in \sR^D$ is the updated residual connection weight of the frozen model.

   \paragraph{For layers $l < L=D$.}
    We follow the steps provided in Sec.~\ref{sec:ours} to update the $l$-th layer of the frozen model such that it functionally equivalent to the $l$-th channel of the target model.
    For the reader's convinence, we restate our strategies here:
    \begin{itemize}[leftmargin=*]
    \item \textbf{(Channel Selection)} Select $\dimd' \leq \dimd$ ($\dimd' = 1$ here) important channels for making predictions. For any channel $d$ that is not utilized, rather than directly setting the corresponding $\C\dth$ to zero as in the deep S4 model, we instead set $\vbeta^{(d)}_{\dt}$ to be sufficiently large. According to the computation of SSM parameters described in \eqref{eq:theta}, this ensures that $\dB_n\dth$ is set to zero for all $d \neq l$ in this scenario.
    This approach is equivalent to setting $\C^{(d)}$ to zero, as both result in the channel producing all zeros.
    \item \textbf{(Hidden State Selection)} Within the selected channels, select $\dimh' \leq \dimh$ important hidden states. 
    For any hidden state that is not used within a selected channel $d$, the corresponding entries in $\A\dth$ will be set to sufficiently small.
    To be more specific, we can achieve $\udss_{l,t}^{(l)}(\cdot) = \ss_{\star,t}^{(l)}(\cdot)$ by leveraging the discretized parameters.
    Lemma~\ref{lemma:discretization} provides the conditions for this equality to hold by updating $\dA$, $\dB$, and $\C$ for the $l$-th channel, where these parameters are computed as follows:
    \begin{gather}\label{eq:theta}
    \dt_n = \softplus(\Wdtd \Wdtu \x_n +\vbeta_{\dt}), \\ \dA_n\dth = \exp(\Delta_n\dth \A\dth), \quad \dB_n\dth = \Delta_n\dth \Wb \x_n, \quad \C_n = \Wc \x_n,
    \end{gather}
    Therefore, we can achieve $\udss_{l,t}^{(l)}(\cdot) = \ss_{\star,t}^{(l)}(\cdot)$ by only updating the corresponding values or columns of the weight matrices for each channel and dimension.
    \item \textbf{(Residual and Bias Tuning)} Regardless of other selections, \ours{}+ consistently tunes the coefficients of residual connections and biases in linear projections, as these components contain a negligible number of parameters. 
    In this scenario, we let $\udbeta_l = \vzero,~
        \udu_l = \begin{bmatrix}
        \smash{
        \underbrace{\begin{matrix}
            1 & 
            \cdots &  
            1  \end{matrix} }_{l-1 \text{ elements} }}
            & 0 & 
            \smash{\underbrace{
            \begin{matrix}
                1 & 
            \cdots &  
            1
            \end{matrix}
            }_{D-l \text{ elements}}
            }
        \end{bmatrix}^\top.$
    \end{itemize}
    
    This construction yields 
    \begin{align}
        \vz_{l,t} = 
        \begin{bmatrix}
             z_{l-1, t}^{(1)}& 
            \ldots & 
            z_{l-1, t}^{(l-1)} &  
            \sf_{\star,t}^{(l)}(z_{l,1}^{(l)}, \ldots, z_{l,t}^{(l)}) & 
            z_{l-1, t}^{(l+1)} & 
            \ldots & 
            z_{l-1, t}^{(D)}
        \end{bmatrix}^\top.
    \end{align}
    For the remaining layers, following the same steps leads to the desired results. 
 \end{proof}

Therefore, we similarly obtain the following two results. 

\begin{theorem}[Expressive Power of \ours{}+ on Deep S6 Models]\label{thm:s6_sdt}
    Consider a $\dimd$-dimensional input sequence.
    Assume that the linear layers in the model have linear activation functions. 
    Using \ours{}+, any deep S6 model with $\dimh$ hidden states per channel and $\diml$ layers can be updated to accurately present any target deep S6 model without residual connections, having a reduced hidden state dimension $\tgdimh < \dimh$, and fewer layers $\tgdiml < \diml$. 
    This can be achieved by selectively fine-tuning at most $\ceil{\dimd\tgdiml/\diml}$ channels, $\tgdimh$ hidden states, and residual connections at each layer.
\end{theorem}

\begin{theorem}[Expressive Power of \ours{}-P on Deep S6 Models]\label{thm:main_s6}
    Consider a $\dimd$-dimensional input sequence.
    Assume that the linear layers in the model have linear activation functions.
    Using \ours{}-P, any deep S6 model with $\dimh$ hidden states per channel and $\diml$ layers can be updated to accurately present any target deep S6 model without residual connections, having a reduced hidden state dimension $\tgdimh < \dimh$, and fewer layers $\tgdiml < \diml$. 
    This can be achieved by selectively fine-tuning at most $\ceil{\dimd\tgdiml/\diml}$ channels, $\tgdimh$ hidden states on SSM modules, applying rank-$\ceil{\frac{\diml}{\tgdiml}}$ updates on linear projection matrices and updating residual connections and biases at each layer, while additionally fully fine-tuning the linear projection matrix of the last layer only.
\end{theorem}

Combining Theorem~\ref{thm:main_s4} and \ref{thm:main_s6} leads to Theorem~\ref{thm:s4_mixture}.

\subsection{Sparse Dimension Tuning and Pruning (\ours{}-P)}\label{app:ours}

\cref{alg:ours_zero} is our extended algorithm, which includes setting dimensions to zero. However, in practical settings, setting channels to zero is not necessary and omitting it reduces number of hyperparameters, as pruning parameters is
effectively equivalent to training them to zero.

\begin{algorithm}[H]
\caption{Dimension Selection Algorithm of \ours{}-P}
\label{alg:ours_zero}
\KwIn{A small subset of dataset $\gD$, warmup epochs $E$, number of layers $L$, total channels $D$, total states $H$, state sparsity $\beta_0$, channel sparsity $\alpha_0$, state update fraction $\beta$, channel update fraction $\alpha$}
\BlankLine
\tcc{Warmup Epochs}
Perform full update on SSM modules using $\gD$ for $E$ epochs\;
\tcc{Categorize dimensions}
\For{$l = 1$ \KwTo $L$}{
\tcc{Set dimensions as zero}
Sort channels based on $\Vert\dA\dth\Vert$ \;

Select final $(1-\beta_0) D$ channels as zero channels and denote non-zero channels as set $\sD$\;

\For{$d\in \sD$}{
Sort states based on magnitude of $\bar{A}\dth_h$ at each state dimension\;

Select final $(1-\alpha_0) H$ states as zero states and denote non-zero states as set $\sH$\;
}
\tcc{Unfreeze dimensions}
Sort non-zero channels $\sD$ based on changes of $\Vert\dA\dth\Vert$\;

Select the top $\beta |\sD|$ channels as updatable, denoted by $\sD'$\;

\For{$d \in \sD'$}{
Sort non-zero state dimensions based on changes of $\Vert\dA\dth\Vert$\;

Select the top $\alpha |\sH|$ states as updatable at the $d$-th channel\;
}
}
\end{algorithm}

\subsection{Extension to S5}\label{app:s5}
In this part, we extend Lemma~\ref{lemma:discretization} and Theorem~\ref{thm:s4_mixture} to two corresponding results for S5.
The extension follows the same procedure as the previous proof, so we omit the details here.

\begin{lemma}[Minimal Parameter Adjustment for S5 Fine-Tuning]
    Assume all hidden dimensions of the target model $f^\star$ are non-zero, i.e., all elements of $\diag(\tgdA) \odot \tgdB\dth \odot \tgC\dth$ are non-zero. 
    To update frozen model $\fzf$ such that it becomes functionally equivalent to the target model $\tgf$, the minimum number of tunable parameters is:
    \vspace{.1in}
    \resizebox{\linewidth}{!}{
    $
         \min_{\dA, \dB, \C} 
         \norm{\sqbrac{\dA}_{1:\tgH, 1:\tgH} - \tgdA}_0 +  \sum_{d=1}^D \left(\overbrace{
            \norm{\sqbrac{\diag(\dA) \odot \dB\dth \odot \C^{(d)\top}}_{(\tgH + 1): H}}_0 
        }^{\text{eliminating redundant dimensions}}
         + 
        \overbrace{
            \norm{\sqbrac{\dB\dth \odot \C^{(d)\top}}_{1:\tgH} - \tgdB\dth \odot \tgC^{(d)\top} }_0
        }^{\text{ aligning remaining dimensions with target model}}\right),
    $
    }
    \begin{gather}
        \textnormal{subject to}\quad
         (\dA, \dB, \C) \in \set{(\mP^\top \fzdA \mP, \mP^\top \fzdB, \fzC \mP ): \mP \text{ is a permutation matrix}}. 
    \end{gather}
\end{lemma}
\begin{theorem}[Expressive Power of \ours{}-P with LoRA on Simplified SSM-based Models]
Assume all layers use linear activations.
Let $\fzf$ be a frozen deep S4 S5, or S6 model with $\diml$ layers, each containing $\dimh$ hidden states per channel.
Let $\tgf$ be a smaller target model of the same type (S4, S5 or S6), with no residual connections, $\tgdiml < \diml$ layers, and $\tgdimh < \dimh$ hidden states per channel.
Then, there exists a set of parameter updates to $\fzf$ satisfying the following conditions such that for any finite-length input sequence $\mX = (\vx_1, \ldots, \vx_N)$ with $\vx_n \in \gX \subset \sR^{\dimd}$, where $\gX$ is bounded, the resulting model $f$ satisfies $f(\mX) = \tgf(\mX)$:
\begin{enumerate}[leftmargin=*, itemsep=-1pt, parsep=0pt, topsep=0pt] 
\item \textbf{(\ours{}-P on SSM)} In each SSM module, update at most $\ceil{\dimd \tgdiml / \diml}$ channels. Within each updated channel, fine-tune at most $\tgdimh$ hidden states and set the rest to zero.
\item \textbf{(\lorav{} on Linear Projections)} Apply rank-$\ceil{\diml / \tgdiml}$ updates to each linear projection matrix.
\item \textbf{(Minimal Additional Updates)} Modify only the residual connections, per-layer biases, and the final-layer output projection.
\end{enumerate} 
\end{theorem}

\subsection{Memory Usage and Runtime Analysis of \ours{}}\label{app:memory_speed}

\paragraph{Memory Usage Analysis.}
To assess the memory usage of \ours{} and LoRA, we conducted experiments on four different models, including both SSM and hybrid architectures. 
For each model and method, a dataset was generated with 2,500 batches of data samples, each batch comprising a random sequence of 1,500 tokens. 
The simulation was repeated four times, including dataset generation. 
All experiments were carried out on a single H100 GPU, and the reported metrics represent averages across the four simulations. 
Consistent with our previous experiments, we used the original hyperparameter settings, ensuring that \ours{} includes similar trainable parameters than LoRA.
The memory usage of LoRA and \ours{} is presented in \cref{tab:app_memory}. 
Our observations indicate that \ours{} requires less memory than LoRA. This difference can be attributed to the design of the LoRA adapters, which involve matrix multiplication of two low-rank matrices. In contrast, tuning SSM with the same number of parameters does not require any matrix multiplication, resulting in lower memory usage.

\begin{table}[ht]
    \centering
    \resizebox{0.7\linewidth}{!}{

\begin{tabular}{c|c|c|c|c}
\toprule
Memory Usage (GB) & Mamba-130M & Mamba-1.4B & Jamba-Tiny-319M & Jamba-Mini-52B \\
\midrule
LoRA & 7.753 & 37.167 & 7.207 & 71.986 \\
LoRA \& \ours{} & \textbf{5.738} & \textbf{26.491} & \textbf{6.605} & \textbf{67.193} \\
\bottomrule
\end{tabular}
        }
\caption{\textbf{Memory usage comparison between \ours{} and LoRA on various models.} \textbf{Bold} numbers indicate the lowest memory usage for each model.}
\label{tab:app_memory}
\end{table}

To provide a more fine-grained view, we further analyze how sequence length affects peak memory usage for different Mamba model sizes, as shown in \cref{fig:train_mem}. 
We measure the memory required to process a single training batch with varying context lengths using randomly generated data. 
Each batch contains four examples, with 90\% of tokens used as input and 10\% as output (loss is computed only on the output tokens).
The experimental settings for both LoRA and SDT follow the setup described in \cref{sec:real_world_exps}. 
We evaluate three configurations for each method, matched in parameter budget. 
In the plot, each line represents the average across the three configurations, and the shaded region for LoRA shows the range (minimum to maximum). 
SDT shows negligible variance across configurations, so no shading is included. 
All models are trained for 500 iterations, and results are averaged over these iterations. Experiments were conducted on an NVIDIA H100 80GB GPU.

\begin{figure}
    \centering
    \includegraphics[width=\linewidth]{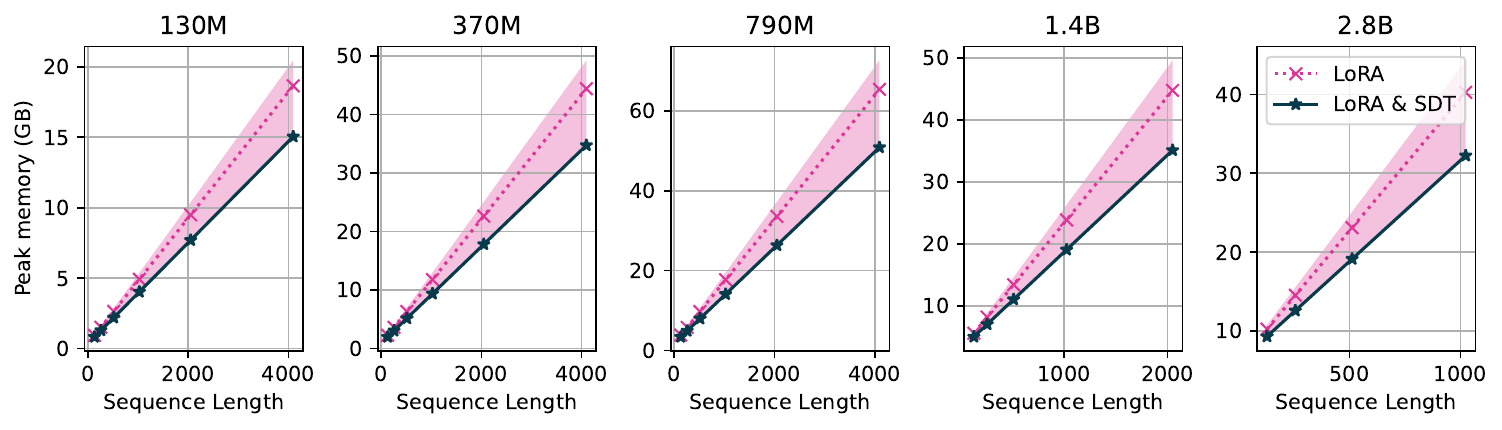}     
    \caption{\textbf{Peak memory usage during training as a function of context length for different Mamba model sizes.} Each line represents the mean across three configurations; shaded regions indicate min–max ranges.
    SDT is consistently more memory-efficient than LoRA when applying to SSM module.
    }
    \label{fig:train_mem}
\end{figure}

\paragraph{Runtime Analysis.}
We similarly analyze the latency of LoRA and \ours{}, using the same experimental setup as in Table~\ref{tab:app_memory}.
Fine-tuning with \ours{} consists of two stages: (1) dimension selection and (2) standard training. In this study, we first compare the runtime of \ours{} and LoRA during stage 2 (training) and then evaluate the additional runtime introduced by \ours{} during stage 1 (dimension selection). Our results show that the dimension selection stage adds only marginal runtime overhead, and \ours{} is more efficient than LoRA in standard training.

\textit{Training:} When the channels and states have been selected, the training of \ours{} is faster than LoRA when the same number of trainable parameters are considered.

The runtimes are reported in \cref{tab:app_run2}. We observe that, despite having similar trainable parameters, \ours{} is faster than LoRA. We attribute this to the fact that LoRA introduces additional FLOPs due to the extra matrix multiplication operations required for each update (specifically, the multiplication of two low-rank matrices).

\begin{table}[ht]
    \centering
    \resizebox{0.7\linewidth}{!}{

\begin{tabular}{c|c|c|c|c}
\toprule
Avg. Runtime (Seconds) & Mamba-130M & Mamba-1.4B & Jamba-Tiny-319M & Jamba-Mini-52B \\
\midrule
LoRA & 410.0 $\pm$ 80.0 & 2060.0 $\pm$ 135.0 & 352.5 $\pm$ 107.5 & 3427.5 $\pm$ 185.0 \\
LoRA \& \ours{} & \textbf{330.0} $\pm$ 77.5 & \textbf{1697.5} $\pm$ 87.5 & \textbf{257.5} $\pm$ 72.5 & \textbf{3065.0} $\pm$ 232.5 \\
\bottomrule
\end{tabular}
        }
\caption{\textbf{Runtime comparison of \ours{} and LoRA during stage 2 (training).}}
\label{tab:app_run2}
\end{table}

\textit{Dimension Selection:} For dimension selection, our method first performs Initial Subset Training, and then selects the dimensions based on the magnitude of parameter changes across different dimensions.

\begin{enumerate}[leftmargin=*]
\item \textit{Initial Subset Training:} We update the model by going through only a subset of the dataset (e.g., 3\% of batches in DART experiments), which is sufficient in practice.
\item \textit{Magnitude-Based Dimension Selection:} After the subset training, we select dimensions based on the magnitude of parameter changes observed.
\end{enumerate}

In this experiment, we simulate a real scenario using a dataset with 2,500 batches, considering a small subset containing 125 batches (5\% of the full dataset). We repeat the experiments 80 times, and the reported numbers are averaged across these simulations. \cref{tab:app_run1} demonstrates that the dimension selection stage adds only negligible runtime.

\begin{table}[ht]
    \centering
    \resizebox{\linewidth}{!}{

\begin{tabular}{c|c|c|c|c}
\toprule
Avg. Runtime (Seconds) & Mamba-130M & Mamba-1.4B & Jamba-Tiny-319M & Jamba-Mini-52B \\
\midrule
Initial Subset Training & 16.250 $\pm$ 3.880 & 85.250 $\pm$ 5.130 & 15.750 $\pm$ 1.000 & 163.630 $\pm$ 10.120 \\
Magnitude-Based Dimension Selection & 0.280 $\pm$ 0.000 & 0.520 $\pm$ 0.120 & 0.090 $\pm$ 0.000 & 0.240 $\pm$ 0.040 \\
Total Time & 16.530 $\pm$ 3.880 & 85.770 $\pm$ 5.250 & 15.840 $\pm$ 1.000 & 163.870 $\pm$ 10.160 \\
\midrule
Proportion of Training 1 Epoch & 0.050$\times$ & 0.051$\times$ & 0.062$\times$ & 0.053$\times$ \\
Proportion of Training 5 Epoch & \textbf{0.010}$\times$ & \textbf{0.010}$\times$ & \textbf{0.012}$\times$ & \textbf{0.011}$\times$ \\
\bottomrule
\end{tabular}
        }
\caption{\textbf{Runtime comparison of \ours{} and LoRA during stage 1 (dimension selection).}}
\label{tab:app_run1}
\end{table}

We further examine how runtime varies with sequence length, using the same experimental setup as in the memory analysis (\cref{fig:train_mem}). 
Our results in \cref{fig:train_time} show that SDT consistently outperforms LoRA in training speed when applied to SSM modules.

\begin{figure}[ht]
    \centering
    \includegraphics[width=\linewidth]{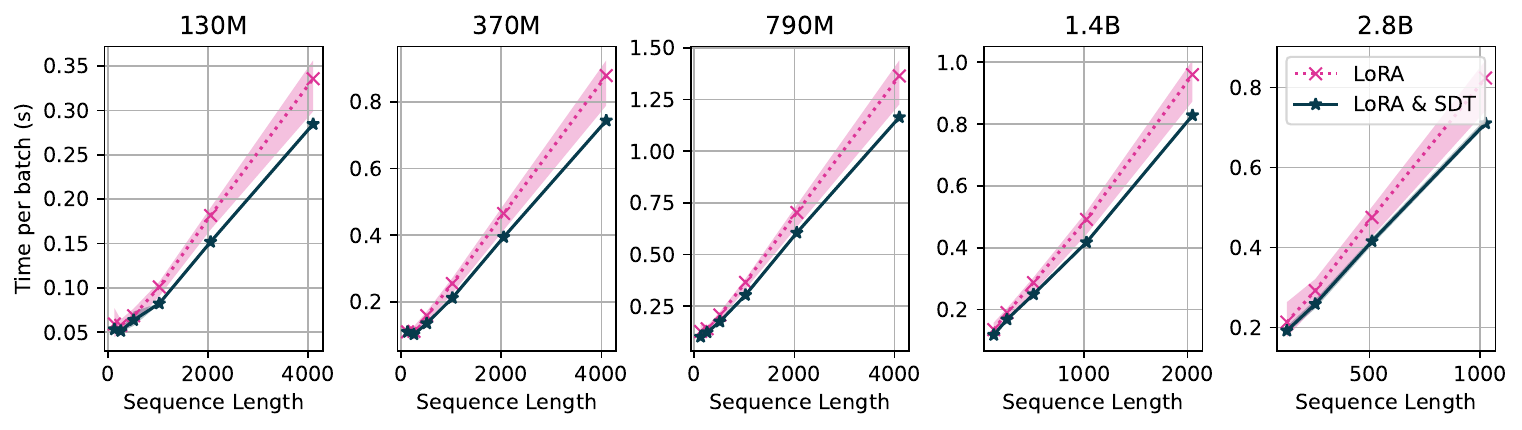}
    \caption{
        \textbf{Average training time per batch across different sequence lengths and Mamba model sizes.}
        Each line represents the mean across three configurations; shaded regions indicate min–max ranges.
        SDT is consistently faster than LoRA when applying to SSM module.
    }
    \label{fig:train_time}
\end{figure}

\section{Expanded Sec.~\ref{sec:exp}: Evaluation of \ours{}}
\subsection{Experiments on Deep S4 Models}\label{app:s4_exp}
\paragraph{Synthetic.}
For selecting channels and hidden states, we initiate with a warmup learning rate between $1e-2$ and $1e-3$ and conduct 20 warmup iterations. Learning rates are adjusted between $5e-2$, $1e-2$, $5e-3$, and $1e-3$. We apply LoRA with ranks of 2 and 4 to the SSM and with ranks of 4, 8, and 16 to the linear projection matrices. Non-zero states are selected from the sets \{4, 8\}, and non-zero channels from \set{8, 16}.

In addition, we compare the convergence speed of LoRA and SDT in terms of training loss for sequence lengths in $\set{100, 500, 1000}$. 
We plot the MSE of both methods against wall-clock time. 
As shown in \cref{app:fig:lora_sdt_convergence_speed}, SDT consistently converges to a lower loss faster than LoRA across all tested sequence lengths.

\begin{figure}[ht]
    \centering
    \includegraphics[width=.6\linewidth]{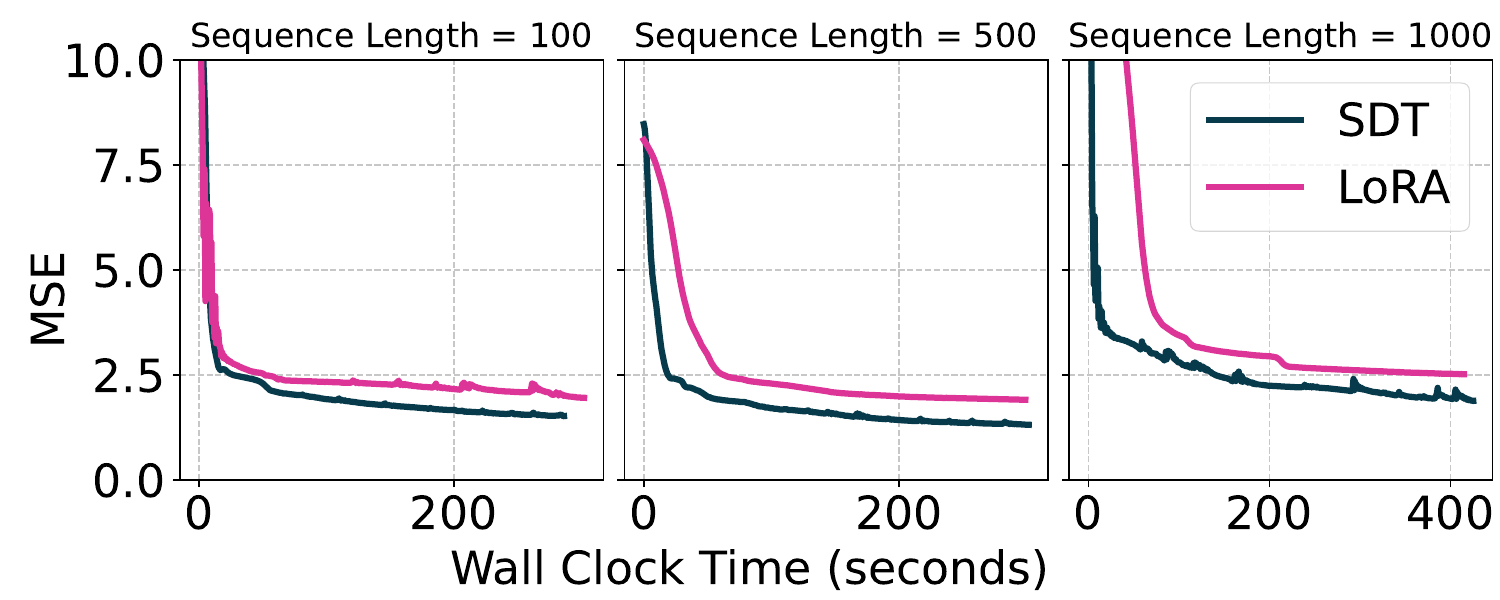}
    \caption{
    \textbf{Comparison of \ours{} and LoRA for tuning S4 in deep S4 models, where LoRA is applied to linear projection matrices.}
    Results are shown across varying sequence lengths under different time budgets in synthetic experiments.
    }
    \label{app:fig:lora_sdt_convergence_speed}
\end{figure}

\paragraph{CIFAR-10~\citep{krizhevsky2009learning}.}
Previous work~\citep{dinh2022lift} demonstrates that large language models can be fine-tuned for image classification tasks. 
In this study, we investigate the adaptation of SSMs for computer vision, focusing on experiments conducted with the CIFAR-10 dataset~\citep{krizhevsky2009learning}.
We employ an eight-layer deep S4 model with a hidden state dimension of 16 and a model dimension of 64. 
Since pretrained deep S4 models are not available, we simulate a pretrained scenario by fully updating the model for 50 epochs first, then subsequently evaluating the PEFT methods over an additional 5 epochs.
We adhere to the preprocessing steps for CIFAR-10 as outlined by \citet{gu2022s4d}. The LoRA ranks for linear projection matrices are tuned among \set{1, 2, 4, 8, 16}, and for the S4 component, ranks are set from \set{1, 2, 4}. Non-zero states are chosen from \set{8, 12, 16}, and non-zero channels from \set{48, 64}. 
The warmup phase includes 1 epoch with a learning rate of $1e-2$.
For linear projection matrices, LoRA ranks are explored at \set{2, 4, 8, 16}, and for the SSM, ranks at \set{2, 4, 8}. 
All state dimensions are updated, and channel dimensions considered for updates are \set{4, 8, 16, 32}.
The results, as reported in Table~\ref{tab:s4_cifar}, indicate that \ours{} outperforms LoRA with fewer trainable parameters.

\begin{table}
\centering
    \begin{tabular}{lr|c}\toprule
         \textbf{Method} &  \textbf{\# Params (\%)}&  \textbf{Accuracy}  \\ \midrule
         Frozen & 0.00 &  73.9 \\ \midrule
         LoRA (Proj) & 16.00 & 77.6 \\ 
         LoRA (S4+Proj) & 15.52 & 77.6 \\ 
         LoRA \& \ours{} & 11.17 & \textbf{78.0} \\ \midrule
         Full Fine-Tuning & 100.00 & 77.6 \\ \bottomrule
    \end{tabular}
    \caption{\textbf{Accuracy comparison between \ours{} and LoRA on deep S4 models for CIFAR-10~\citep{krizhevsky2009learning}.}}
    \label{tab:s4_cifar}
\end{table}

\subsection{Experiments on Mamba-II, Jamba, and LoRA$+$}\label{app:pretrained_exp_ours}

\paragraph{Additional Experimental Details.}
In this paragraph, we provide further experimental details. 
Unless otherwise stated, our experiment setting is identical to Sec.~\ref{app:exp_setup}.
For LoRA, we consider three different LoRA configurations at each layer, targeting the primary parameters of Mamba. Specifically, we focus on the following matrices: $\Wout$ (output linear projection), $\Wb, \Wc$ (weight matrices for computing input-dependent $\B_n, \C_n$), and $\Wdtd, \Wdtu$ (down and up projection matrices of LoRA adapters for computing $\dt$). 
The three LoRA application methods are: (i) $\Wout$, $\Wb, \Wc$, and $\Wdtd, \Wdtu$; (ii) $\Wout, \Wb, \Wc$ and $\Wdtd$; and (iii) $\Wout$ and $\Wdtu$. 
For \ours{}, we set the channel freeze ratio at 99\% across all scenarios. We select the state freeze ratio $\alpha$ from the set $\{75\%, 90\%, 95\%\}$ and apply LoRA exclusively to $\Wout$ to maintain a comparable number of trainable parameters. 
Residual connections and biases are frozen in this experiment.
For the warmup, we employ 500 data batches to fully train the SSM modules prior to dimension selection, except for the RTE task in GLUE, where we use 250 batches due to its limited dataset size.
Note that the parameters are reverted back after the warmup stage.

\paragraph{Additional Results on Mamba-II.}

For Mamba-II, applying SDT is not straightforward because Mamba-II further constrains $A$ such that all (non-zero) entries must have the same value. Therefore, our original dimension selection approach cannot be directly applied here. We consider a naive extension of SDT by selecting dimensions in the projection matrices for input mapping vector $B$ and the projection matrices for output mapping vector $C$ using their respective magnitude, and fine-tune the selected dimensions and all elements of state transition matrix $A$. 

\cref{tab:app_sdlora_mamba2,tab:app_mamba2_glue} compare the performance on Mamba-II. The results demonstrate that SDT consistently outperforms LoRA on Mamba-II models.

\begin{table}[ht]
    \centering
    \resizebox{0.9\linewidth}{!}{
\begin{tabular}{l|c|cc|c|ccc|c}
\toprule
\textbf{Model} & \multicolumn{3}{c|}{\textbf{Mamba-II-130M}} & \multicolumn{5}{c}{\textbf{Mamba-II-1.3B}} \\ \midrule
Dataset & \multirow{2}{*}{Params (\%)} & \multicolumn{2}{c|}{\textbf{DART}} & \multirow{2}{*}{Params (\%)} & \multicolumn{3}{c|}{\textbf{SAMSum}} & \textbf{Spider} \\
Metric ($\uparrow$) & & METEOR & BLEU & & R1 & R2 & RL & Acc. \\ \midrule
LoRA & 0.3354 & 68.71 & 48.09 & 0.1614 & 49.73 & 26.14 & 41.53 & 72.36\\
LoRA \& SDT & 0.3393 & \textbf{70.60} & \textbf{48.93} & 0.1767 & \textbf{50.72} & \textbf{27.21} & \textbf{42.54} & \textbf{84.15}\\
\bottomrule
\end{tabular}
    }
\caption{\textbf{Performance comparison between SDT and LoRA on Mamba-II-130M and Mamba-II-1.3B.} \textbf{Bold} numbers indicate the best performance for each task.}
\label{tab:app_sdlora_mamba2}
\end{table}

\begin{table}[ht]
    \centering
    \resizebox{0.7\linewidth}{!}{

\begin{tabular}{l|c|cccccc}
\toprule
\textbf{Model} & \multicolumn{7}{c}{\textbf{Mamba-II-130M}}\\ \midrule
Dataset & \multirow{2}{*}{Params (\%)} & \multicolumn{6}{c}{\textbf{GLUE}} \\
Accuracy ($\uparrow$) & & RTE & MRPC & SST2 & QNLI & QQP & MNLI\\ \midrule
\multirow{1}{*}{LoRA} & 0.3354 & 63.4 & 80.9 &  89.1 & 85.3 & 87.1 & 78.6\\
\midrule
\multirow{1}{*}{LoRA \& SDT} & 0.3393 & \textbf{64.3} & \textbf{82.3} & \textbf{94.1} & \textbf{87.0} & \textbf{88.3} & \textbf{81.1}\\
\bottomrule
\end{tabular}
    }
\caption{\textbf{Performance comparison between \ours{} and LoRA on the GLUE \citep{wang2018glue} benchmark using Mamba-II-130M.} 
    \textbf{Bold} numbers indicate the best performance for each task.}
\label{tab:app_mamba2_glue}
\end{table}

\paragraph{Additional Results on Jamba.}
\cref{tab:add_ours_pretrained_jamba} shows results for \ours{} and LoRA on additional datasets. Even though the performance improvement is smaller, our method outperforms pure LoRA in most cases. Mamba layers make up only a small part of Jamba, which is a possible reason for smaller performance gains.

\begin{table}[htbp]
        \centering
        \resizebox{0.8\linewidth}{!}{
\begin{tabular}{lccccccccc}
\toprule
\multirow{2}{*}{\textbf{LinProj}} & \multirow{2}{*}{\textbf{S6}} 
 & \textbf{GLUE} 
 & \multicolumn{2}{c}{\textbf{DART}}
 & \textbf{CelebA} 
 & \multicolumn{3}{c}{\textbf{SAMSum}} 
 & \textbf{Spider} \\
 & & Avg. & BLEU & MET. & Acc. & R1 & R2 & RL & Acc. \\
\midrule
\multirow{2.5}{*}{LoRA} & \multirow{1}{*}{LoRA}
& 65.5 & 52.9 & \textbf{73.0} & \textbf{88.5} & 56.4 & \textbf{33.5} & 47.9 & \textbf{90.7} \\
\cmidrule{2-10}
& \multirow{1}{*}{\ours{}} 
& \textbf{67.7} & \textbf{53.1} & \textbf{73.0} & 88.4 & \textbf{56.5} & \textbf{33.5} & \textbf{48.0} & 89.8 \\
\bottomrule
\end{tabular}
        }
\captionsetup{skip=8pt}
    \caption{
    \textbf{Performance comparison between \ours{} and LoRA on pretrained Jamba models.} 
    \textbf{Bold} numbers indicate the best performance for each task. 
We use Jamba-Tiny-319M to compare the performance of \ours{} and LoRA on GLUE~\citep{wang2018glue}, and CelebA~\citep{liu2015deep} benchmarks. 
For all other datasets, we employ Jamba-Mini-52B.
We report only the best setting out of three for each method.
    }
    \label{tab:add_ours_pretrained_jamba}
\end{table}

\paragraph{Additional Results for LoRA+.}
We extend our investigation to include LoRA+~\citep{hayou2024lora+} with SDT and evaluate its performance against LoRA+ across various datasets on both Mamba-I and Mamba-II. The results, presented in \cref{tab:app_sdloraplus}, show that integrating \ours{} with LoRA+ enhances its effectiveness and achieves superior performance compared to using LoRA+ alone.
\begin{table}[ht]
    \centering
    \resizebox{0.8\linewidth}{!}{

\begin{tabular}{l|cc|cc|ccc|c}
\toprule
\textbf{Model} & \multicolumn{2}{c|}{\textbf{Mamba-I-130M}} & \multicolumn{2}{c|}{\textbf{Mamba-II-130M}} & \multicolumn{4}{c}{\textbf{Mamba-II-1.3B}} \\ \midrule
Dataset & \multicolumn{2}{c|}{\textbf{DART}} & \multicolumn{2}{c|}{\textbf{DART}} & \multicolumn{3}{c|}{\textbf{SAMSum}} & \multicolumn{1}{c}{\textbf{Spider}} \\
Metric ($\uparrow$) & METEOR & BLEU & METEOR & BLEU & R1 & R2 & RL & Acc. \\ \midrule
\multirow{1}{*}{LoRA+} &70.06&50.91&69.78&49.14&49.83&26.09&41.66& 73.75 \\
\midrule
\multirow{1}{*}{LoRA+ \& SDT} &\textbf{70.58}&\textbf{51.93}&\textbf{70.48}&\textbf{49.99}&\textbf{50.81}&\textbf{27.19}&\textbf{42.4}& \textbf{84.22}\\
\bottomrule
\end{tabular}

    }
\caption{\textbf{Performance comparison between LoRA+ and SDT on Mamba-I and Mamba-II.} \textbf{Bold} numbers indicate the best performance for each task. We test all experiments under various parameter settings (<0.4\%) for both LoRA+ and SDT, and report the best values.}
\label{tab:app_sdloraplus}
\end{table}

\end{document}